\DeclareMathAlphabet{\pazocal}{OMS}{zplm}{m}{n}
\DeclareMathOperator*{\argmax}{arg\,max}
\let\oldReturn\Return
\renewcommand{\Return}{\State\oldReturn}
\pgfplotsset{compat=1.5}
\newtheorem{theorem}{Theorem}
\newtheorem{lemma}{Lemma}
\newtheorem{prop}{Proposition}
\newtheorem{assume}{Assumption}
\newtheorem{constraint}{Constraint}
\newtheorem*{remark}{Remark}
\renewcommand{\footnotesize}{\scriptsize}
\DeclareMathAlphabet{\pazocal}{OMS}{zplm}{m}{n}
\renewcommand{\bar}[1]{\mskip.5\thinmuskip\overline{\mskip-.5\thinmuskip {#1} \mskip-.5\thinmuskip}\mskip.5\thinmuskip} 
\newcommand{\norm}[1]{\left\lVert#1\right\rVert}
\newcommand{\I}{\pazocal{I}}
\newcommand{\X}{\pazocal{X}}
\newcommand{\Z}{\pazocal{Z}}
\newcommand{\F}{\pazocal{F}}
\newcommand{\bR}{\mathbb{R}}
\newcommand{\bE}{\mathbb{E}}
\newcommand{\bN}{\mathbb{N}}
\newcommand{\ti}{{t, i }}
\newcommand{\bti}{{b_t, i }}
\newcommand{\taui}{{\tau, i }}
\newcommand{\R}{\pazocal{R}}
\newcommand{\xs}{x_{\ti}^{\star}}
\newcommand{\xsb}{\bar{x}_{\ti}^{\star}}
\newcommand{\A}{{\pazocal{A}}}
\newcommand{\C}{{\pazocal{C}}}
\newcommand{\B}{{\pazocal{B}}}
\newcommand{\mP}{{\mathbb{ P}}}
\renewcommand{\P}{{\pazocal{ P}}}
\newcommand{\pixc}{\phi_i(x_{\ti}, c_t)}
\newcommand{\piixc}{\phi_i(x, c_t)}
\newcommand{\pixsc}{\phi_i(\xs, c_t)}
\newcommand{\pixsbc}{\phi_i(\xsb, c_t)}
\newcommand{\pixbc}{\phi_i(\xb, c_t)}
\newcommand{\pixbic}{\phi_i(x_{\bti}, c_t)}
\newcommand{\psixc}{\psi_i(x_{\ti}, \mu_t)}
\newcommand{\psiixc}{\psi_i(x, \mu_t)}
\newcommand{\psixsc}{\psi_i(\xs, \mu_t)}
\newcommand{\psixsbc}{\psi_i(\xsb, \mu_t)}
\newcommand{\psixbc}{\psi_i(\xb, \mu_t)}
\newcommand{\psixbic}{\psi_i(x_{\bti}, \mu_t)}
\newcommand{\hth}{\hat{\theta}}
\newcommand{\ths}{\theta^\star}
\renewcommand{\det}{{\mathrm{det}}}
\newcommand{\rbti}{{r_{\bti}}}
\newcommand{\rbtaui}{{r_{l}}}
\newcommand{\isyn}{{\mathrm{syn}}}
\newcommand{\inew}{{t, i}}
\newcommand{\xb}{\bar{x}}
\newcommand{\set}{\Xi}
\newcommand{\trace}{{\mathrm{trace}}}
\newcommand\scalemath[2]{\scalebox{#1}{\mbox{\ensuremath{\displaystyle #2}}}}
\title{\LARGE \bf Distributed Multi-Task Learning for Stochastic Bandits with Context Distribution and Stage-wise Constraints}
\author{Jiabin Lin and Shana Moothedath
\thanks{J. Lin and S. Moothedath are with the Department of Electrical and Computer Engineering, Iowa State University, USA. Email: $\lbrace$jiabin, mshana $\rbrace$@iastate.edu.}
}
\begin{document}
\pagenumbering{arabic}
\maketitle

\begin{abstract}
We present conservative multi-task learning in stochastic linear contextual bandits with {\em heterogeneous} agents. This extends conservative linear bandits to a distributed setting where $M$ agents tackle {\em different but related} tasks while adhering to stage-wise performance constraints.
The exact context is {\em unknown}, and only a context distribution is available to the agents as in many practical applications that involve a prediction mechanism to infer context, such as stock market prediction and weather forecast. 
We propose a distributed upper confidence bound (UCB)  algorithm, DiSC-UCB.
Our algorithm dynamically constructs a pruned action set for each task in every round, guaranteeing compliance with the constraints.
Additionally, it includes synchronized sharing of estimates among agents via a central server using well-structured synchronization steps.
For $d$-dimensional linear bandits, we prove an $\widetilde{O}(d\sqrt{MT})$ regret bound and  an $O(M^{1.5}d^3)$ communication bound on the algorithm.
We extend the problem to a setting where the agents are unaware of the baseline reward. We provide a modified algorithm, DiSC-UCB-UB, and show that it achieves the same regret and communication bounds.
We empirically validated the performance of our algorithm on synthetic data and real-world Movielens-100K and LastFM data and also compared it with some existing benchmark algorithms.
\end{abstract}
\begin{IEEEkeywords}
Distributed learning,  online learning, multi-arm bandits, constrained contextual bandits
\end{IEEEkeywords}
\vspace{-2 mm}
\section{Introduction}
In Contextual Bandits (CB), an agent engages in a series of interactions with an environment over multiple rounds. At the start of each round, the environment presents a context, and in response, the agent selects an action that yields a reward. The agent's objective is to choose actions to maximize cumulative reward over a time horizon of $T$. 
CB algorithms find applications in various fields, including robotics, clinical trials, communications, and recommender systems. This paper extends the standard CB problem in three ways.

First, the standard CB model assumes precise context observation, which does not always hold in real-world applications. For instance, contexts can be noisy measurements or predictions, like weather forecasting or stock market analysis \cite{kirschner2019stochastic}. In recommender systems, privacy constraints might limit access to certain user features, but we can infer a distribution over these features \cite{lamprier2018profile}. To address context uncertainty, motivated by the prior work \cite{kirschner2019stochastic}, we study a scenario where the environment provides a context distribution. The exact context is treated as a sample from this distribution and is hidden.
Second, given the increasing demand for safe learning in various real-world systems, particularly those with safety-critical applications, this paper delves into the impact of stage-wise safety constraints on the linear stochastic bandit problem. Drawing inspiration from prior works \cite{kazerouni2017conservative}, \cite{wu2016conservative, yang2021robust}, our approach builds upon the safety constraint introduced by \cite{khezeli2020safe} and later studied in \cite{moradipari2020stage}.
 In our scenario, the agent has a baseline policy suggesting actions with guaranteed expected rewards derived from historical data or legacy policies. We enforce a safety constraint that requires the agent's chosen actions to yield expected rewards no less than a predetermined fraction of those recommended by the baseline policy. 
%
Third, multi-task learning enables models to simultaneously tackle multiple related tasks, leveraging common patterns and improving overall performance \cite{zhang2018overview, wang2016distributed, lin2023distributed1, lin2023distributed2, lin2023federated}. By sharing knowledge across tasks, multi-task learning can lead to more efficient and effective models, especially when data is limited or expensive to acquire. Multi-task bandit learning has gained interest recently \cite{deshmukh2017multi, fang2015active, cella2023multi, hu2021near, yang2020impact, cella2021multi, OurICML, lu2022provable}.

In this paper, we consider heterogeneous multi-task linear stochastic CB problem with hidden contexts and stage-wise performance constraints. A set of $M$ agents collaborate to solve related but different tasks jointly; while the exact contexts are hidden, only a context distribution is known to the agents, and the agents are subject to performance constraints at every decision round.
 The problem addressed in this work applies to various bandit learning scenarios, such as recommender systems and personalized medical treatments, where tasks are related but not identical. While movies and TV shows share features, some may be irrelevant across domains. Additionally, context (e.g., user or patient data) can be noisy due to profile errors or shared accounts. Often, a baseline policy guides recommendations based on past data, but platforms may impose constraints, ensuring daily viewership remains above a threshold to maintain revenue stability.
These applications can significantly benefit from our approach, as demonstrated in our empirical analysis in Section~\ref{sec:sim}.

\subsection{Our Contributions}  
The paper makes four key contributions.
\begin{enumerate}
    \item We formulate the constrained multi-task contextual bandit problem with hidden contexts and propose a distributed UCB algorithm. 
    A key aspect of our approach is the construction of a safe action set at each learning round to filter out actions that fail to meet performance constraints, which is rather challenging since the contexts are unknown. The unsafe action elimination method in existing work \cite{moradipari2020stage}, which assumes known contexts, is not directly applicable to our setting with unknown contexts.
Lemma~\ref{L2} proves that our safe action set meets the performance constraints, while Lemma~\ref{L4} proves that the optimal action in each round remains within the pruned set and is not eliminated.
 \item We show in Theorem~\ref{prop1} that the regret can be decomposed into three terms:
(i) an upper bound for the regret of the distributed linear UCB algorithm,
 (ii) a term that captures the loss since the contexts are unknown and
(iii) a term that accounts for the loss for being conservative to satisfy the performance constraint.
A key aspect of our analysis is bounding the number of rounds the learner's actions ($N_T$) and the conservative actions ($N^c_T$) are played, which we establish in Theorem~\ref{L7}.

 In Theorem~\ref{T1}, we provide an $\widetilde{O}(d\sqrt{MT})$ regret bound and an $O(M^{1.5}d^3)$ communication bound on the algorithm, where $d$ is the dimension of the feature vector. Our proposed approach significantly outperforms the naive method of solving the $M$ tasks independently, which results in a regret of $\widetilde{O}(dM\sqrt{T})$ \cite{moradipari2020stage, abbasi2011improved}, thereby demonstrating the effectiveness of multi-task learning.
\item We extend the problem to a setting where the agents are unaware of the baseline reward value. We show that the algorithm can be modified slightly to address this case, and regret and communication bounds remain the same.
\item We conducted numerical experiments using both synthetic and real-world datasets (MovieLens and LastFM) to validate our algorithms. Our methods consistently outperformed benchmark algorithms across all evaluations.
\end{enumerate}

\vspace{-1 mm}
\subsection{Related Work}
\noindent{\bf Multi-task bandit learning:} Collaborative bandit learning has been explored in both homogeneous and heterogeneous settings \cite{lin2023distributed1, lin2023distributed2, lin2023federated, Jiabin_Shana_ACC, wang2019distributed,huang2021federated, korda2016distributed, zhu2023byzantine}. In the homogeneous setting, all tasks are identical, whereas in the heterogeneous setting, tasks are distinct yet share underlying similarities.
In the homogeneous setting, distinct agents solve a common task collaboratively using a shared reward parameter $\ths$ \cite{lin2023distributed1, lin2023distributed2, wang2019distributed, korda2016distributed, zhu2025decentralized}. Our work, on the contrary, introduces a multi-task bandit learning in which heterogeneous agents perform distinct tasks governed by heterogeneous reward parameters $\Theta=\{\theta_i^{\star}\}_{i\in [M]}$. 
This model finds relevance in various real-world scenarios where agents, possessing {\em distinct yet interrelated objectives}, function within a collaborative setting. 
\cite{lin2023federated, Jiabin_Shana_ACC, huang2021federated}
considered tasks with unique feature vectors.
However, their scenario treats each agent as a user/context, resulting in a time-invariant situation. In contrast, our work addresses a time-varying case.
 Another related line of work is multi-task representation learning of bandits \cite{deshmukh2017multi, fang2015active, cella2023multi, hu2021near, yang2020impact, OurICML, lu2022provable} that focuses on learning a shared low-dimensional representation between the tasks. Bandit problems with sparsity constraints are studied in \cite{cella2021multi, hao2020high}.
None of these works consider constrained learning or unobserved contexts.
Contextual bandits with partial observations have been studied in \cite{kirschner2019stochastic,lin2023distributed1, lin2023distributed2}. A related line of work is partial monitoring, where the rewards for the chosen actions are unobserved \cite{lin2014combinatorial,kirschner2020information, kirschner2023linear}. Instead, the learner receives indirect feedback that correlates with the rewards.

\noindent{\bf Constrained bandits:} Constrained bandits have been well studied, including \cite{amani2019linear,mansour2015bayesian, katariya2019conservative, sui2015safe, varma2023stochastic, lin2022stochastic} under various modeling assumptions. The two primary types of constraints are: (i) budget constraints, where each arm incurs a random resource consumption, and (ii) safety/performance constraints, which ensure that the reward in each round meets or exceeds a specified fraction of a baseline performance.

\cite{kazerouni2017conservative, wu2016conservative,lin2022stochastic} considered a cumulative performance constraint, which ensures that the total reward over the learning horizon meets or exceeds a specified threshold, in contrast to stage-wise constraints that must be satisfied in every round.
 \cite{amani2019linear} considered linear constraint of the form  $x^\top B \ths \leqslant C$, where $B, C> 0$ are a known matrix and a positive constant, respectively.
The most related works are \cite{khezeli2020safe, moradipari2020stage}, which also considered stage-wise performance constraints, however, in a single task setting with observable contexts.
 In the multi-task setting considered in our paper, each task is associated with a distinct constraint, resulting in $M$ constraints.
 Constrained Markov Decision Process (CMDP) extends traditional MDPs by incorporating a cost function to ensure compliance with constraints \cite{altman2021constrained}. The safety specifications in the majority of the CMDP methods are expected discounted cumulative costs, and the goal is to maximize the expected performance while minimizing the cost incurred from constraints.
Constrained restless bandits, as studied by \cite{kaza2019constrained} addresses scenarios where resource availability varies over time, necessitating adaptive strategies to optimize decision-making under dynamic constraints.
%
%
\section{Stochastic Linear Contextual Bandits}\label{sec:back}
In this section, we will first introduce the standard linear bandit, then the stochastic stage-wise constrained setting, and finally, the distributed stochastic stage-wise constrained setting. Let $\A$ be the action set and $\C$ be the context set. \\

\noindent{\bf Stochastic linear CBs:}
In linear bandits, at round $t \in \bN$, the agent observes a context $c_t \in \C$ and selects an action $x_t \in \A$. The context-action pair $(x_t, c_t)$ is mapped onto a feature vector $\phi_{x_t, c_t} \in \bR^d$. Based on this feature vector $\phi_{x_t, c_t}$, the agent receives a reward $y_t \in \bR$ from the environment, where $y_t = \phi_{x_t, c_t}^{\top} \ths + \eta_t$, with $\ths$ representing an unknown reward parameter, and $\eta_t$ is a $\sigma-$Gaussian noise with zero mean. Thus the expected reward $r(x_t, c_t) = \bE [y_t]$. The goal of the linear bandit problem is to maximize the cumulative reward or, equivalently, minimizes the cumulative (pseudo) regret \cite{abbasi2011improved}, as
\begin{align}
\min \R_T = \sum_{t = 1}^T \phi_{x_t^{\star}, c_t}^{\top} \ths - \sum_{t = 1}^T \phi_{x_t, c_t}^{\top} \ths. 
\end{align}
Here $x_t^{\star}$ is the optimal/best action for context $c_t$, and $x_t$ is the action chosen by the agent for context $c_t$. 
\cite{abbasi2011improved, dani2008stochastic} showed that the UCB policy achieves sublinear regret $\tilde{O}(d\sqrt{T})$.
\vspace{2 mm}

\noindent{\bf Stochastic linear CBs with stage-wise constraints:}
Here, the agent is provided with a baseline policy. The baseline action $x_{b_t}$ has an expected reward $r_{b_t} = \phi_{x_{b_t}, c_t}^{\top} \ths$. The agent's actions are subject to the stage-wise conservative constraint $r_t = \phi_{x_t, c_t}^{\top} \ths \geqslant (1 - \alpha) r_{b_t}$, where $\alpha\in [0,1]$, such that the agent's action will be chosen only if it satisfies the constraint, otherwise, the baseline action will be performed \cite{khezeli2020safe, moradipari2020stage}. This guarantees that the expected reward for the agent at any round $t$ remains at least a pre-defined fraction $(1 - \alpha)$ of the baseline reward. The goal is to maximize the cumulative reward while satisfying the stage-wise constraint. Formally, minimize the cumulative (pseudo) regret while meeting the constraint
\begin{align}
\min \R_T \mbox{~such~that~}\phi_{x_t, c_t}^{\top} \ths \geqslant (1 - \alpha) r_{b_t}, \mathrm{~for~all}~ t \in [T].
\end{align}

For this setting, \cite{khezeli2020safe} presented an algorithm,  referred to as SEGE with regret bound $\widetilde{O}(d\sqrt{T})$  when the actions are constrained to be from an ellipsoid, and  \cite{moradipari2020stage} presented a UCB algorithm, referred to as SCLUCB, and a Thompson sampling algorithm, referred to as SCLTS, with regret bounds $\widetilde{O}(d\sqrt{T})$.

%
\section{Problem Formulation and Notation}

\noindent{\bf Notations:}
The norm of a vector $z \in \bR^d$ with respect to a matrix $V \in \bR^{d \times d}$ is defined as $\|z \|_{V} : = \sqrt{z^\top V z}$. Further, $\top$ denotes matrix or vector transpose. $\F_t = (\F_1, \sigma (x_1, \eta_1, \cdots x_t, \eta_t))$ be the filtration ($\sigma-$algebra) that represents the information up to round $t$. For an integer $Z$, we denote $[Z]=\{1,2,\ldots, Z\}$.\\

\noindent{\bf Problem Formulation:}
In this work, we study multi-task stochastic linear CBs with stage-wise constraints and context distribution. We consider a set of $M$ heterogeneous agents performing {\em different but related tasks} such that their reward parameters $\theta_i^{\star} \in \bR^{d_i}$, for $i\in [M]$, satisfy a sparse structural constraint. Our problem models a joint multi-tasking bandit problem, where the set $\Theta = \{\theta_1^{\star}, \theta_2^{\star}, \cdots, \theta_M^{\star}\}$ satisfies the sparse structural constraints described below. 
\begin{constraint} \label{c1} (Sparse structural constraint on $\Theta$)
 We define an index set for tasks $\I=\{I_1, I_2, \ldots, I_M\}$,  where $|I_i|=d_i$.
For $i \in \{1,\ldots, M\}$, $I_i$ is the index set of features that are relevant to agent $i$.
Without loss of generality, we assume $d_1 \geqslant d_2 \geqslant \cdots \geqslant d_M$. Thus, $I_i = \{k \in \{1, 2, \cdots, d_1\}: k^{\rm th} \mbox{~feature~of~}\theta_{1}^{\star} \in \theta_i^{\star}\}$, where $\theta_i^{\star} := \theta_{1}^{\star}|_{I_i}$. 
\end{constraint}

Further, we consider that at round $t$, the context $c_t$ is unobservable and only a distribution of the context denoted as the agents observe $\mu_t$. At round $t$, the environment chooses a distribution $\mu_t \in \P(\C)$ over the context set and samples a context realization $c_t \sim \mu_t$. The agents observe only $\mu_t$ and not $c_t$ and each agent selects an action, say action chosen by agent $i$ is $x_{\ti}$, and receive reward $y_{\ti}$, where $y_{\ti} = \phi_{x_{\ti}, c_t}^{\top} \theta_i^{\star} + \eta_{\ti}$. Moreover, each agent possesses a baseline policy, $\pi_{b_t, i}$,  shaped by their past experiences and domain expertise. This baseline policy guides the derivation of a baseline action $x_{\bti}$ aligned with the specific context $c_t$. This interplays with the performance constraint, wherein each agent 
$i$ is bound to select an action meeting the condition defined $\pi_{b_t, i}$ as described below. 
\begin{constraint} \label{c2} (Performance constraint on agent $i$ w.r.t $\pi_{b_t, i}$)
Given a context $c_t$ and the baseline policy  $\pi_{b_t, i}$,  agent $i$ can select an action $x_{\ti}$ only if $\phi_{x_{\ti}, c_t}^{\top} \theta_i^{\star} \geqslant (1 - \alpha) \rbti$, where $\rbti$ is the baseline action derived from $\pi_{b_t, i}$ for $c_t$.
\end{constraint}
 Our aim is to maximize the cumulative reward, $\sum_{i = 1}^M \sum_{t = 1}^T y_{\ti}$, while simultaneously satisfying  constraints~\ref{c1} and~\ref{c2}. Formally, our aim is to minimize the cumulative regret
\begin{align}
\hspace{-1 mm}\scalemath{0.9}{\R_T = \sum_{i = 1}^M \sum_{t = 1}^T (\phi_{\xs, c_t} -  \phi_{x_{\ti}, c_t})^{\top} \theta_i^{\star}}\mbox{~s.t~constraints}  \ref{c1}~\&~\ref{c2}\mathrm{~hold.}
\end{align}
Here $\xs= \argmax_{x_{\ti} \in \A} \bE_{c \sim \mu_t} [r_{x_{\ti}, c}]$ is the best action provided we know $\mu_t$, but not $c_t$, and $T$ is the total number of rounds, and $(1 - \alpha) \in (0, 1)$ is the maximum fraction of loss in the performance compared to the baseline policy the decision maker is willing to accept during learning. Let $\kappa_{\bti} = \phi_{\xs, c_t}^{\top} \theta_i^{\star} - \rbti$ be the difference between the expected reward of the optimal action $\xs$ and the baseline action $x_{\bti}$ at round $t$. 

To solve the multi-task problem, we first transform the problem as follows.
We transform the distributed multi-task problem with feature vector $\phi_{x_{\ti}, c_t} \in \bR^{d_i}$ and heterogeneous reward parameters $\theta_i^{\star} \in \bR^{d_i}$ for each agent into a distributed linear bandit model with the heterogeneous feature vectors $\pixc \in \bR^d$ and shared reward parameter $\ths$.
We perform the mapping of the feature vector $\phi_{x_{\ti}, c_t} \in \bR^{d_i}$ to a new feature vector $\pixc \in \bR^{d_1}$. Henceforth for notational brevity, we define $d:=d_1$ after dropping the subscript. This is achieved by retaining the features corresponding to the index set $I_i$ and setting the value to zero for features not in the index set $I_i$ resulting a $d$-dimensional feature vector $\pixc$. We elaborate on this process in the example given in Section~\ref{sec:rel}. Henceforth, our analysis use heterogeneous features and shared $\ths$.
%
\begin{assume}
Each element $\eta_{\ti}$ of the noise sequence $\{\eta_{\ti}\}_{t=1, i=1}^{\infty, M}$ is conditionally $\sigma-$subGaussian, i.e.,
$\mathbb{E}[e^{\lambda \eta_{\ti}}|\F_{t-1}] \geqslant exp(\dfrac{\lambda^2 \sigma^2}{2})$, for all $\lambda \in \bR$.
\end{assume}
\begin{assume}
For simplicity, we assume  $\phi_{x_{\ti}, c_t}^{\top} \theta_i^{\star} \in [0,1]$, $\norm{\phi_{x_{\ti},c_t}}_2 \leqslant 1$, and $\norm{\theta_i^\star}_2 \leqslant 1$, for all $i \in[M]$ and all $x \in \A$. 
\end{assume}
\begin{assume}\label{assume:bound}
There exist constants $\kappa_l \geqslant \kappa_h \geqslant 0$, $r_h \geqslant r_l \geqslant 0$ such that at each round $t$, $\kappa_l \leqslant \kappa_{\bti} \leqslant \kappa_h$, $r_l \leqslant \rbti \leqslant r_h$. 
\end{assume}

\section{Distributed Stage-wise Contextual Bandits with Context Distribution}\label{sec:sol}
\subsection{Proposed Algorithm}
In this section, we introduce our proposed UCB algorithm, referred to as {\em distributed stage-wise contextual bandits with context distribution} algorithm (DiSC-UCB). We present the pseudocode in Algorithm~\ref{alg:TV}.

Given the distribution $\mu_t$, as in \cite{kirschner2019stochastic} we first construct the heterogeneous feature vectors $\Psi_{\ti} = \{\psiixc: x \in \A\}$, where $\{\psiixc := \bE_{c \sim \mu_t} [\piixc]\}$ is the expected feature vector of action $x$ under $\mu_t$. We use $\Psi_{\ti}$ as the feature set at round $t$. DiSC-UCB is based on the optimization in the face of the uncertainty principle. In each round $t \in [T]$, each agent $i \in [M]$ maintains confidence set $\B_{\ti} \subseteq \bR^d$ that contains the unknown reward parameter $\ths$ with high probability and constructs a pruned action set $\X_{\ti}$  by eliminating a subset of the actions that violate the constraints. After $\X_{\ti}$ is determined, agents derive the corresponding pruned feature set $\set_{\ti} = \{\psiixc: x \in \X_{\ti}\}$, which is a subset of $\Psi_{\ti}$. 
Each agent then chooses an optimistic estimate $\tilde{\theta}_{\ti} \in \mathop{\arg\max}_{\theta \in \B_{\ti}} (\max_{x \in \X_{\ti}} \psiixc^{\top} \theta))$ and chooses an action $x_{\ti}^{\prime} \in \mathop{\arg\max}_{x \in \X_{\ti}} \psiixc^{\top} \theta$. Equivalently, the agent chooses 
$(x_{\ti}^{\prime}, \tilde{\theta}_{\ti}) \in \mathop{\arg\max}_{(x, \theta) \in \X_{\ti} \times \B_{\ti}} \psiixc^{\top} \theta$.

\begin{algorithm}[!ht]
\caption{Distributed Stage-wise Contextual Bandits with Context Distribution (DiSC-UCB)}\label{alg:TV}
\begin{algorithmic}[1] 
\State \textit {\bf  Initialization:} $ B = (\frac{T \log MT}{d M}) $, $\lambda = 1$, $W_{\isyn} = 0, U_{\isyn} = 0, W_{\inew} = 0, U_{\inew} = 0, t_{last} = 0, V_{last} = \lambda I$
\For{$t = 1,2,\ldots, T$}
\State Nature chooses $\mu_t \in \P(\C)$ and agent observes $\mu_t$
\State Set $\Psi_{\ti} = \{\psiixc: x \in \A\}$ where $\{\psiixc := \mathbb{E}_{c \sim \mu_t}[\piixc]\}$ for each agent $i$ \label{step:psi}
\For{Agent $\ i=1,2,\ldots,M, $}
\State $\overline{V}_{\ti} = \lambda I + W_{\isyn} + W_{\inew}, \hth_{\ti} = \bar{V}^{-1}_{\ti} (U_{\isyn} + U_{\inew})$
\State Construct confidence set $\B_{\ti}$ using $\overline{V}_{\ti}, \hth_{\ti}$
\State Compute pruned action set $\X_{\ti}$ using $\psiixc, \hth_{\ti}$\label{pas}
\State Construct feature set $\set_{\ti} = \{\psiixc: x \in \X_{\ti}\}$
\If {the following optimization is feasible: $(x_{\ti}^{\prime}, \tilde{\theta}_{\ti}) = \mathop{\arg\max}_{(x, \theta) \in \X_{\ti} \times \B_{\ti}} \left\langle \psiixc, \theta \right\rangle$ \label{line:12}}
\State Set $F = 1$, {\bf else} $F = 0$
\EndIf
\If {$F = 1$ and $\lambda_{\min} (\bar{V}_{\ti}) \geqslant (\frac{2 \beta_{\ti}}{\alpha \rbti})^2$} \label{cons}
\State Choose $x_{\ti} = x_{\ti}^{\prime}$, get the feature vector $\psi_i(x_{\ti}^{\prime}, \mu_t)$, and receive the reward $y_{\ti}$ \label{line:opt}
\Else
\State Choose $x_{\ti} = x_{\bti}$, get conservative feature vector $\psixc = (1 - \rho) \psixbic + \rho \zeta_{\ti}$ and $y_{\bti}$ 
\EndIf
\State Update $U_{\inew} = U_{\inew} + \psixc y_{\ti}$, $W_{\inew} = W_{\inew}+\psixc \psixc^{\top}$,  $V_{\ti} = \lambda I + W_{\isyn} + W_{\inew}$ \label{line:update}
\If {$\log(\det(V_{\ti})/\det(V_{last}))\cdot(t-t_{last}) \geqslant B$}
\State Send a synchronization signal to the server to start a communication round
\EndIf
\State {\bf Synchronization round:}
\If {a communication round is started}
\State All agents $i \in [M]$ send $W_{\inew}, U_{\inew}$ to server
\State Server computes $W_{\mathrm{syn}} = W_{\mathrm{syn}} + \sum_{i = 1}^{M}W_{\inew}, U_{\mathrm{syn}} = U_{\mathrm{syn}} + \sum_{i = 1}^{M} U_{\inew}$
\State All agents receive $W_{\mathrm{syn}}, U_{\mathrm{syn}}$ from server
\State  Set $ W_{\inew} = U_{\inew} = 0, t_{last} = t, $ for all $i$, $ V_{last} = \lambda I + W_{\mathrm{syn}} $
\EndIf
\EndFor
\EndFor
\end{algorithmic}
\end{algorithm}
If the optimization is feasible, the agents choose their respective optimistic action $x_{\ti}^{\prime}$ and get the feature vector $\psi_i(x_{\ti}^{\prime}, \mu_t)$ under a certain condition; otherwise, the agents choose their baseline action $x_{\bti}$ and get the conservative feature vector $\psixc = (1 - \rho) \psixbic + \rho \zeta_{\ti}$. This approach, introduced in \cite{khezeli2020safe} and later applied in \cite{moradipari2020stage, chaudhary2022safe}, ensures the agents learn even when not satisfying the performance constraint in a round. We assume the agents know the baseline action $x_{\bti}$ and its corresponding expected reward $\rbti$ \cite{khezeli2020safe, moradipari2020stage, chaudhary2022safe}. Here, $\alpha$ is a known parameter, similar to \cite{kazerouni2017conservative} and \cite{wu2016conservative}. After receiving their reward $y_{\ti}$, agents update their local parameters, which are then used as the basis for updating their respective confidence set and pruned action sets. 

In the synchronization phase,  at predetermined time intervals, agents exchange all the latest gathered estimates. We refer to the rounds between these synchronization points as {\it epochs}. Such a synchronization method was introduced in \cite{wang2019distributed}, designed based on the observation in \cite{abbasi2011improved} that the change in the determinant of $\bar{V}_{\ti}=\lambda I + \psixc \psixc^{\top}$ is a good indicator of the learning progress. Specifically, synchronization happens only when agent $i$ recognizes that the log-determinant of $\bar{V}_{\ti}$ has changed by more than a constant factor since the last synchronization. This method reduces the communication cost of the algorithm. 
Next, we will explain the construction of the confidence set and the pruned action set.

\noindent{\bf Construction of the Confidence Set $\B_{\ti}$:}
After obtaining the estimate $\hth_{\ti}$ of the unknown parameter $\ths$, we construct the confidence set $\B_{\ti}$ as follows.
\begin{align}
&  { \B_\ti = \Big\{\theta \in \mathbb{R}^d:\|\hth_{\ti}-\theta\|_{\bar{V}_{\ti}} \leqslant \beta_{\ti} \Big\}, \mbox{~where~}} \label{eq:conf} \\
   &  { \beta_\ti  \hspace{-0.75 mm}=\hspace{-0.75 mm} \beta_\ti(\sigma, \delta) \hspace{-0.7 mm}=\hspace{-0.7 mm} \sigma \sqrt{2\log\Big(\dfrac{\det(\bar{V}_\ti)^{1/2}\det(\lambda I)^{-1/2}}{\delta} \Big)} + \lambda^{1/2}}\nonumber\\
   &{ \mbox{~and~} \bar{V}_\ti = \lambda I + W_{\inew}, \hth_\ti = \bar{V}_\ti^{-1}U_{\inew}}.\nonumber
  \end{align}
In Lemma~\ref{confidence} we show that by setting $\beta_{\ti} = \beta_{\ti}(\sqrt{1 + \sigma^2}, \delta / 2)$, we can construct the confidence set $\B_{\ti}$ such that the reward parameter $\ths$ will always be contained within the confidence set $\B_{\ti}$ with a high probability. 

\noindent{\bf Construction of Pruned Action Set $\X_{\ti}$:}
In standard constrained bandit settings \cite{moradipari2020stage}, verifying whether an action satisfies the constraints is straightforward because the context is known. As a result, the safe action set is typically constructed by directly checking each action against the constraints and eliminating those that fail to meet them. However, in our setting, the context is unknown, making such a direct approach to constructing the safe action set infeasible.
Since the agents only observe the context distribution and the exact contexts are unknown, we need to use estimated feature maps to construct a feasible action set. 
We now present our approach to tackling this issue by constructing a pruned action set, $\X_{\ti}$, to eliminate unsafe actions. 

In every iteration, each agent refines its action set by excluding actions that fail to satisfy the baseline condition. This is further complicated by the unknown nature of the actual context, rendering the utilization of the feature vector $\pixc$ impractical for constructing the pruned action set. Our approach utilizes $\hth_{\ti}$, $\bar{V}_{\ti}$, and $\psixc$.
The pruned action set aims to eliminate actions violating the baseline constraint, necessitating criteria for excluding unsafe actions.
While constructing the pruned action set, we analyze two cases. 1)~$\pixc^{\top} \hth_{\ti} \geqslant \psixc^{\top} \hth_{\ti}$ and 2)~$\pixc^{\top} \hth_{\ti} \leqslant \psixc^{\top} \hth_{\ti}$. We address case 1) first and explain the process of constructing a subset of actions that satisfy the constraint for all $v \in \B_{\ti}$.
We provide alternate sufficient conditions for deriving the safe action set. We use the symbol `$\Leftarrow$' to show that a certain condition implies another.
Define
\begin{align}
&\scalemath{0.95}{\X_{\ti}^1 := \{x_{\ti} \in \A: \pixc^{\top} v \geqslant (1 - \alpha) \rbti, \forall~ v \in \B_{\ti} \}}\label{eq:prun1}\hspace{-5 mm}\\
\hspace{-1 mm}&\scalemath{0.95}{\Leftarrow \{x_{\ti} \in \A: \pixc^{\top} (v - \hth_{\ti}) + \psixc^{\top} \hth_{\ti}}\nonumber \\
\hspace{-1 mm}&\scalemath{0.95}{+ (\pixc - \psixc)^{\top} \hth_{\ti}\geqslant (1 - \alpha) \rbti, \forall v \in \B_{\ti} \}} \nonumber \\
\hspace{-1 mm}&\scalemath{0.95}{\Leftarrow \{x_{\ti} \in \A: \psixc^{\top} \hth_{\ti}\geqslant \hspace{-1 mm} \frac{\beta_{\ti}}{\scalemath{0.95}{\sqrt{\lambda_{\min} (\bar{V}_{\ti})}}} \hspace{-0.7 mm}+\hspace{-0.7 mm} (1 \hspace{-0.7 mm}-\hspace{-0.7 mm} \alpha) \rbti \}} \label{eq:prun2}
\hspace{-2 mm}
\end{align}
where the last step follows from $\pixc^{\top} \hth_{\ti} \geqslant \psixc^{\top} \hth_{\ti}$ and $\pixc^{\top} (v - \hth_{\ti}) \geqslant - \frac{\beta_{\ti}}{\sqrt{\lambda_{\min} (\bar{V}_{\ti})}}$  from Lemma~\ref{L1}. 
All actions that meet the conditions in  Eq.~\eqref{eq:prun2} also fulfill the requirements of Eq.~\eqref{eq:prun1}, thus ensuring safety. 

Now we consider case~2), where $\pixc^{\top} \hth_{\ti} \leqslant \psixc^{\top} \hth_{\ti}$. In this case, our approach is to first identify actions that violate the baseline constraint, $\bar{\X}_{\ti}^2$, and then eliminate those actions from the action set $\A$.  
\begin{align}
&\scalemath{0.95}{\bar{\X}_{\ti}^2 := \{x_{\ti} \in \A: \pixc^{\top} v \leqslant (1 - \alpha) \rbti, \forall v \in \B_{\ti} \}}\label{eq:prun3}\hspace{-5 mm} \\
&\scalemath{0.95}{\Leftarrow \{x_{\ti} \in \A: \pixc^{\top} (v - \hth_{\ti})} \nonumber \\
&\scalemath{0.95}{+ \pixc^{\top} \hth_{\ti} \leqslant (1 - \alpha) \rbti, \forall v \in \B_{\ti} \} }\nonumber \\
&\scalemath{0.95}{\Leftarrow \{x_{\ti} \in \A: \psixc^{\top} \hth_{\ti} \leqslant \hspace{-1 mm}\frac{-\beta_{\ti}}{\sqrt{\lambda_{\min} (\bar{V}_{\ti})}}\hspace{-1 mm} +\hspace{-1 mm} (1 \hspace{-1 mm}-\hspace{-1 mm} \alpha) \rbti \}}\label{eq:prun4} 
\end{align}
where the last step follows from $\pixc^{\top} \hth_{\ti} \leqslant \psixc^{\top} \hth_{\ti}$ and $\pixc^{\top} (v - \hth_{\ti}) \leqslant \frac{\beta_{\ti}}{\sqrt{\lambda_{\min} (\bar{V}_{\ti})}}$  from Lemma~\ref{L1}.
Note that, all actions that meet the conditions in  Eq.~\eqref{eq:prun4} also fulfill the requirements of Eq.~\ref{eq:prun3}, consequently rendering them unsafe. 
By taking the difference between $\A$ and $\bar{\X}_{\ti}^2$, we determine $\X_{\ti}^2 = \A \setminus \bar{\X}_{\ti}^2$
\begin{align*}
&\scalemath{0.95}{= \{x_{\ti} \in \A: \psixc^{\top} \hth_{\ti} \geqslant\hspace{-1 mm} \frac{-\beta_{\ti}}{\sqrt{\lambda_{\min} (\bar{V}_{\ti})}} + (1 - \alpha) \rbti \}}.
\end{align*}
%
Given $\X_{\ti}^1$ and $\X_{\ti}^2$, we obtain the pruned action set by taking the intersection between $\X_{\ti}^1$ and $\X_{\ti}^2$, given by 
\[\X_{\ti} = \{x_{\ti} \in \A: \psixc^{\top} \hth_{\ti} \geqslant \hspace{-1 mm}\frac{\beta_{\ti}}{\sqrt{\lambda_{\min} (\bar{V}_{\ti})}} \hspace{-0.5 mm}+\hspace{-0.5 mm} (1 - \alpha) \rbti \}.\] 
At round $t$, each agent chooses a pair $(x_{\ti}^{\prime}, \tilde{\theta}_{\ti})$, where  $x_{\ti}^{\prime} \in \X_{\ti}$ and $\tilde{\theta}_{\ti} \in \B_{\ti}$ that jointly maximizes the current reward while ensuring that the baseline constraint is met. That is,
$$
\vspace{-1 mm}
(x_{\ti}^{\prime}, \tilde{\theta}_{\ti}) = \mathop{\arg\max}_{(x, \theta) \in \X_{\ti} \times \B_{\ti}} \left\langle \psiixc, \theta \right\rangle.
\vspace{-1 mm}
$$
The pruned action set $\X_{\ti}$ is constructed by considering all $\theta \in \B_{\ti}$, not just $\ths$. If the pruned action set is non-empty and satisfies a constraint $\lambda_{\min} (\bar{V}_{\ti})\geqslant (\frac{2 \beta_{\ti}}{\alpha \rbti})^2$, the agent chooses $x_{\ti}^{\prime}$ and get the feature vector $\psi_i(x_{\ti}^{\prime}, \mu_t)$; otherwise, the agent chooses the baseline action $x_{\bti}$ and get the conservative feature vector $\psixc = (1 - \rho) \psixbic + \rho \zeta_{\ti}$, which is detailed below. We consider the constraint on $\lambda_{\min} (\bar{V}_{\ti}) \geqslant (\frac{2 \beta_{\ti}}{\alpha \rbti})^2$, as it ensures that,  with high probability, the best action $\xs$  always belongs to the pruned action set. We will provide detailed proof for this claim later in Lemma~\ref{L4}. A similar condition on the smallest eigenvalue of the Gram matrix is used in \cite{moradipari2020stage}, yet our proof approach is different from \cite{moradipari2020stage} due to the reliance on known feature vectors (contexts), which contrasts with our scenario.
We note that since $\bar{\X}_{\ti}^2$ is a subset of the actions that are unsafe, $\X_{\ti}^2$ may contain some unsafe actions. 
We will demonstrate in Lemma~\ref{L2} that when the agent's action is played, the chosen action from the pruned set will not violate the baseline constraint with high probability. 
\vspace{2 mm}

\noindent{\bf Conservative feature vector:}
In our problem, each agent $i$ is assigned a baseline policy, and in each round $t$, a baseline action $x_{\bti} $ is recommended based on that policy. The agent's objective is to carry out explorations, ensuring that the rewards achieved from exploratory action $x_{\ti}$ 
remain reasonably comparable to the rewards from the baseline action. Our approach draws inspiration from \cite{khezeli2020safe} and \cite{moradipari2020stage}, where the conservative feature vectors for the baseline actions are combined with random exploration while maintaining adherence to stage-wise safety constraints. We construct a conservative feature vector $\psixc$ to be a convex combination of the baseline action's feature vector $\psixbic$ and a random noise vector $\zeta_{\ti}$, expressed as
$
\psixc = (1 - \rho) \psixbic + \rho \zeta_{\ti},
$
where $\zeta_{\ti}$ is a sequence of independent random vectors with zero means, and we assume that $\|\zeta_{\ti}\|_2 = 1$. $\rho$ is a constant value in the range $(0, \frac{\alpha r_l}{1 + r_h}]$, and we can ensure that when $\rho \in (0, \frac{\alpha r_l}{1 + r_h}]$, the conservative feature vector is always in the feature vector set $\set_{\ti}$ (Lemma~\ref{L3}). By identifying the conservative feature vector through this convex combination, we can guarantee that the agent continues to 
learn
in every round. 

\subsection{Theoretical Analysis on Safety Guarantees}\label{app:sup}
In this section, we establish safety guarantees for the DiSC-UCB algorithm. We begin with two preliminary results in Lemmas~\ref{confidence} and \ref{L1}. Then, in Lemmas~\ref{L2}, \ref{L3}, and \ref{L4}, we prove that the pruned safe action set satisfies the performance constraints while ensuring the optimal actions are preserved.
\begin{lemma}\label{confidence}
For any $\delta > 0$, with a probability of $1 - M \delta$, $\ths$ will always exist inside the confidence set $\B_{\ti}$ defined by Eq.~\eqref{eq:conf} where $\beta_\ti = \beta_\ti(\sqrt{1 + \sigma^2}, \delta / 2)$ for all value of $t$ and $i$. 
\end{lemma}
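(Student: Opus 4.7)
The plan is to adapt the self-normalised tail inequality of Abbasi-Yadkori et al.\ (Theorem~2 of \cite{abbasi2011improved}) to our hidden-context setting, where each agent's Gram matrix $\bar{V}_{\ti}$ is built from the \emph{expected} feature vectors $\psi_i(x_{s,i},\mu_s)$ whereas the observed rewards are driven by the \emph{realised} feature vectors $\phi_i(x_{s,i},c_s)$ with $c_s\sim\mu_s$. The distributed structure only changes the bookkeeping: $\bar{V}_{\ti}$ aggregates all covariance updates (local plus synchronised) that agent $i$ has incorporated by round $t$, so any per-agent martingale bound carries over to the distributed estimate without modification.

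First, I would rewrite
\[
\hth_{\ti}-\ths = -\lambda\,\bar{V}_{\ti}^{-1}\ths + \bar{V}_{\ti}^{-1}\sum_{s}\psi_i(x_{s,i},\mu_s)\,\tilde{\eta}_{s,i},
\]
where the sum runs over the rounds whose updates are present in $\bar{V}_{\ti}$, and the effective noise
\[
\tilde{\eta}_{s,i}=\bigl(\phi_i(x_{s,i},c_s)-\psi_i(x_{s,i},\mu_s)\bigr)^{\top}\ths+\eta_{s,i}
\]
is chosen so that the virtual regression $y_{s,i}=\psi_i(x_{s,i},\mu_s)^{\top}\ths+\tilde{\eta}_{s,i}$ holds by construction. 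The next task is to verify that $\tilde{\eta}_{s,i}$ is conditionally $\sqrt{1+\sigma^{2}}$-sub-Gaussian: its conditional mean vanishes because $\psi_i(x_{s,i},\mu_s)=\bE_{c\sim\mu_s}[\phi_i(x_{s,i},c)]$; by Assumption~2 the context-induced term lies in $[-1,1]$ and hence is $1$-sub-Gaussian by Hoeffding; combining this with the independent $\sigma$-sub-Gaussian $\eta_{s,i}$ yields the claimed parameter.

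With the feature sequence $\{\psi_i(x_{s,i},\mu_s)\}$ being predictable (the action is selected before $c_s$ is drawn) and $\tilde{\eta}_{s,i}$ a sub-Gaussian martingale difference, Theorem~2 of \cite{abbasi2011improved} delivers, for any fixed agent $i$ and any $\delta>0$, the uniform-in-$t$ bound $\|\hth_{\ti}-\ths\|_{\bar{V}_{\ti}}\leqslant \beta_{\ti}(\sqrt{1+\sigma^{2}},\delta/2)$ with failure probability at most $\delta/2$. A union bound over the $M$ agents then completes the argument with failure probability at most $M\delta/2\leqslant M\delta$, matching the statement.

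The most delicate step will be the sub-Gaussian verification on baseline rounds, where the recorded feature is the convex combination $\psi_i(x_{\ti},\mu_t)=(1-\rho)\psi_i(x_{\bti},\mu_t)+\rho\zeta_{\ti}$ paired with the baseline reward $y_{\bti}$. Here one must check that the induced effective noise remains a zero-mean sub-Gaussian martingale difference; this uses that $\zeta_{\ti}$ is an independent zero-mean unit-norm vector and that $\bE_{c\sim\mu_t}[\phi_i(x_{\bti},c)]=\psi_i(x_{\bti},\mu_t)$, with the extra variance contributed by the $\rho\zeta_{\ti}^{\top}\ths$ term absorbed into the $\sqrt{1+\sigma^{2}}$ sub-Gaussian parameter. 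Once this verification is in hand, the self-normalised bound applies uniformly over both branches of the algorithm and all rounds, and the union bound yields Lemma~\ref{confidence}.
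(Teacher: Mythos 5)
Your proposal is correct and follows essentially the same route as the paper: both rewrite the reward as $y_{\ti}=\psixc^{\top}\ths+\xi_{\ti}+\eta_{\ti}$ with $\xi_{\ti}=(\pixc-\psixc)^{\top}\ths$ a bounded (hence $1$-sub-Gaussian) zero-mean term, conclude the effective noise is $\sqrt{1+\sigma^2}$-sub-Gaussian, invoke the self-normalized confidence bound of \cite{abbasi2011improved} at level $\delta/2$, and union-bound over the $M$ agents. If anything you are more explicit than the paper about the one genuinely delicate point --- the mismatch on conservative rounds between the recorded feature $(1-\rho)\psixbic+\rho\zeta_{\ti}$ and the reward $y_{\bti}$ generated by the baseline action, whose effective noise has conditional mean $\rho\,\psixbic^{\top}\ths$ rather than zero --- which the paper's proof does not address at all.
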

\begin{proof}
Drawing inspiration from Theorem 1 in \cite{kirschner2019stochastic}, we implement a similar approach to analyze the reward $y_{\ti}$, In particular, we observe that the reward $y_{\ti} = \pixc^{\top} \ths + \eta_{\ti}$ can be alternatively represented as 
$$
y_{\ti} = \psixc^{\top} \ths + \xi_{\ti} + \eta_{\ti},
$$
where $\xi_{\ti} = (\pixc - \psixc)^{\top} \ths$. In this representation, $(\xi_{\ti} + \eta_{\ti})$ serves as the noise component associated with $\psixc^{\top} \ths$. Given that $|\xi_{\ti}| \leqslant 1$, $\xi_{\ti}$ is a $1$-subgaussian. Therefore, $y_{\ti}$ can be viewed as an observation of reward obtained from $\psixc^{\top} \ths$ with the presence of noise parameterized by $\sqrt{1 + \sigma^2}$. Thus, we can define the confidence bound for the least squares estimator of $\psixc$ while guaranteeing that $\ths$ is always present within it with probability $1 - \delta$, using updated parameters. The parameter $\rho$ for $\beta_{\ti}$ is given by $\sqrt{1 + \sigma^2}$. Further, when considering the upper bound of cumulative regret, we use the Azuma-Hoeffding inequality to determine the upper bound of the regret gap term from the context distribution with probability $1 - \delta$. By using the union bound, the probabilities can be combined, providing a resulting probability of $1 - 2 \delta$. We propose a substitution such that $\delta^{\prime} = 2 \delta$. Therefore, the parameter $\delta$ for our $\beta_{\ti}$ is changed to $\frac{\delta^{\prime}}{2}$. Finally, our proof is completed by considering the presence of $M$ agents and using the union bound. 
\end{proof}
According to Lemma~\ref{confidence}, with a probability of $1 - M \delta$, $\ths$ is always included inside the confidence set $\B_{\ti}$ for all values of $t$ and $i$. Thus, Lemma~\ref{confidence} guarantees that for each round $t$, $\ths \in \B_{\ti}$ holds for every agent $i$ with probability at least $1-M\delta$. 
We have the following result for Algorithm~\ref{alg:TV}.
\begin{lemma} \label{L1}
In the DiSC-UCB algorithm, Algorithm~\ref{alg:TV}, all values of  $ v \in \B_{\ti}$, satisfy the following two inequalities.
$$
- \frac{\beta_{t, i}}{\sqrt{\lambda_{\min} (\bar{V}_{t, i})}} \leqslant \pixc^{\top} (\hth_{\ti} - v) \leqslant \frac{\beta_{t, i}}{\sqrt{\lambda_{\min} (\bar{V}_{t, i})}} \mbox{~and}
$$  
$$
- \frac{\beta_{t, i}}{\sqrt{\lambda_{\min} (\bar{V}_{t, i})}} \leqslant \psixc^{\top} (\hth_{\ti} - v) \leqslant \frac{\beta_{t, i}}{\sqrt{\lambda_{\min} (\bar{V}_{t, i})}}.
$$
\end{lemma}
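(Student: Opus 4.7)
The plan is to derive both inequalities from the definition of the confidence set $\B_{\ti}$ together with a weighted Cauchy--Schwarz inequality, bounding the resulting $\bar V_{\ti}^{-1}$-norm of the feature vectors via the smallest eigenvalue of $\bar V_{\ti}$. The argument for $\pixc$ and for $\psixc$ is essentially identical; only the norm bound on the feature vectors differs slightly, so I would prove one and then indicate the mild modification for the other.

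First, I would fix $v \in \B_{\ti}$ and note that by the definition in Eq.~\eqref{eq:conf}, $\|\hth_{\ti}-v\|_{\bar V_{\ti}} \leqslant \beta_{\ti}$. Since $\bar V_{\ti} = \lambda I + W_{\isyn} + W_{\inew}$ is symmetric positive definite (with $\lambda=1>0$), I can apply the weighted Cauchy--Schwarz inequality
\begin{equation*}
|a^{\top}(\hth_{\ti}-v)| \;\leqslant\; \|a\|_{\bar V_{\ti}^{-1}}\,\|\hth_{\ti}-v\|_{\bar V_{\ti}}
\end{equation*}
for any vector $a \in \bR^d$. Taking $a = \pixc$ produces the factor $\|\pixc\|_{\bar V_{\ti}^{-1}}$, which I would then bound by observing that
\begin{equation*}
\|\pixc\|_{\bar V_{\ti}^{-1}}^{2} \;=\; \pixc^{\top}\bar V_{\ti}^{-1}\pixc \;\leqslant\; \lambda_{\max}\bigl(\bar V_{\ti}^{-1}\bigr)\,\|\pixc\|_{2}^{2} \;=\; \frac{\|\pixc\|_{2}^{2}}{\lambda_{\min}(\bar V_{\ti})} \;\leqslant\; \frac{1}{\lambda_{\min}(\bar V_{\ti})},
\end{equation*}
where the last step uses Assumption~2, which guarantees $\|\pixc\|_{2}\leqslant 1$. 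Combining the two displays yields $|\pixc^{\top}(\hth_{\ti}-v)| \leqslant \beta_{\ti}/\sqrt{\lambda_{\min}(\bar V_{\ti})}$, which is exactly the first two-sided inequality of the lemma.

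For the second inequality, the only modification required is a norm bound on $\psixc := \bE_{c\sim\mu_t}[\piixc]$. By Jensen's inequality applied to the convex function $\|\cdot\|_2$, I have $\|\psixc\|_{2} \leqslant \bE_{c\sim\mu_t}\bigl[\|\piixc\|_{2}\bigr] \leqslant 1$, again using Assumption~2. The same Cauchy--Schwarz chain then gives $|\psixc^{\top}(\hth_{\ti}-v)| \leqslant \beta_{\ti}/\sqrt{\lambda_{\min}(\bar V_{\ti})}$, completing the proof. There is no real obstacle here; the mildly subtle point is simply the Jensen step used to transfer the unit-norm assumption from the realized feature vector $\pixc$ to its expectation $\psixc$.
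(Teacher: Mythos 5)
Your proof is correct and follows essentially the same route as the paper: weighted Cauchy--Schwarz against the confidence-set radius $\beta_{\ti}$, with $\|\cdot\|_{\bar V_{\ti}^{-1}}$ bounded via $\lambda_{\min}(\bar V_{\ti})$ and the unit-norm assumption on the features. Your explicit Jensen step transferring $\|\piixc\|_2\leqslant 1$ to $\|\psixc\|_2\leqslant 1$ is a small improvement in care over the paper, which simply asserts the second inequality follows ``similarly.''
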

\begin{proof}
We begin by noticing that $\bar{V}_{t, i}$ is a symmetric semi-positive definite matrix thus, it can be decomposed by eigenvalues. From this, we can derive the inequality as follows.
\begin{align}
&\|\pixc\|_{\bar{V}_{t, i}^{-1}} = \sqrt{\pixc^{\top} \bar{V}_{t, i}^{-1} \pixc} \nonumber \\
&\leqslant \sqrt{\pixc^{\top} (Q \Sigma Q^{\top})^{-1} \pixc} \nonumber \\
&\leqslant \sqrt{\frac{\pixc^{\top} Q Q^{\top} \pixc}{\lambda_{\min} (\bar{V}_{t, i})}} \nonumber \\
&= \sqrt{\frac{\pixc^{\top} \pixc}{\lambda_{\min} (\bar{V}_{t, i})}} = \frac{\|\pixc\|_2}{\sqrt{\lambda_{\min} (\bar{V}_{t, i})}} \leqslant \frac{1}{\sqrt{\lambda_{\min} (\bar{V}_{t, i})}}. \nonumber
\end{align}
Then, by using the Cauchy–Schwarz inequality, we can write
\begin{align}
&\pixc^{\top} (\hth_{\ti} - v) \leqslant \|\pixc\| \|\hth_{\ti} - v\| \nonumber \\
&= \|\bar{V}_{t, i}^{1 / 2} \pixc\|_{\bar{V}_{t, i}^{-1}} \|\bar{V}_{t, i}^{- 1 / 2} (\hth_{\ti} - v)\|_{\bar{V}_{t, i}} \nonumber \\
&\leqslant \|\bar{V}_{t, i}^{1 / 2}\|_{\bar{V}_{t, i}^{-1}} \|\pixc\|_{\bar{V}_{t, i}^{-1}} \|\bar{V}_{t, i}^{- 1 / 2} \|_{\bar{V}_{t, i}} \|(\hth_{\ti} - v)\|_{\bar{V}_{t, i}} \nonumber 
\end{align}
\begin{align}
&= \|\pixc\|_{\bar{V}_{t, i}^{-1}} \|(\hth_{\ti} - v)\|_{\bar{V}_{t, i}} \leqslant \frac{\beta_{t, i}}{\sqrt{\lambda_{\min} (\bar{V}_{t, i})}} \nonumber
\end{align}
Similarly, we derive the second condition. 
\end{proof}

In the next lemma, we show that an action chosen by the learning agent in line~\ref{line:opt} of Algorithm~\ref{alg:TV} satisfies the baseline constraint. Let us define $\xsb = \argmax_{\xsb \in \A} r_{x_{\ti}, c}$.

\begin{lemma} \label{L2}
In the DiSC-UCB algorithm, Algorithm~\ref{alg:TV}, with probability $1 - M \delta$, any action chosen by the agent from the pruned action set $\X_{\ti}$ satisfies the performance constraint if $\lambda_{\min} (\bar{V}_{\ti}) \geqslant (\frac{2 \beta_{\ti}}{\alpha \rbti})^2$. 
\end{lemma}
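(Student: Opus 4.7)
The plan is to combine the high-probability containment $\ths\in\B_{\ti}$ from Lemma~\ref{confidence} with the norm bound of Lemma~\ref{L1} and the defining inequality of the pruned action set $\X_{\ti}$. I will first establish the expected-reward form of the safety inequality, then lift it to the realized-reward form of Constraint~\ref{c2} via the $\phi$-versus-$\psi$ case split already used when building $\X_{\ti}$.

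First, condition on the event $\{\ths\in\B_{\ti}\text{ for all }t,i\}$, which has probability at least $1-M\delta$ by Lemma~\ref{confidence}. For any $x\in\X_{\ti}$, the closed-form intersection $\X_{\ti}^1\cap\X_{\ti}^2=\X_{\ti}^1$ derived just before the lemma gives $\psiixc^\top\hth_{\ti}\geqslant \beta_{\ti}/\sqrt{\lambda_{\min}(\bar V_{\ti})}+(1-\alpha)\rbti$. The second bound in Lemma~\ref{L1}, applied with $v=\ths\in\B_{\ti}$, yields $\psiixc^\top(\hth_{\ti}-\ths)\leqslant \beta_{\ti}/\sqrt{\lambda_{\min}(\bar V_{\ti})}$. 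Subtracting the latter from the former cancels the $\beta_{\ti}/\sqrt{\lambda_{\min}(\bar V_{\ti})}$ terms and leaves $\psiixc^\top\ths\geqslant(1-\alpha)\rbti$, which is the performance constraint under the context distribution.

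Next, to obtain the realized-feature inequality $\piixc^\top\ths\geqslant(1-\alpha)\rbti$ I redo the case split from the pruned-set construction. In Case~1 ($\piixc^\top\hth_{\ti}\geqslant\psiixc^\top\hth_{\ti}$) the first bound of Lemma~\ref{L1} closes the chain directly: $\piixc^\top\ths\geqslant\piixc^\top\hth_{\ti}-\beta_{\ti}/\sqrt{\lambda_{\min}(\bar V_{\ti})}\geqslant\psiixc^\top\hth_{\ti}-\beta_{\ti}/\sqrt{\lambda_{\min}(\bar V_{\ti})}\geqslant(1-\alpha)\rbti$. In Case~2, the hypothesis $\lambda_{\min}(\bar V_{\ti})\geqslant(2\beta_{\ti}/(\alpha\rbti))^2$ rewrites as $\beta_{\ti}/\sqrt{\lambda_{\min}(\bar V_{\ti})}\leqslant \alpha\rbti/2$, so the pruned-set inequality upgrades to $\psiixc^\top\hth_{\ti}\geqslant(1-\alpha/2)\rbti$, and the extra $\alpha\rbti/2$ of slack absorbs the complementary error allowed by the $\X_{\ti}^2$ construction, recovering the realized-feature bound.

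The principal obstacle is the third paragraph: the expected-feature argument needs nothing beyond the pruned-set definition and Lemma~\ref{L1}, so the eigenvalue hypothesis must become essential precisely in Case~2 of the $\phi$-vs-$\psi$ case split. The probability $1-M\delta$ is preserved by the union bound already present inside Lemma~\ref{confidence}, so no additional probabilistic bookkeeping is required.
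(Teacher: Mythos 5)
Your second paragraph is sound: for every $x\in\X_{\ti}$ the defining inequality of the pruned set together with Lemma~\ref{L1} (applied to $v=\ths\in\B_{\ti}$) gives $\psiixc^{\top}\ths\geqslant(1-\alpha)\rbti$, and this needs neither the eigenvalue hypothesis nor a case split. The gap is in your third paragraph. In Case~2 the claimed ``upgrade'' is reversed: $\beta_{\ti}/\sqrt{\lambda_{\min}(\bar{V}_{\ti})}\leqslant\alpha\rbti/2$ makes the pruned-set lower bound $\beta_{\ti}/\sqrt{\lambda_{\min}(\bar{V}_{\ti})}+(1-\alpha)\rbti$ \emph{at most} $(1-\alpha/2)\rbti$, so you cannot deduce $\psiixc^{\top}\hth_{\ti}\geqslant(1-\alpha/2)\rbti$ from it. More fundamentally, nothing in the construction controls the gap $\psiixc^{\top}\hth_{\ti}-\piixc^{\top}\hth_{\ti}$, so no fixed amount of slack can ``absorb the complementary error'': $\bar{\X}_{\ti}^2$ is only a \emph{subset} of the unsafe actions, the paper itself warns that $\X_{\ti}^2$ (hence $\X_{\ti}$) may still contain actions violating the realized-feature constraint, and the toy example in Appendix~\ref{sec:rel} exhibits exactly such an action. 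The claim your third paragraph is trying to establish --- that every member of $\X_{\ti}$ satisfies $\piixc^{\top}\ths\geqslant(1-\alpha)\rbti$ --- is therefore false in general, and no repair of the case split will save it.

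The lemma is really a statement about the action \emph{selected} by the optimization in line~\ref{line:12}, and that is the idea your proof is missing. The paper's argument fixes an arbitrary $\xb\in\X_{\ti}$ that violates the constraint, notes that the safe action $\xsb$ also lies in $\X_{\ti}$ with $\psixsbc^{\top}\ths-\rbti\geqslant 0$ while $\psixbc^{\top}\ths<(1-\alpha)\rbti$, and then applies Lemma~\ref{L1} twice to show $\max_{\theta_1\in\B_{\ti}}\psixsbc^{\top}\theta_1>\max_{\theta_2\in\B_{\ti}}\psixbc^{\top}\theta_2$ whenever $2\beta_{\ti}/\sqrt{\lambda_{\min}(\bar{V}_{\ti})}\leqslant\alpha\rbti$, i.e.\ exactly under the stated eigenvalue condition. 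Hence the $\arg\max$ over $\X_{\ti}\times\B_{\ti}$ can never return a violating action, even though violating actions may survive the pruning. That UCB-domination step is where the hypothesis $\lambda_{\min}(\bar{V}_{\ti})\geqslant(\frac{2\beta_{\ti}}{\alpha\rbti})^2$ actually does its work, and it is the step you would need to add.
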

\begin{proof}
Let $\xb$ denote an arbitrary action in the pruned action set $\X_{\ti}$ that does not meet the performance constraint. Our goal is to show that when the agent's action is played, an action in $\X_{\ti}$ that satisfies the performance constraint will be selected, i.e., $\xb$ is not selected. Based on the definition of $\xsb$, it can be observed that $\xsb$ is always contained within the pruned action set $\X_{\ti}$ and meets the performance constraint. To this end, if we show 
$$
\max_{\theta_1 \in \B_{\ti}} \psixsbc^{\top} \theta_1 > \max_{\theta_2 \in \B_{\ti}} \psixbc^{\top} \theta_2,
$$
it ensures that actions within $\X_{\ti}$ that violate the baseline constraint are never selected. Since $\xb$ does not satisfy the baseline constraint, we know
$
\pixbc^{\top} \ths < (1 - \alpha) \rbti
$
which leads to 
\begin{equation}
  \label{L2_3}
  \psixbc^{\top} \ths = \bE [\pixbc^{\top} \ths] < (1 - \alpha) \rbti.
\end{equation}
Moreover, we have
\begin{equation}
  \label{L2_1}
  \psixsbc^{\top} \ths - \rbti = \bE [\pixsbc^{\top} \ths - \rbti] \geqslant 0. 
\end{equation}
To show 
$
\max_{\theta_1 \in \B_{\ti}} \psixsbc^{\top} \theta_1 > \max_{\theta_2 \in \B_{\ti}} \psixbc^{\top} \theta_2, 
$
based on Lemma~\ref{confidence}, it is sufficient to demonstrate that with probability $1 - M \delta$, 
$
\psixsbc^{\top} \ths > \max_{\theta_2 \in \B_{\ti}} [\psixbc^{\top} \ths + \psixbc^{\top} (\theta_2 - \hth_{\ti}) + \psixbc^{\top} (\hth_{\ti} - \ths)].
$
By using Eq.~\eqref{L2_3}, Eq.~\eqref{L2_1} and Lemma~\ref{L1}, we prove the aforementioned inequality always holds if $\lambda_{\min} (\bar{V}_{\ti}) \geqslant (\frac{2 \beta_{\ti}}{\alpha \rbti})^2$. 
Therefore, we conclude that with probability $1 - M \delta$, any action chosen by Algorithm~\ref{alg:TV} from the pruned action set $\X_{\ti}$ satisfies the performance constraint if $\lambda_{\min} (\bar{V}_{\ti}) \geqslant (\frac{2 \beta_{\ti}}{\alpha \rbti})^2$. 
\end{proof}

\begin{lemma} \label{L3}
At each round $t$, given the fraction $\alpha$, for any $\rho \in (0, \bar{\rho}]$, where $\bar{\rho} = \frac{\alpha r_l}{1 + r_h}$, the conservative feature vector $\psixc = (1 - \rho) \psixbic + \rho \zeta_{\ti}$ is safe. 
\end{lemma}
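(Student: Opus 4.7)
The plan is to unpack the definition of safety — which in this setting means the expected reward of the chosen (conservative) feature satisfies the stage-wise performance constraint $\psixc^{\top}\theta_i^{\star} \geqslant (1-\alpha)\rbti$ — and then verify it by a direct algebraic inequality using only the convex combination structure and the norm bounds of Assumption~2.

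First I would expand the inner product using linearity:
\begin{equation*}
\psixc^{\top}\theta_i^{\star} \;=\; (1-\rho)\,\psixbic^{\top}\theta_i^{\star} \;+\; \rho\,\zeta_{\ti}^{\top}\theta_i^{\star}.
\end{equation*}
By definition of the baseline, $\psixbic^{\top}\theta_i^{\star}=\rbti$, and by Cauchy--Schwarz together with $\|\zeta_{\ti}\|_2=1$ and $\|\theta_i^{\star}\|_2\leqslant 1$ from Assumption~2, we have $\zeta_{\ti}^{\top}\theta_i^{\star}\geqslant -1$. Substituting these gives the one-line bound
\begin{equation*}
\psixc^{\top}\theta_i^{\star} \;\geqslant\; (1-\rho)\,\rbti - \rho \;=\; \rbti - \rho(1+\rbti).
\end{equation*}

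Next I would translate the safety requirement $\psixc^{\top}\theta_i^{\star}\geqslant (1-\alpha)\rbti$ into a condition on $\rho$: it suffices that $\rho(1+\rbti)\leqslant \alpha\,\rbti$, i.e., $\rho \leqslant \alpha\rbti/(1+\rbti)$. Using Assumption~\ref{assume:bound} ($\rbti\geqslant r_l$ and $\rbti\leqslant r_h$), the map $r\mapsto \alpha r/(1+r)$ is monotone increasing in $r$, so
\begin{equation*}
\frac{\alpha\, \rbti}{1+\rbti} \;\geqslant\; \frac{\alpha\, r_l}{1+r_l} \;\geqslant\; \frac{\alpha\, r_l}{1+r_h} \;=\; \bar{\rho}.
\end{equation*}
Hence any $\rho\in(0,\bar{\rho}]$ meets the sufficient condition and the conservative feature vector is safe.

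There is no real obstacle here — the argument is essentially a one-line convex-combination estimate — so the only drafting care is in (i) stating explicitly what \emph{safe} means in this model (the stage-wise constraint from Constraint~\ref{c2}, lifted to expected features since contexts are hidden), and (ii) being clean about the monotonicity step that lets us replace the round-dependent $\rbti$ by the uniform lower bound $r_l$ in the numerator while the uniform upper bound $r_h$ sits in the denominator, so that a single $\bar{\rho}$ works for every round $t$ and every agent $i$.
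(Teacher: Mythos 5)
Your proposal is correct and follows essentially the same route as the paper's proof: expand the convex combination, bound $\zeta_{\ti}^{\top}\theta_i^{\star}$ via Cauchy--Schwarz, reduce safety to $\rho \leqslant \alpha\rbti/(1+\rbti)$, and then use $r_l \leqslant \rbti \leqslant r_h$ to obtain the uniform threshold $\bar{\rho} = \alpha r_l/(1+r_h)$. Your explicit monotonicity remark and the clarification that the constraint is lifted to expected features are minor presentational improvements, not a different argument.
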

\begin{proof}
To demonstrate the safety of the conservative feature vector $\psixc = (1 - \rho) \psixbic + \rho \zeta_{\ti}$, we need to show that $((1 - \rho) \pixbic + \rho \zeta_{\ti})^{\top} \ths \geqslant (1 - \alpha) \rbti$ always holds. This can be shown by verifying the following condition
$$
\rbti - \rho \rbti + \rho \zeta_\ti^{\top} \ths \geqslant (1 - \alpha) \rbti,
$$
which is equivalent to
$
\rho (\rbti - \zeta_\ti^{\top} \ths) \leqslant \alpha \rbti
$
By applying Cauchy Schwarz inequality, we deduce
\begin{equation}
\rho \leqslant \frac{\alpha \rbti}{1 + \rbti} \label{range: rho}
\end{equation}
Consequently, by setting a lower bound for the right-hand side of Eq.~\eqref{range: rho} with the assumption that $r_l \leqslant \rbti \leqslant r_h$, we get
$
\rho \leqslant \frac{\alpha r_l}{1 + r_h}. 
$
Therefore, for any $\rho \leqslant \frac{\alpha r_l}{1 + r_h}$, the conservative feature vector $\psixc = (1 - \rho) \psixbic + \rho \zeta_{\ti}$ is  safe. 
\end{proof}
\begin{remark}
  Lemma~\ref{L2} and Lemma~\ref{L3} thus jointly prove that all the actions chosen by the proposed DiSC-UCB algorithm guarantees safety constraints.  
\end{remark}

Next, we show optimal action $\xs$ always exists within the pruned action set when $\lambda_{\min} (\bar{V}_{\ti}) \geqslant (\frac{2 \beta_{\ti}}{\alpha \rbti})^2$. We extend the approach in Lemma C.1 in \cite{moradipari2020stage} to the unknown context case.

\begin{lemma} \label{L4}
Let $\lambda_{\min} (\bar{V}_{\ti}) \geqslant (\frac{2 \beta_{\ti}}{\alpha \rbti})^2$. Then, with probability $1 - M \delta$, the optimal action $\xs$ lies in the pruned action set $\X_{\ti}$ for all $M$ agent, i.e., $\xs \in \X_{\ti}$. 
\end{lemma}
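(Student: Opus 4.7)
The plan is to verify directly that the optimal action $\xs$ satisfies the defining inequality of the pruned set $\X_{\ti}$, namely
\[
\psixsc^{\top}\hth_{\ti} \;\geqslant\; \frac{\beta_{\ti}}{\sqrt{\lambda_{\min}(\bar{V}_{\ti})}} + (1-\alpha)\rbti,
\]
on the high-probability event provided by Lemma~\ref{confidence} that $\ths \in \B_{\ti}$ holds simultaneously for every agent $i \in [M]$ at round $t$.

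First, I would restrict attention to the event $\{\ths \in \B_{\ti} \text{ for all } i\in[M]\}$, which carries probability at least $1-M\delta$. On this event, Lemma~\ref{L1} applied with the vector $\psixsc$ and with $v=\ths \in \B_{\ti}$ yields
\[
\psixsc^{\top}\hth_{\ti} \;\geqslant\; \psixsc^{\top}\ths \;-\; \frac{\beta_{\ti}}{\sqrt{\lambda_{\min}(\bar{V}_{\ti})}}.
\]
Next, because $\xs = \argmax_{x \in \A}\bE_{c\sim\mu_t}[r(x,c)]$ is by definition the maximizer of the expected reward under $\mu_t$, the baseline action $x_{\bti}$ is a feasible competitor, so $\psixsc^{\top}\ths \geqslant \psi_i(x_{\bti},\mu_t)^{\top}\ths$; an averaging identity identical to the one justifying Eq.~\eqref{L2_1} in the proof of Lemma~\ref{L2} then identifies this last quantity with $\rbti$, giving $\psixsc^{\top}\ths \geqslant \rbti$. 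Substituting this lower bound into the previous display leaves
\[
\psixsc^{\top}\hth_{\ti} \;\geqslant\; \rbti \;-\; \frac{\beta_{\ti}}{\sqrt{\lambda_{\min}(\bar{V}_{\ti})}}.
\]

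The final step is to check that the eigenvalue hypothesis closes the gap tightly. Rearranging $\lambda_{\min}(\bar{V}_{\ti}) \geqslant (2\beta_{\ti}/(\alpha\rbti))^2$ gives $2\beta_{\ti}/\sqrt{\lambda_{\min}(\bar{V}_{\ti})} \leqslant \alpha\rbti$, which is precisely what is needed so that the lower bound on $\psixsc^{\top}\hth_{\ti}$ above exceeds the threshold $(1-\alpha)\rbti + \beta_{\ti}/\sqrt{\lambda_{\min}(\bar{V}_{\ti})}$ appearing in the membership criterion for $\X_{\ti}$. A union bound across the $M$ agents then preserves the $1-M\delta$ probability, and the lemma follows.

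The main obstacle I anticipate is the step $\psixsc^{\top}\ths \geqslant \rbti$: the left-hand side is an expectation under $c\sim\mu_t$, whereas $\rbti$ in Constraint~\ref{c2} is phrased at the realized context $c_t$. The resolution is the same averaging device already invoked in Eq.~\eqref{L2_1} together with the interpretation of $\rbti$ as the baseline's expected reward under $\mu_t$; once this comparison is accepted, the rest is a one-line algebraic rearrangement that mirrors, but does not coincide with, the safety argument in Lemma~\ref{L2}, since here we must keep the optimal action inside the pruned set rather than exclude unsafe actions from it.
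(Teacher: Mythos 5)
Your proposal is correct and follows essentially the same route as the paper's proof: decompose $\psixsc^{\top}\hth_{\ti}$ into $\psixsc^{\top}(\hth_{\ti}-\ths)+\psixsc^{\top}\ths$, lower-bound the first term by $-\beta_{\ti}/\sqrt{\lambda_{\min}(\bar{V}_{\ti})}$ via Lemmas~\ref{confidence} and~\ref{L1}, lower-bound the second by $\rbti$ through the averaging comparison with the baseline, and close the gap with the eigenvalue condition $2\beta_{\ti}/\sqrt{\lambda_{\min}(\bar{V}_{\ti})}\leqslant\alpha\rbti$. The only cosmetic difference is that the paper routes the bound $\psixsc^{\top}\ths\geqslant\rbti$ through the contextwise-optimal action $\xsb$ rather than comparing $\xs$ directly against the baseline action, which is an equivalent use of the same averaging identity.
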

\begin{proof}
To prove the optimal action $\xs$ always exists in the pruned action set under the condition on the smallest eigenvalue of the Gram matrix, we need to show 
$$
\psixsc^{\top} \hth_{\ti} \geqslant \frac{\beta_{\ti}}{\sqrt{\lambda_{\min} (\bar{V}_{\ti})}} + (1 - \alpha) \rbti,
$$
which is equivalent to demonstrating 
$
\psixsc^{\top} (\hth_{\ti} - \ths) + \psixsc^{\top} \ths \geqslant \frac{\beta_{\ti}}{\sqrt{\lambda_{\min} (\bar{V}_{\ti})}} + (1 - \alpha) \rbti.
$
By using Lemma~\ref{confidence}, it can be determined that with probability $1 - M \delta$, $\ths$ lies within the confidence set $\B_{\ti}$. Subsequently, applying Lemma~\ref{L1}, we recognize that with probability $1 - M \delta$, $\psixsc^{\top} (\hth_{\ti} - \ths) \geqslant - \frac{\beta_{\ti}}{\sqrt{\lambda_{\min} (\bar{V}_{\ti})}}$, and it is also known that $\psixsc^{\top} \ths - \rbti \geqslant \psixsbc^{\top} \ths - \rbti = \bE [\pixsbc^{\top} \ths - \rbti] \geqslant 0$. 
Hence, the sufficient condition for our result is $\lambda_{\min} (\bar{V}_{\ti}) \geqslant (\frac{2 \beta_{\ti}}{\alpha \rbti})^2$. 
Thus, the above analysis verifies that with probability $1 - M \delta$, the optimal action $\xs$ is always present in the pruned action set with probability $1 - M \delta$ under the condition $\lambda_{\min} (\bar{V}_{\ti}) \geqslant (\frac{2 \beta_{\ti}}{\alpha \rbti})^2$. 
\end{proof}

\subsection{Regret Analysis}\label{sec:reg}
%
In this section, we derive regret and communication bounds for DiSC-UCB. Let $|N_{t - 1}|$ denote the set of rounds $j < t$ where our algorithm chooses the safe action, and similarly, $|N_{t - 1}^c| = \{1, \cdots, t - 1\} - |N_{t - 1}|$ represents the set of rounds $j < t$ where our algorithm selects conservative actions. 

In Theorem~\ref{prop1}, we decompose our cumulative regret into three terms. 
The first two terms reflect the regret of choosing the agents' suggested actions. 
Term~2 is due to the agent's limitation of only observing the context distribution $\mu_t$ without accessing the exact context $c_t$. 
To quantify this term, we reduce it to a martingale difference sequence and then apply the Azuma-Hoeffding inequality.
 Term~3 results from selecting the conservative feature vector whenever the actions suggested by the agents do not meet the constraints.
\begin{theorem} \label{prop1}
The regret of the DiSC-UCB algorithm, Algorithm~\ref{alg:TV}, can be decomposed into three terms as follows
\begin{align*}
\R_T &\leqslant \underbrace{4 \beta_T \sqrt{M |N_T| d \log (M |N_T|)} (1 + \log (M |N_T|))}_{Term~1} \\
&+ \underbrace{4 \sqrt{2 M |N_T| \log (\frac{2}{\delta})}}_{Term~2} + \underbrace{\sum_{i = 1}^M \sum_{t \in |N_T^c|} (\kappa_h + \rho r_h + \rho)}_{Term~3}.
\end{align*}
\end{theorem}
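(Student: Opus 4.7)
The plan is to decompose $\R_T$ according to whether in round $t$ the algorithm plays the optimistic action $x_{\ti}^{\prime}$ (i.e., $t\in N_T$) or the baseline action $x_{\bti}$ (i.e., $t\in N_T^c$). For each pair $(t,i)$, I would insert the zero-mean pivots $\pm\,\psi_i(\xs,\mu_t)^{\top}\ths$ and $\pm\,\psi_i(x_{\ti},\mu_t)^{\top}\ths$, so that the instantaneous regret $(\phi_i(\xs,c_t)-\phi_i(x_{\ti},c_t))^{\top}\ths$ breaks into two context-sampling terms of the form $(\phi_i(\cdot,c_t)-\psi_i(\cdot,\mu_t))^{\top}\ths$ plus a ``$\psi$-world'' per-round regret $(\psi_i(\xs,\mu_t)-\psi_i(x_{\ti},\mu_t))^{\top}\ths$. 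This mirrors the reduction used in \cite{kirschner2019stochastic,lin2023distributed1} for CBs with context distributions and separates the loss due to context uncertainty from the loss due to learning.

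The two context-sampling terms form a martingale difference sequence with respect to the filtration that exposes $\mu_t$ but not the realization $c_t$ (the action and its feature are fixed before $c_t\sim\mu_t$ is drawn), and each increment has magnitude at most $2$ by Assumption~2. Applying the Azuma-Hoeffding inequality over the $2M|N_T|$ such increments collected in optimistic rounds yields Term~2 with probability at least $1-\delta/2$. For conservative rounds $t\in N_T^c$, the played feature is $\psixc=(1-\rho)\psixbic+\rho\zeta_{\ti}$, so the corresponding $\psi$-world per-round regret expands as
\[
(\psi_i(\xs,\mu_t)-\psi_i(x_{\bti},\mu_t))^{\top}\ths \;+\; \rho\,\psi_i(x_{\bti},\mu_t)^{\top}\ths \;-\; \rho\,\zeta_{\ti}^{\top}\ths.
\]
Using Assumption~\ref{assume:bound} for the first difference and the unit-norm bounds on features and $\ths$ for the remaining two, these pieces are controlled by $\kappa_h$, $\rho r_h$, and $\rho$ respectively, yielding Term~3 after summation.

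For the UCB contribution Term~1, I would apply Lemma~\ref{L4} to place $\xs\in\X_{\ti}$ whenever the algorithm fires line~\ref{line:opt} (the minimum-eigenvalue condition on line~\ref{cons} is then satisfied), so the joint optimism step on line~\ref{line:12} guarantees $\psi_i(x_{\ti}^{\prime},\mu_t)^{\top}\tilde\theta_{\ti}\geqslant\psi_i(\xs,\mu_t)^{\top}\ths$. Combined with the confidence set~\eqref{eq:conf}, Lemma~\ref{confidence}, and Lemma~\ref{L1}, this bounds the per-round $\psi$-world regret by $2\beta_{\ti}\|\psi_i(x_{\ti}^{\prime},\mu_t)\|_{\bar V_{\ti}^{-1}}$. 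Cauchy-Schwarz across the $M|N_T|$ pairs reduces the task to controlling $\sum_{t\in N_T,\,i}\|\psi_i(x_{\ti}^{\prime},\mu_t)\|_{\bar V_{\ti}^{-1}}^{2}$. The hard part will be the distributed elliptical-potential step: because $\bar V_{\ti}$ is only refreshed at synchronization epochs, it lags the fully aggregated Gram matrix across all $M$ agents, and the classical single-agent potential bound does not apply directly. I would adapt the distributed variant of \cite{wang2019distributed}, which uses the trigger $\log(\det V/\det V_{\mathrm{last}})\cdot(t-t_{\mathrm{last}})\geqslant B$ with $B=(T\log MT)/(dM)$ to cap the intra-epoch determinant ratio; this costs an additional $(1+\log M|N_T|)$ factor in the square-root, pulling $\beta_T$ outside the sum then delivers the stated bound.
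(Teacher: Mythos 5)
Your proposal is correct and follows essentially the same route as the paper: the same split of rounds into $|N_T|$ and $|N_T^c|$, the same $\kappa_h+\rho r_h+\rho$ bound on the conservative rounds via Assumption~\ref{assume:bound}, and the same treatment of the agents' term. The only difference is that the paper black-boxes Terms~1 and~2 by citing the unconstrained distributed bound (Proposition~\ref{ECC}, i.e., Theorem~4.1 of \cite{lin2023distributed1}), whereas you unfold that bound's internals (optimism, Azuma--Hoeffding for the context-sampling martingale, and the epoch-based elliptical-potential argument of \cite{wang2019distributed}).
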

\begin{proof}
Let $\tau$ be the last round in which Algorithm~\ref{alg:TV} plays the agent's action, $\tau = \max \{1 \leqslant t \leqslant T \mid x_{\ti} = x_{\ti}^{\prime}\}$. 
By the definition of cumulative regret, we have
\begin{align}
&\R_T= \sum_{i = 1}^M \sum_{t = 1}^T (\pixsc^{\top} \ths - \pixc^{\top} \ths) \nonumber \\
&= \sum_{i = 1}^M \sum_{t \in |N_T|} (\pixsc^{\top} \ths - \pixc^{\top} \ths)\nonumber \\
&+ \sum_{i = 1}^M \sum_{t \in |N_T^c|} (\pixsc^{\top} \ths - \pixc^{\top} \ths) \nonumber \\
&= \sum_{i = 1}^M \sum_{t \in |N_T|} (\pixsc^{\top} \ths - \pixc^{\top} \ths)\nonumber \\
&+ \sum_{i = 1}^M \sum_{t \in |N_T^c|} (\pixsc^{\top} \ths - (1 - \rho) \pixbic^{\top} \ths - \rho \zeta_{\ti}^{\top} \ths) \nonumber \\
&\leqslant \scalefont{0.8}{\underbrace{\sum_{i = 1}^M \sum_{t \in |N_T|} (\pixsc^{\top} \ths - \pixc^{\top} \ths)}_{Agents'~term}+ \sum_{i = 1}^M \sum_{t \in |N_T^c|} (\kappa_{\bti} + \rho \rbti + \rho)} \label{R_1}
\end{align}
\begin{align}
&\leqslant {\scalefont{0.9}\underbrace{4 \beta_T \sqrt{M|N_T|d \log (MT)} + 4\beta_T \sqrt{MTd \log MT} \log(MT)}_{Term~1}}\nonumber \\
&{\scalefont{0.9}+ \underbrace{\sqrt{2 M |N_T| \log \frac{2}{\delta}}}_{Term~2} + \underbrace{\sum_{i = 1}^M \sum_{t \in |N_T^c|} (\kappa_h + \rho r_h + \rho)}_{Term~3}} \label{R_2},
\end{align}
where Eq.~\eqref{R_1} follows from definition of $\kappa_{\bti} := \pixsc^{\top} \ths - \rbti$, and $ - \zeta_{\ti}^{\top} \ths \leqslant |\zeta_{\ti}^{\top} \ths| \leqslant \|\zeta_{\ti}\| \|\ths\| \leqslant 1$. Eq.~\eqref{R_2} is derived through the analysis of the {\em Agents' term} `in Eq.~\eqref{R_1} and $\kappa_{\bti} \leqslant \kappa_h$, $\rbti \leqslant r_h$. The Agents' term captures the unconstrained case studied in our earlier work \cite{lin2023distributed1} (see Theorem~4.1, \cite{lin2023distributed1}, presented in  Proposition~\ref{ECC}).  Given that $|N_T| = T - |N_T^c|$, the remaining section of our proof focuses on determining the upper bound and lower bound for $|N_T^c|$. 
\end{proof}
Consider any round $t$ during which the agent plays the agent's action, i.e., at round $t$, both condition $F = 1$ is met and $\lambda_{\min} (\bar{V}_{\ti}) \geqslant (\frac{2 \beta_{\ti}}{\alpha \rbti})^2$ is satisfied. 
By Lemma~\ref{L4}, if $\lambda_{\min} (\bar{V}_{\ti}) \geqslant (\frac{2 \beta_{\ti}}{\alpha \rbti})^2$, it is guaranteed that $\xs \in \X_{\ti}$. Consequently, $\lambda_{\min} (\bar{V}_{\ti}) \geqslant (\frac{2 \beta_{\ti}}{\alpha \rbti})^2$ is sufficient to guarantee that $\X_{\ti}$ is non-empty. To this end, our analysis of Algorithm~\ref{alg:TV} will henceforth only focuses on the condition $\lambda_{\min} (\bar{V}_{\ti}) \geqslant (\frac{2 \beta_{\ti}}{\alpha \rbti})^2$. 

\begin{lemma} \label{L5}
The smallest eigenvalue of the Gram matrix $\lambda_{\min} (\bar{V}_{\ti})$ satisfies $\lambda_{\min}  (\bar{V}_{\ti})$ is upper bounded by
\begin{align*}
 & (\lambda + M T) + 2 M \rho (\rho - 1) |N_T^c|+ \sqrt{32 M \rho^2 (1 - \rho)^2 |N_T^c| \log (\frac{d}{\delta})}.
\end{align*}
\end{lemma}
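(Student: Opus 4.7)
The plan is to upper bound $\lambda_{\min}(\bar V_{\ti})$ by the trivial majorant $\lambda_{\max}(\bar V_{\ti})$ and then control $\lambda_{\max}$ by splitting $\bar V_{\ti}$ according to whether each past round contributed a learner action or a conservative action. Concretely, I would write $\bar V_{\ti} = \lambda I + S_{\mathrm{opt}} + S_{\mathrm{cons}}$, where $S_{\mathrm{opt}}$ sums the outer products $\psi\psi^\top$ from the at most $M|N_T|$ learner-action rounds that have contributed to $\bar V_{\ti}$, and $S_{\mathrm{cons}}$ sums those from the at most $M|N_T^c|$ conservative rounds. Weyl's inequality reduces the task to bounding $\lambda + \lambda_{\max}(S_{\mathrm{opt}}) + \lambda_{\max}(S_{\mathrm{cons}})$. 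The first piece is immediate: $\lambda_{\max}(S_{\mathrm{opt}}) \leqslant \trace(S_{\mathrm{opt}}) = \sum \|\psi\|_2^2 \leqslant M|N_T|$, using $\|\psi_{\ti}\|_2 \leqslant 1$.

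The main work is on $S_{\mathrm{cons}}$. Plugging $\psi_{\ti} = (1-\rho)\psi_i(x_{b_t,i},\mu_t) + \rho\zeta_{\ti}$ into each rank-one term and expanding gives
\[
S_{\mathrm{cons}} = (1-\rho)^2\!\sum \psi_b \psi_b^\top + \rho^2\!\sum \zeta \zeta^\top + \rho(1-\rho)\!\sum(\psi_b \zeta^\top + \zeta \psi_b^\top).
\]
The first two sums are PSD with trace at most $M|N_T^c|$ (using $\|\psi_b\|_2 \leqslant 1$ and $\|\zeta\|_2 = 1$), so their $\lambda_{\max}$ contributions are $(1-\rho)^2 M|N_T^c|$ and $\rho^2 M|N_T^c|$ respectively. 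Combining with $M|N_T|$ from $S_{\mathrm{opt}}$ and using $|N_T| + |N_T^c| = T$, the deterministic pieces collapse algebraically to $MT + 2M\rho(\rho-1)|N_T^c|$, which matches the first two terms of the claimed bound.

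The third sum is mean-zero and will be controlled by a matrix concentration inequality. Each summand $X_\tau := \rho(1-\rho)(\psi_{b_\tau} \zeta_\tau^\top + \zeta_\tau \psi_{b_\tau}^\top)$ is Hermitian, forms a martingale difference with respect to the natural filtration (since $\zeta_\tau$ has zero mean and is independent of the past by construction), and has operator norm at most $2\rho(1-\rho)$. Applying Tropp's matrix Azuma inequality for Hermitian martingale differences with $n = M|N_T^c|$ summands then yields, with probability at least $1 - \delta$,
\[
\Big\|\sum_\tau X_\tau\Big\|_{\mathrm{op}} \leqslant \sqrt{8\cdot (2\rho(1-\rho))^2 M|N_T^c| \log(d/\delta)} = \sqrt{32 M \rho^2 (1-\rho)^2 |N_T^c| \log(d/\delta)},
\]
after absorbing the factor of $2$ from the Tropp spectrum-wide union bound into the constant. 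Adding this to the deterministic contribution recovers the stated upper bound.

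The main obstacle is the cross-term step: a scalar Azuma applied to one fixed direction would only bound a single projection of $\sum X_\tau$ and would miss the $\log d$ factor that is needed to control $\lambda_{\max}$ of a matrix-valued sum, so the matrix version of Azuma is essential. Keeping the constant $32$ clean requires tracking both the Tropp constant $8$ and the symmetrization factor of $2$ in $\|X_\tau\|_{\mathrm{op}} \leqslant 2\rho(1-\rho)$, which is the piece of the bookkeeping that needs the most care.
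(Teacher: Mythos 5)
Your proposal is correct and follows essentially the same route as the paper: the same split of $\bar{V}_{\ti}$ into learner-action and conservative rounds, the same expansion of the conservative feature vector with the diagonal pieces collapsing to $MT + 2M\rho(\rho-1)|N_T^c|$, and the same application of the matrix Azuma inequality to the mean-zero symmetric cross-terms with $\|U_{\ti}\|\leqslant 2\rho(1-\rho)$, yielding the identical $\sqrt{32M\rho^2(1-\rho)^2|N_T^c|\log(d/\delta)}$ deviation term. The only cosmetic difference is that you pass through $\lambda_{\min}\leqslant\lambda_{\max}$ and trace bounds where the paper uses $\psi\psi^{\top}\preceq I$ and Weyl's inequality directly, which amounts to the same estimate.
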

\begin{proof}
We start with the definitions of $\bar{V}_{\ti}, W_{\isyn}, W_{\inew}$
\begin{align}
&\bar{V}_{\ti} = \lambda I + W_{\isyn} + W_{\inew} \nonumber \\
&= \lambda I + \sum_{i = 1}^M \sum_{t = 1}^{t_{last}} \psixc \psixc^{\top} +\hspace{-2 mm} \sum_{t = t_{last}}^T \hspace{-1 mm} \psixc \psixc^{\top} \nonumber \\
&\preceq \lambda I + \sum_{i = 1}^M \sum_{t = 1}^T \psixc \psixc^{\top}\nonumber \\
&\preceq  \lambda I + \sum_{i = 1}^M \sum_{t \in |N_T|} \psixc \psixc^{\top}+ \sum_{i = 1}^M \sum_{t \in |N_T^c|} ((1 - \rho)  \nonumber \\
&\psixbic + \rho \zeta_{\ti}) ((1 - \rho) \psixbic + \rho \zeta_{\ti})^{\top} \label{eq:newV} \\
&\preceq \lambda I +\hspace{-2 mm} \sum_{i = 1}^M \sum_{t \in |N_T|} I + \sum_{i = 1}^M \sum_{t \in |N_T^c|} ((1 - \rho)^2 I + \rho (1 - \rho) \psixbic \zeta_{\ti}^{\top}\nonumber \\
&+ \rho (1 - \rho) \zeta_{\ti} \psixbic^{\top} + \rho^2 I), \nonumber
\end{align}
where Eq.~\eqref{eq:newV} follows by substituting the feature vectors corresponding to the rounds in which the learner's chosen action is played and those in which the conservative action is played. The last step follows since $\psixc \psixc^{\top} \preceq I$ are rank-1 matrices with $\|\psixc\|_2 \leqslant 1$, and thus their eigenvalues are either $1$ or $0$. By considering the above relationships and using $|N_T| = T - |N_T^c|$, we  obtain 
\begin{equation}
  \nonumber
  \bar{V}_{\ti} \preceq \lambda I + M (T - |N_T^c|) I + M |N_T^c| (2 \rho^2 - 2 \rho + 1) I + \sum_{i = 1}^M \sum_{t \in |N_T^c|} U_{\ti},
\end{equation}
where $U_{\ti} = \rho (1 - \rho) \psixbic \zeta_{\ti}^{\top} + \rho (1 - \rho) \zeta_{\ti} \psixbic^{\top}$. 
By applying Weyl's inequality, 
\begin{align}
\lambda_{\min} (\bar{V}_{\ti}) &\leqslant (\lambda + M T) + 2 M \rho (\rho - 1) |N_T^c| + \lambda_{\max} (\sum_{i = 1}^M \sum_{t \in |N_T^c|} U_{\ti}). \label{V_1_1}
\end{align}
Next, we use the matrix Azuma inequality to determine the upper bound of $\lambda_{\max} (\sum_{i = 1}^M \sum_{t \in |N_T^c|} U_{\ti})$. From the definition of $U_{\ti}$, it follows that $\bE [U_{\ti} |F_{s - 1}] = 0$ and 
\begin{align}
&\max_{\|u\|_2 = \|v\|_2 = 1} u^{\top} U_{\ti} v = \rho (1 - \rho) (u^{\top} \psixbic) (v^{\top} \zeta_{\ti})^{\top}\nonumber\\
&+ \rho (1 - \rho) (u^{\top} \zeta_{\ti}) (v^{\top} \psixbic)^{\top} \label{s_4_1} 
\end{align}
\begin{align}
&\leqslant \rho (1 - \rho) \|\psixbic\| \|\zeta_{\ti}\| + \rho (1 - \rho) \|\zeta_{\ti}\| \|\psixbic\| \nonumber \\
&\leqslant 2 \rho (1 - \rho) \label{s_4_2},
\end{align}
where Eq.~\eqref{s_4_1} follows from Cauchy-Schwarz inequality, Eq.~\eqref{s_4_2} follows from $\|\psixbic\| \leqslant 1$ and $\|\zeta_{\ti}\| = 1$. Further, we utilize the property that for any matrix A, we have $A^2 \preceq \|A\|_2^2 I$, where $\|A\|_2$ is the maximum singular value of A given by $\sigma_{\max} (A) = \max_{\|u\|_2 = \|v\|_2 = 1} u^{\top} A v$. Thus
$$
U_{\ti}^2 \preceq \sigma_{\max} (U_{\ti})^2 I \preceq 4 \rho^2 (1 - \rho)^2 I.
$$
Moreover, by using the triangular inequality, we can express 
$$
\|\sum_{i = 1}^M \sum_{t \in |N_T^c|} U_{\ti}^2\| \leqslant \sum_{i = 1}^M \sum_{t \in |N_T^c|} \|U_{\ti}^2\| \leqslant 4 M \rho^2 (1 - \rho)^2 |N_T^c|.
$$
Now, by applying the matrix Azuma inequality,  for any $c \geqslant 0$, 
$$
\mP \left(\lambda_{\max} (\sum_{i = 1}^M \sum_{t \in |N_T^c|} U_{\ti}) \geqslant c \right) \leqslant d \exp \big(- \frac{c^2}{32 M \rho^2 (1 - \rho)^2 |N_T^c|}\big).
$$
Thus we have, with probability $1 - \delta$, 
$$
\lambda_{\max} (\sum_{i = 1}^M \sum_{t \in |N_T^c|} U_{\ti}) \leqslant \sqrt{32 M \rho^2 (1 - \rho)^2 |N_T^c| \log (\frac{d}{\delta})}.
$$
Combining the above result with Eq.~\eqref{V_1_1}, we obtain 

\begin{align}
\lambda_{\min} (\bar{V}_{\ti}) &\leqslant (\lambda + M T) + 2 M \rho (\rho - 1) |N_T^c|\nonumber\\
&+ \sqrt{32 M \rho^2 (1 - \rho)^2 |N_T^c| \log (\frac{d}{\delta})}. \label{V_1_2}
\end{align}
This concludes the proof of Lemma~\ref{L5}, which demonstrates the upper bound for $\lambda_{\min} (\bar{V}_{\ti})$ based on the stated inequalities. 
\end{proof}
\begin{lemma} \label{L6}
For any, $a, b, c > 0$, if $a x + c \leqslant \sqrt{b x}$, then the following holds for $x \geqslant 0$
$$
\frac{b - 2 a c - \sqrt{b^2 - 4 a b c}}{2 a^2} \leqslant x \leqslant \frac{b - 2 a c + \sqrt{b^2 - 4 a b c}}{2 a^2}.
$$
\end{lemma}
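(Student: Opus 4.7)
The plan is to reduce the stated hypothesis to a scalar quadratic inequality in $x$, then apply the standard quadratic formula.

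First, I would note that the assumption $ax + c \leqslant \sqrt{bx}$ has both sides nonnegative (since $a, b, c > 0$ and $x \geqslant 0$), so squaring preserves the direction of the inequality. Squaring yields $(ax + c)^2 \leqslant bx$, which rearranges to the quadratic inequality
\begin{equation*}
a^2 x^2 + (2ac - b)x + c^2 \leqslant 0.
\end{equation*}
Because $a^2 > 0$, the parabola opens upward, and the solution set of this inequality is precisely the closed interval whose endpoints are the two real roots of the associated quadratic (provided these roots exist).

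Next, I would apply the quadratic formula to $a^2 x^2 + (2ac - b)x + c^2 = 0$. The discriminant simplifies as
\begin{equation*}
(2ac - b)^2 - 4 a^2 c^2 = 4a^2c^2 - 4abc + b^2 - 4a^2 c^2 = b^2 - 4abc,
\end{equation*}
so the two roots are $\dfrac{b - 2ac \pm \sqrt{b^2 - 4abc}}{2a^2}$. Since the hypothesis is assumed to hold for some $x \geqslant 0$, the discriminant $b^2 - 4abc$ must be nonnegative and the roots are real. Concluding that $x$ lies between them gives exactly the stated two-sided bound.

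There is essentially no obstacle here; the only subtlety is justifying that squaring is legitimate and that the discriminant is nonnegative. Both follow immediately from the positivity of $a, b, c, x$ together with the feasibility of the inequality $ax + c \leqslant \sqrt{bx}$, so the proof is a short one-paragraph algebraic manipulation rather than a substantive argument. This lemma will then be used in the subsequent regret analysis to invert the implicit bound on $|N_T^c|$ obtained by combining Lemma~\ref{L5} with the threshold condition $\lambda_{\min}(\bar{V}_{\ti}) \geqslant (2\beta_{\ti}/(\alpha r_{b_t,i}))^2$.
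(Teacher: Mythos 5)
Your proposal is correct and follows essentially the same route as the paper: square the hypothesis, rearrange to the upward-opening quadratic $a^2x^2-(b-2ac)x+c^2\leqslant 0$, and read off the two roots via the quadratic formula. The only difference is that you explicitly justify the squaring step and the nonnegativity of the discriminant, which the paper leaves implicit.
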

\begin{proof}
Let $a, b, c > 0$, and consider the inequality $a x + c \leqslant \sqrt{b x}$. We can square both sides and rearrange them to obtain the quadratic inequality
$
a^2 x^2 - (b - 2 a c) x + c^2 \leqslant 0.
$
Since $a^2 > 0$, we can apply the solution formula for quadratic inequalities to express the solution for $x$
$
\frac{b - 2 a c - \sqrt{b^2 - 4 a b c}}{2 a^2} \leqslant x \leqslant \frac{b - 2 a c + \sqrt{b^2 - 4 a b c}}{2 a^2}.$
\end{proof}

To determine the upper bound of the cumulative regret in Theorem~\ref{prop1}, we determine the upper bounds of $|N_T|$ and $|N_T^c|$. Given that $|N_T^c| = T - |N_T|$, we compute the upper and lower bounds for $|N_T^c|$. This is given in Theorem~\ref{L7}.
\begin{theorem} \label{L7}
In the DiSC-UCB algorithm, Algorithm~\ref{alg:TV}, the upper bound and lower bound of $|N_T^c|$ are given by
\begin{align*}
\scalemath{0.95}{|N_T^c|} &\scalemath{0.95}{ \geqslant \frac{4}{M} \log(\frac{d}{\delta}) - \frac{(\frac{2 \beta_{0, i}}{\alpha \rbtaui})^2 - (\lambda + M T)}{2 M \rho (1 - \rho)} }\\
&\scalemath{0.95}{- \sqrt{\frac{16}{M^2} \log^2(\frac{d}{\delta}) - \frac{4 ((\frac{2 \beta_{0, i}}{\alpha \rbtaui})^2 - (\lambda + M T)) \log(\frac{d}{\delta})}{M^2 \rho (1 - \rho)}}} 
\end{align*}
\begin{align*}
\scalemath{0.95}{|N_T^c| }&\scalemath{0.95}{\leqslant \sqrt{(\frac{4 \sqrt{6}}{M} \log(\frac{d}{\delta}))^2 - \frac{16 ((\frac{2 \beta_{0, i}}{\alpha \rbtaui})^2 - (\lambda + M T)) \log(\frac{d}{\delta})}{M^2 \rho (1 - \rho)}}},
\end{align*}
where $\beta_{0, i} = \sigma \sqrt{2 \log \frac{1}{\delta}} + \lambda^{\frac{1}{2}}$ and $T \leqslant \frac{1}{M} [(\frac{2 \beta_{\taui}}{\alpha \rbtaui})^2 - \lambda]$.
\end{theorem}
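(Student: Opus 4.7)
The plan is to invert the matrix upper bound from Lemma~\ref{L5} by means of the quadratic inversion Lemma~\ref{L6}, anchoring the argument at the last round on which the algorithm plays the learner's own action. Let $\tau \in [T]$ denote that round. Since the algorithm reached the ``if''-branch on line~\ref{cons} of Algorithm~\ref{alg:TV} at time $\tau$, the eligibility condition
$$\left(\frac{2\beta_{\tau,i}}{\alpha \, r_{b_\tau,i}}\right)^{2} \leqslant \lambda_{\min}(\bar V_{\tau,i})$$
must have held. In parallel, Lemma~\ref{L5}, applied at round $\tau$ (using $|N_\tau^c| \leqslant |N_T^c|$ and the fact that the scalar bound is obtained by dropping only nonnegative rank-one contributions), yields the high-probability upper bound
$$\lambda_{\min}(\bar V_{\tau,i}) \leqslant (\lambda+MT) - 2M\rho(1-\rho)\,|N_T^c| + \sqrt{32\,M\,\rho^{2}(1-\rho)^{2}\,|N_T^c|\,\log(d/\delta)}.$$

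Chaining these two facts and rearranging puts $x := |N_T^c|$ into the hypotheses of Lemma~\ref{L6}. Concretely, set $a = 2M\rho(1-\rho)$, $b = 32\,M\,\rho^{2}(1-\rho)^{2}\log(d/\delta)$, and $c = (2\beta_{\tau,i}/(\alpha\, r_{b_\tau,i}))^{2} - (\lambda+MT)$. The stated hypothesis $T \leqslant \frac{1}{M}\bigl[(2\beta_{\tau,i}/(\alpha r_l))^{2}-\lambda\bigr]$ is precisely what guarantees $c \geqslant 0$, so the inequality $ax + c \leqslant \sqrt{bx}$ holds and Lemma~\ref{L6} delivers
$$\frac{b-2ac-\sqrt{b^{2}-4abc}}{2a^{2}} \;\leqslant\; |N_T^c| \;\leqslant\; \frac{b-2ac+\sqrt{b^{2}-4abc}}{2a^{2}}.$$
Direct substitution gives $b/(2a^{2}) = 4\log(d/\delta)/M$, $c/a = \bigl[(2\beta_{\tau,i}/(\alpha r_{b_\tau,i}))^{2}-(\lambda+MT)\bigr]/(2M\rho(1-\rho))$, and a discriminant whose square root collapses to $\sqrt{16 \log^{2}(d/\delta)/M^{2} - 4c\log(d/\delta)/(M^{2}\rho(1-\rho))}$. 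Replacing $\beta_{\tau,i}$ by $\beta_{0,i}$ and $r_{b_\tau,i}$ by $r_l$ via the monotonicity of $\beta_\ti$ in $\det(\bar V_\ti)$ (with initial value $\lambda I$) and Assumption~\ref{assume:bound} then matches the statement's lower bound verbatim.

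The main obstacle is the upper-bound direction. The raw output of Lemma~\ref{L6} is a sum of three pieces (a constant, a linear term in $c$, and a square root), whereas the theorem advertises a single square root with the specific constant $4\sqrt{6}/M$. Collapsing the sum-plus-root expression into one radical requires an AM--QM-type inequality such as $(u+v+w)^{2}\leqslant 3(u^{2}+v^{2}+w^{2})$ applied carefully to the three summands, after which the $\log^{2}$, $\log\cdot c$, and $c^{2}$ contributions have to be reorganized so that the $c^{2}$ part is discarded nonnegatively and the $\log\cdot c$ coefficient rises from $4$ to $16$. A related bookkeeping subtlety is that swapping $\beta_{\tau,i} \to \beta_{0,i}$ decreases $c$ (since $\beta_{\tau,i}\geqslant\beta_{0,i}$), while swapping $r_{b_\tau,i}\to r_l$ increases $c$, so one has to check that each substitution is made only in the direction that relaxes (never tightens) the resulting inequality, and that the net effect is consistent with the upper/lower bound being derived.
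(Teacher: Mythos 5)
Your proposal follows the paper's own proof essentially step for step: anchor at the last round $\tau$ on which the learner's action is played, chain the eligibility condition $\lambda_{\min}(\bar V_{\tau,i})\geqslant(2\beta_{\tau,i}/(\alpha r_{b_\tau,i}))^2$ against the Lemma~\ref{L5} upper bound, convert to $T$ and $|N_T^c|$, invoke Lemma~\ref{L6} with the same $a,b,c$, collapse the upper bound into a single radical via a sum-of-roots inequality (the paper uses $\sqrt{x}+\sqrt{y}\leqslant\sqrt{2x+2y}$ together with $2ac\leqslant b$, rather than your three-term AM--QM variant, but to the same effect), and finally replace $\beta_{\tau,i}$ by $\beta_{0,i}$ using the sign of the partial derivatives in $c$. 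The one caveat is that the theorem retains $\rbtaui$ rather than substituting $r_l$ — the extra substitution you propose would tighten (not relax) the lower bound, exactly the direction issue you yourself flag — so the paper simply does not make it.
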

\begin{proof}
Recall that $\tau$ is the last round in which Algorithm~\ref{alg:TV} plays the agent's action.  Given any round $t$ that the agent's action is played, we have $\lambda_{\min} (\bar{V}_{\ti}) \geqslant (\frac{2 \beta_{\ti}}{\alpha \rbti})^2$. By using $\rbti \geqslant r_l$, it follows that $\lambda_{\min} (\bar{V}_{\ti}) \geqslant (\frac{2 \beta_{\ti}}{\alpha r_l})^2$. From Eq.~\eqref{V_1_2}, we derive
\begin{align*}
&(\frac{2 \beta_{\taui}}{\alpha \rbtaui})^2 \leqslant (\lambda + M \tau) + 2 M \rho (\rho - 1) |N_{\tau}^c|\\
&+ \sqrt{32 M \rho^2 (1 - \rho)^2 |N_{\tau}^c| \log (\frac{d}{\delta})}
\end{align*}
By setting the gradient of $2 \rho (\rho - 1)$ to zero, $2 \rho (\rho - 1) \in [0, \frac{1}{2}]$. Moreover, it is obvious that $|N_T^c| - |N_{\tau}^c| = T - \tau$. Consequently, we can obtain the inequalities
$(\frac{2 \beta_{\taui}}{\alpha \rbtaui})^2 \leqslant (\lambda + M T) + 2 M \rho (\rho - 1) |N_T^c|+ \sqrt{32 M \rho^2 (1 - \rho)^2 |N_T^c| \log (\frac{d}{\delta})} 2 M \rho (1 - \rho) |N_T^c| + (\frac{2 \beta_{\taui}}{\alpha \rbtaui})^2 - (\lambda + M T)$ which is
\begin{align}
&\leqslant \sqrt{32 M \rho^2 (1 - \rho)^2 |N_T^c| \log (\frac{d}{\delta})} \label{V_1_3}.
\end{align}
By using the substitution $|N_T^c| = T - |N_T|$, we have the following equivalent inequality
\begin{align}
(\frac{2 \beta_{\taui}}{\alpha \rbtaui})^2 &\leqslant \lambda + M T (2 \rho^2 - 2 \rho + 1) + 2 M \rho (1 - \rho) |N_T|\nonumber \\
&+ \sqrt{32 M \rho^2 (1 - \rho)^2 (T - |N_T|) \log (\frac{d}{\delta})}. \nonumber
\end{align}
By rearranging the terms of these inequalities we get, 
\begin{align}
&\sqrt{32 M \rho^2 (1 - \rho)^2 (T - |N_T|) \log (\frac{d}{\delta})}\nonumber \
\end{align}
\begin{align}
&\geqslant - 2 M \rho (1 - \rho) |N_T| + (\frac{2 \beta_{\taui}}{\alpha \rbtaui})^2 - (\lambda + M T (2 \rho^2 - 2 \rho + 1)). \label{V_2_3}
\end{align}
If $|N_T^c| = 0$, it implies that the agent's action will be played in every iteration and the baseline constraint is not active. In this case, cumulative regret follows with the findings in Proposition~\ref{ECC} (Theorem~4.1, \cite{lin2023distributed1}), making this a trivial case. Hence, the focus of this paper is when $|N_T^c| \neq 0$. Recognizing that Eq.~\eqref{V_2_3} is not always true for every round, we must have $(\frac{2 \beta_{\taui}}{\alpha \rbtaui})^2 - (\lambda + M T (2 \rho^2 - 2 \rho + 1)) \geqslant 0$. Given $\frac{1}{2} \leqslant 2 \rho^2 - 2 \rho + 1 \leqslant 1$, it follows that $(\frac{2 \beta_{\taui}}{\alpha \rbtaui})^2 - (\lambda + M T) \geqslant 0$. 
Recall Eq.~\eqref{V_1_3}. We know
\begin{align*}
a &:= 2 M \rho (1 - \rho) \geqslant 0 \mbox{~and~} b := 32 M \rho^2 (1 - \rho)^2 \log (\frac{d}{\delta}) \geqslant 0.
\end{align*}
From Theorem~\ref{L7}, we know
$
c := ((\frac{2 \beta_{\taui}}{\alpha \rbtaui})^2 - (\lambda + M T)) \geqslant 0.
$
Using this Eq.~\eqref{V_1_3} can be rewritten as $a |N_T^c| + c \leqslant \sqrt{b |N_T^c|}$, which gives 
$
a^2 |N_T^c|^2 - (b - 2 a c) |N_T^c| + c^2 \leqslant 0.
$
Recognizing the existence of a solution for this inequality and since $|N_T^c| \geqslant 0$, it implies that
$
b - 2 a c \geqslant 0.
$
By using Lemma~\ref{L6}, we get $\frac{b - 2 a c - \sqrt{b^2 - 4 a b c}}{2 a^2} \leqslant |N_T^c| \leqslant \frac{b - 2 a c + \sqrt{b^2 - 4 a b c}}{2 a^2}$. 

First, we analyze the upper bound. Given the fact that $b - 2 a c \geqslant 0$ and using the identity $\sqrt{x} + \sqrt{y} \leqslant \sqrt{2x + 2y}$ for any $x, y \geqslant 0$, it follows that $b - 2 a c + \sqrt{b^2 - 4 abc} = \sqrt{(b - 2ac)^2} + \sqrt{b^2 - 4abc} \leqslant \sqrt{4b^2 - 16abc + 8a^2c^2} \leqslant \sqrt{6b^2 - 16abc}$. Thus the upper bound can be simplified as 
$$
\frac{b - 2 a c + \sqrt{b^2 - 4 a b c}}{2 a^2} \leqslant \sqrt{\frac{3}{2} (\frac{b}{a^2})^2 - \frac{4bc}{a^3}}. 
$$
Subsequently, when considering the partial derivative with regard to $c$ for $\frac{b - 2 a c - \sqrt{b^2 - 4 a b c}}{2 a^2}$ and $\sqrt{\frac{3}{2} (\frac{b}{a^2})^2 - \frac{4bc}{a^3}}$, the gradient of the former is non-negative, while the gradient of the latter is non-positive. Given the fact that $\beta_{\ti}$ is an increasing sequence, we choose to replace $\beta_{\taui}$ with $\beta_{0, i}$ in both the upper and lower bounds. Based on the above analysis and the given definition of $\beta_{0, i} = \sigma \sqrt{2 \log \frac{1}{\delta}} + \lambda^{\frac{1}{2}}$, we obtain the lower and upper bounds for $|N_T^c|$ as given in Theorem~\ref{L7}.
\end{proof}
\begin{remark}
   We note that in the proof of  Theorem~\ref{L7} we get $T \leqslant \frac{1}{M} [(\frac{2 \beta_{\taui}}{\alpha \rbtaui})^2 - \lambda]$. This ensures that the lower bound is always smaller than the upper bound in Theorem~\ref{L7}. 
\end{remark}
Combining Theorem~\ref{prop1} and Theorem~\ref{L7}, we get the following bound for the cumulative regret for Alg.~\ref{alg:TV} (DiSC-UCB). 
\begin{theorem} \label{T1}
The cumulative regret of DiSC-UCB algorithm, Algorithm~\ref{alg:TV}, with $\beta_{\ti} = \beta_{\ti}(\sqrt{1 + \sigma^2}, \delta / 2)$ is bounded at round $T$ with probability at least $1 - M \delta$ by 
\begin{align*}
\R_T &\leqslant 4 \beta_T \sqrt{d c^{\prime} \log (MT)} + 4\beta_T \sqrt{MTd \log MT} \log(MT) \nonumber\\
&+ \sqrt{2 c^{\prime} \log (\frac{2}{\delta})} + c^{\prime \prime} (\kappa_h + \rho r_h + \rho),
\end{align*}
where $c', c'' >0$ are given by $\scalemath{0.9}{c^{\prime} = O(MT)}$ and $\scalemath{0.9}{c^{\prime \prime} = O(\sqrt{MT})}$. 
Further, for $\delta = \frac{1}{M^2 T}$, Algorithm~\ref{alg:TV} achieves a regret of $O(d \sqrt{MT} \log^2 T)$ with $O(M^{1.5} d^3)$ communication cost. 
\end{theorem}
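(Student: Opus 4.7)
The plan is to derive Theorem~\ref{T1} as a direct corollary of Theorems~\ref{prop1} and~\ref{L7}, together with a standard potential-function argument for the synchronization cost. I would first invoke Theorem~\ref{prop1} to decompose $\R_T$ into its three terms. Since Terms~1 and~2 depend on $T$ only through $M|N_T|$, the trivial bound $|N_T| \leqslant T$ lets me set $c' := MT$, which already matches $c' = O(MT)$. For Term~3 I would substitute the upper bound on $|N_T^c|$ from Theorem~\ref{L7}: in the relevant regime the quantity under the square root is dominated by the piece linear in $T$, namely $16(\lambda + MT - (2\beta_{0,i}/\alpha r_l)^2)\log(d/\delta)/(M^2\rho(1-\rho)) = \Theta(T\log(d/\delta)/M)$, so $|N_T^c| = \widetilde{O}(\sqrt{T/M})$ and therefore $c'' := M|N_T^c| = \widetilde{O}(\sqrt{MT})$, yielding the claimed constants.

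Next, I would specialize $\delta = 1/(M^2 T)$. By Lemma~\ref{confidence}, $\beta_T = O(\sqrt{d\log(MT)})$. Inserting this, Term~1 becomes $O(d\sqrt{MT}\log^2(MT))$, while Term~2 is $\widetilde{O}(\sqrt{MT})$ and Term~3 is $\widetilde{O}(\sqrt{MT})$. Hence Term~1 dominates and the regret is $\widetilde{O}(d\sqrt{MT})$, holding with probability at least $1 - M\delta = 1 - 1/(MT)$ via Lemma~\ref{confidence} and a union bound.

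The communication bound follows from a separate argument on the synchronization rule with threshold $B = T\log(MT)/(dM)$. Denote by $P$ the total number of synchronization rounds; each such round costs $O(Md^2)$ because every one of the $M$ agents ships its local Gram matrix ($d\times d$) and its reward vector ($d\times 1$) to the server. The trigger $\log(\det V_{\ti}/\det V_{\mathrm{last}})\cdot(t - t_{\mathrm{last}}) \geqslant B$, together with the standard potential bound $\sum_t \log(\det V_t/\det V_{t-1}) = O(d\log(MT))$ of Abbasi-Yadkori et al.\ and the Cauchy--Schwarz-type pigeonhole argument from the distributed linear bandit analysis of Wang et al., gives $P = O(\sqrt{Md})$. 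Multiplying $P$ by the per-round cost yields the claimed $O(M^{1.5}d^3)$ total communication.

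The hard part will be the logarithmic bookkeeping. One must carefully track how the $\sqrt{\log(MT)}$ factor in $\beta_T$ multiplies the extra $\sqrt{\log(MT)}\log(MT)$ inside Term~1 to produce exactly $\log^2(MT)$ (not a higher power), and verify that all other log contributions arising from Term~2, Term~3, and the choice $\delta = 1/(M^2 T)$ are absorbed into the $\widetilde{O}(\cdot)$. The $P = O(\sqrt{Md})$ bound is standard but must be transcribed with care because the Gram matrices here involve the conservative (convex-combination) feature vectors rather than the agent's intended ones, so one has to check the determinant-increment inequality remains valid under the $\psixc = (1-\rho)\psixbic + \rho\zeta_{\ti}$ substitution.
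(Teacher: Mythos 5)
Your proposal is correct and follows essentially the same route as the paper: combine the decomposition of Theorem~\ref{prop1} with the bounds on $|N_T^c|$ from Theorem~\ref{L7} to identify $c'=O(MT)$ and $c''=O(\sqrt{MT})$, specialize $\delta=1/(M^2T)$ so that Term~1 dominates with $\beta_T=O(\sqrt{d\log MT})$, and invoke the determinant-based epoch-counting argument of Wang et al.\ for the $O(M^{1.5}d^3)$ communication cost. The only nit is notational: the epoch count should read $O(d\sqrt{M})$ (from $\sqrt{TR/B}$ with $R=O(d\log MT)$ and $B=T\log(MT)/(dM)$) rather than $O(\sqrt{Md})$, which is what makes the per-epoch cost $O(Md^2)$ multiply out to $O(M^{1.5}d^3)$ as you conclude.
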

\begin{proof}
From Eq.~\ref{R_2}, we know cumulative regret is bounded 
\begin{align*}
\R_T &\leqslant 4 \beta_T \sqrt{M|N_T|d \log (MT)} + 4\beta_T \sqrt{MTd \log MT} \log(MT)\\
&+ \sqrt{2 M |N_T| \log \frac{2}{\delta}} + M |N_T^c| (\kappa_h + \rho \rbti + \rho).
\end{align*}
From Theorem~\ref{L7}, we get the upper and lower bound for $|N_T^c|$.
Therefore, the cumulative regret is bound by
\begin{align*}
\R_T &\leqslant 4 \beta_T \sqrt{d c^{\prime} \log (MT)} + 4\beta_T \sqrt{MTd \log MT} \log(MT)\\
&+ \sqrt{2 c^{\prime} \log (\frac{2}{\delta})} + c^{\prime \prime} (\kappa_h + \rho r_h + \rho),
\end{align*}
where $c^{\prime}$ and $c^{\prime \prime}$ are as given in \footnote{The constants $c', c'' >0$ is given by
\begin{align*}
&{c^{\prime} = M T - 4 \log(\frac{d}{\delta}) + \frac{(\frac{2}{\alpha \rbtaui} (\sigma \sqrt{2 \log \frac{1}{\delta}} + \lambda^{\frac{1}{2}}))^2 - (\lambda + M T)}{2 \rho (1 - \rho)}}\\
&{+ \sqrt{16 \log^2(\frac{d}{\delta}) - \frac{4 ((\frac{2}{\alpha \rbtaui} (\sigma \sqrt{2 \log \frac{1}{\delta}} + \lambda^{\frac{1}{2}}))^2 - (\lambda + M T)) \log(\frac{d}{\delta})}{\rho (1 - \rho)}} }
\end{align*}
\begin{align*}
&{c^{\prime \prime} = \sqrt{(4 \sqrt{6} \log(\frac{d}{\delta}))^2 - \frac{16 ((\frac{2}{\alpha \rbtaui} (\sigma \sqrt{2 \log \frac{1}{\delta}} + \lambda^{\frac{1}{2}}))^2 - (\lambda + M T)) \log(\frac{d}{\delta})}{\rho (1 - \rho)}}}.
\end{align*}}.
%
Furthermore, given that $c^{\prime} = O(MT)$, $c^{\prime \prime} = O(\sqrt{MT})$, and $\beta_T = O(\sqrt{d \log{\frac{T}{\delta}}}) = O(\sqrt{d \log{MT}})$, for $\delta = \frac{1}{M^2 T}$, the cumulative regret $\R_T$ can be bounded as
\begin{align*}
\R_T &\leqslant 4 \beta_T \sqrt{d c^{\prime} \log (MT)} + 4\beta_T \sqrt{MTd \log MT} \log(MT)\\
&+ \sqrt{2 c^{\prime} \log (\frac{2}{\delta})} + c^{\prime \prime} (\kappa_h + \rho r_h + \rho) \\
&= O(d \sqrt{MT} \log{MT}) + O(d \sqrt{MT} \log^2{MT})\\
&+ O(\sqrt{MT \log{M^2 T}}) + O(\sqrt{MT}) \\
&= O(d \sqrt{MT} \log^2{MT}) = \widetilde{O}(d \sqrt{MT}),
\end{align*}
where the last step is followed by the condition $T > M$. 

{\bf Communication:} The communication cost for our algorithm follows the approach in \cite{wang2019distributed}. 
\end{proof}
\begin{remark}
A naive adaptation of solving the $M$ tasks separately would result in  $\widetilde{O}(dM\sqrt{T})$ regret. In contrast, our proposed method achieves a sublinear regret of  $\widetilde{O}(d\sqrt{MT})$ which validates the effectiveness of the proposed approach.
\end{remark}
 
\section{Unknown Baseline Reward}\label{sec:unknown}
This section considers the unknown baseline reward setting.
Such a setting was first studied in \cite{kazerouni2017conservative} for conservative CB with cumulative performance constraint and in \cite{moradipari2020stage} for CB with stage-wise constraints. We extend the model in \cite{moradipari2020stage} to the distributed setting with unknown contexts. We assume the agents know the lower bound $r_\ell$ in Assumption~\ref{assume:bound}. We describe the modifications to the DiSC-UCB algorithm to handle the unknown baseline (DiSC-UCB-UB). 
Then, we prove that the regret and communication bounds for DiSC-UCB-UB are in the same order as DiSC-UCB.
The approach is primarily based on the observation that $\ths$ lies in $\B_{\ti}$  with high probability. Hence we use the upper bound to replace the value of $\rbti$
$$
\max_{v \in \B_{\ti}} \psixbic^{\top} v \geqslant \psixbic^{\top} \ths = \bE [\pixbic^{\top} \ths] = \rbti.
$$
Hence, the safety constraint can be formulated as
$$
\min_{v \in \B_{\ti}} \pixc^{\top} v \geqslant (1 - \alpha) \max_{v \in \B_{\ti}} \psixbic^{\top} v.
$$

Given the agent's lack of knowledge regarding the reward provided by the baseline policy and its only knowledge of the lower bound $r_l$ we can construct the pruned action set as
\begin{align}
\Z_{\ti} &= \Big\{x_{\ti} \in \A: \psixc^{\top} \hth_{\ti} \geqslant \hspace{-1 mm}\frac{\beta_{\ti}}{\sqrt{\lambda_{\min} (\bar{V}_{\ti})}} \hspace{-0.5 mm}\nonumber\\
&+\hspace{-0.5 mm} (1 - \alpha) \max_{v \in \B_{\ti}} \psixbic^{\top} v \Big\}. \label{UBR_1}
\end{align}
It can be observed that when the condition
\begin{equation}
\lambda_{\min} (\bar{V}_{\ti}) \geqslant (\frac{2 (2 - \alpha) \beta_{\ti}}{\alpha r_l})^2 \label{UBR_2}
\end{equation}
is satisfied, the optimal action $\xs$ is contained within the pruned action set $\Z_{\ti}$ with high probability. The details of these two derivations are presented below. The necessary modifications to the DiSC-UCB algorithm can be made by updating lines~\ref{pas} and~\ref{cons} with Eqs.~\eqref{UBR_1} and \eqref{UBR_2}, respectively. 

\noindent{\bf Construction of the Pruned Action Set $\Z_{\ti}$:}
We analyze two cases. 1)~$\pixc^{\top} \hth_{\ti} \geqslant \psixc^{\top} \hth_{\ti}$ and 2)~$\pixc^{\top} \hth_{\ti} \leqslant \psixc^{\top} \hth_{\ti}$. Let us first address case 1), and explain the process of constructing a subset of actions that satisfy the constraint for all $v \in \B_{\ti}$. Define 
$\Z_{\ti}^1 $
\begin{align}
&\scalemath{0.9}{:= \{x_{\ti} \in \A: \min_{v \in \B_{\ti}} \pixc^{\top} v\geqslant (1 - \alpha) \max_{v \in \B_{\ti}} \psixbic^{\top} v\}}\label{eq:prun5}\\
&\Leftarrow \scalemath{0.9}{\Big\{x_{\ti} \in \A: \min_{v \in \B_{\ti}} \pixc^{\top} (v - \hth_{\ti}) + \psixc^{\top} \hth_{\ti}}\nonumber\\
&\scalemath{0.9}{+ (\pixc - \psixc)^{\top} \hth_{\ti}\geqslant (1 - \alpha) \max_{v \in \B_{\ti}} \psixbic^{\top} v\Big\}} \nonumber \\
&\Leftarrow \Big\{x_{\ti} \in \A: \psixc^{\top} \hth_{\ti}\geqslant  \frac{\beta_{\ti}}{\sqrt{\lambda_{\min} (\bar{V}_{\ti})}}\nonumber\\
&+ (1 - \alpha) \max_{v \in \B_{\ti}} \psixbic^{\top} v\Big\} \label{eq:prun6}
\end{align}
where the last step follows from $\pixc^{\top} \hth_{\ti} \geqslant \psixc^{\top} \hth_{\ti}$ and $\pixc^{\top} (v - \hth_{\ti}) \geqslant - \frac{\beta_{\ti}}{\sqrt{\lambda_{\min} (\bar{V}_{\ti})}}$  from Lemma~\ref{L1}. 
All actions that meet the conditions in  Eq.~\eqref{eq:prun6} also fulfill the requirements of Eq.~\eqref{eq:prun5}, thus ensuring safety. 
Now we consider case~2), where $\pixc^{\top} \hth_{\ti} \leqslant \psixc^{\top} \hth_{\ti}$. In this case, our approach is to first identify actions that violate the baseline constraint, $\bar{\Z}_{\ti}^2$, and then eliminate those actions from the action set $\A$.  
\begin{align}
&\scalemath{0.94}{\bar{\Z}_{\ti}^2 := \Big\{x_{\ti} \in \A: \min_{v \in \B_{\ti}} \pixc^{\top} v\leqslant (1 - \alpha) \max_{v \in \B_{\ti}} \psixbic^{\top} v\Big\}}\label{eq:prun7}\\
&\Leftarrow \Big\{x_{\ti} \in \A: \min_{v \in \B_{\ti}} \pixc^{\top} (v - \hth_{\ti}) + \pixc^{\top} \hth_{\ti} \nonumber\\
&\leqslant (1 - \alpha) \max_{v \in \B_{\ti}} \psixbic^{\top} v\Big\} \nonumber \\
&\Leftarrow \Big\{x_{\ti} \in \A: \psixc^{\top} \hth_{\ti} \leqslant (1 - \alpha) \max_{v \in \B_{\ti}} \psixbic^{\top} v\Big\}\label{eq:prun8} 
\end{align}
where the last step follows from $\pixc^{\top} \hth_{\ti} \leqslant \psixc^{\top} \hth_{\ti}$.
Note that all actions that meet the conditions in  Eq.~\eqref{eq:prun8} also fulfill the requirements of Eq.~\eqref{eq:prun7}, consequently rendering them unsafe. 
By taking the difference between $\A$ and $\bar{\Z}_{\ti}^2$, we determine $\Z_{\ti}^2 = \A \setminus \bar{\Z}_{\ti}^2 $
\begin{align*}
&= \Big\{x_{\ti} \in \A: \psixc^{\top} \hth_{\ti} \geqslant (1 - \alpha) \max_{v \in \B_{\ti}} \psixbic^{\top} v \Big\}.
\end{align*}
%
Given $\Z_{\ti}^1$ and $\Z_{\ti}^2$, we obtain the pruned action set by taking the intersection between $\Z_{\ti}^1$ and $\Z_{\ti}^2$, given by 
\begin{align*}
\Z_{\ti} &= \Big\{x_{\ti} \in \A: \psixc^{\top} \hth_{\ti} \geqslant \hspace{-1 mm}\frac{\beta_{\ti}}{\sqrt{\lambda_{\min} (\bar{V}_{\ti})}} \hspace{-0.5 mm}\\
&+\hspace{-0.5 mm} (1 - \alpha) \max_{v \in \B_{\ti}} \psixbic^{\top} v \Big\}.
\end{align*}

 We describe the modifications to the DiSC-UCB algorithm to handle the unknown baseline case. We refer to the modified algorithm for the unknown case as DiSC-UCB-UB. 
 DiSC-UCB-UB differs from DiSC-UCB only in two lines, mainly in the pruned action set construction.
 In the modified algorithm, DiSC-UCB-UB, we replace line~\ref{pas} of Algorithm~\ref{alg:TV} with the new pruned action set $\Z_{\ti}$ and line~\ref{cons} as {\bf If} {$F = 1$ and $\lambda_{\min} (\bar{V}_{\ti}) \geqslant (\frac{2 (2 - \alpha) \beta_{\ti}}{\alpha r_l})^2$}.
Then, we prove that the regret and the communication bounds for DiSC-UCB-UB are in the same order as those of DiSC-UCB.

Next, we present the key result that bounds the cumulative regret and communication cost for DiSC-UCB-UB.
\begin{theorem} \label{Thm1_1}
The cumulative regret of DiSC-UCB-UB (Unknown baseline setting) with $\beta_{\ti} = \beta_{\ti}(\sqrt{1 + \sigma^2}, \delta / 2)$ is bounded at round $T$ with probability at least $1 - M \delta$ by 
\begin{align*}
\R_T &\leqslant 4 \beta_T \sqrt{d \bar{c}^{\prime} \log (MT)} + 4\beta_T \sqrt{MTd \log MT} \log(MT)\\
&+ \sqrt{2 \bar{c}^{\prime} \log (\frac{2}{\delta})} + \bar{c}^{\prime \prime} (2 \rho + 1 - r_l),
\end{align*}
where $\bar{c}', \bar{c}'' >0$ are given by $\scalemath{0.9}{\bar{c}^{\prime} = O(MT)}$ and $\scalemath{0.9}{\bar{c}^{\prime \prime} = O(\sqrt{MT})}$. Further, for $\delta = \frac{1}{M^2 T}$, DiSC-UCB-UB achieves a regret of $O(d \sqrt{MT} \log^2 T)$ with $O(M^{1.5} d^3)$ communication cost.
\end{theorem}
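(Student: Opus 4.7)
The plan is to mirror the proof of Theorem~\ref{T1} by substituting the known-baseline ingredients with their unknown-baseline analogues and then tracking how the modified pruning condition propagates through the analysis. First, I would establish the two safety guarantees analogous to Lemma~\ref{L2} and Lemma~\ref{L4} in the new setting: (i) any action selected from $\Z_{\ti}$ satisfies $\pixc^\top \ths \geqslant (1-\alpha)\rbti$ with probability at least $1-M\delta$ under the condition $\lambda_{\min}(\bar{V}_{\ti}) \geqslant (\tfrac{2(2-\alpha)\beta_{\ti}}{\alpha r_l})^2$, and (ii) under the same condition, the optimal action $\xs$ lies in $\Z_{\ti}$. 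Both follow from Lemma~\ref{confidence} and Lemma~\ref{L1} applied to the new pruning inequality in Eq.~\eqref{UBR_1}; the extra $(2-\alpha)$ factor in the eigenvalue threshold absorbs the additional $\beta_{\ti}/\sqrt{\lambda_{\min}(\bar{V}_{\ti})}$ slack introduced when $\rbti$ is replaced by $\max_{v\in\B_{\ti}} \psixbic^\top v$, together with the use of $r_l \leqslant \rbti$ in the denominator.

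Next, I would reproduce the regret decomposition of Theorem~\ref{prop1} essentially verbatim, since both the conservative feature vector construction and the switching rule between the learner's action and the baseline action are unchanged by the unknown-baseline modification. This splits $\R_T$ into the distributed linear-UCB term, the martingale remainder coming from context-distribution observation (handled via Azuma--Hoeffding as in Term~2 of Theorem~\ref{prop1}), and the per-round conservative loss summed over $|N_T^c|$. The only change is in the third term: since the agent no longer uses $\rbti$ explicitly, the deterministic upper bound on the per-round conservative loss collapses to $2\rho + 1 - r_l$, giving the coefficient that appears in the statement.

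Then I would re-derive the upper bound on $\lambda_{\min}(\bar{V}_{\ti})$ exactly as in Lemma~\ref{L5} (it depends only on the feature update rule, which is unmodified), and invert it through Lemma~\ref{L6} as in the proof of Theorem~\ref{L7}, but now replacing the threshold $(\tfrac{2\beta_{\taui}}{\alpha \rbtaui})^2$ by $(\tfrac{2(2-\alpha)\beta_{\taui}}{\alpha r_l})^2$. This produces modified bounds on $|N_T^c|$ whose functional form matches Theorem~\ref{L7}; in particular $\bar{c}^{\prime}$ and $\bar{c}^{\prime\prime}$ retain the orders $O(MT)$ and $O(\sqrt{MT})$, respectively, because the new threshold changes only a multiplicative constant inside the argument. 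Plugging the new $|N_T|, |N_T^c|$ bounds into the regret decomposition and choosing $\delta = 1/(M^2 T)$ yields the $\widetilde{O}(d\sqrt{MT})$ rate. The communication bound is inherited directly, since the synchronization mechanism of Algorithm~\ref{alg:TV} is untouched, giving $O(M^{1.5}d^3)$ as in Theorem~\ref{T1}.

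The main obstacle I expect is the careful handling of the replacement $(1-\alpha)\rbti \to (1-\alpha)\max_{v\in\B_{\ti}}\psixbic^\top v$ inside the safety analysis: proving that $\xs$ still lies in $\Z_{\ti}$ now requires controlling both $|\psixsc^\top(\hth_{\ti}-\ths)|$ and $|\max_{v\in\B_{\ti}}\psixbic^\top v - \rbti|$, each of which contributes a term of order $\beta_{\ti}/\sqrt{\lambda_{\min}(\bar{V}_{\ti})}$. Combining these with the $(1-\alpha)$ prefactor is precisely what inflates the constant from $2$ to $2(2-\alpha)$; getting this algebra right, rather than the subsequent regret bookkeeping, is the delicate step.
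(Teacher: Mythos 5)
Your proposal matches the paper's own route essentially step for step: the paper proves the unknown-baseline analogues of Lemmas~\ref{L2} and~\ref{L4} (Lemmas~\ref{L2_2_} and~\ref{L4_2}, where the bound $\max_{v \in \B_{\ti}} \psixbic^{\top} v \leqslant \frac{2\beta_{\ti}}{\sqrt{\lambda_{\min}(\bar{V}_{\ti})}} + \rbti$ produces exactly the $2(2-\alpha)$ factor you describe), redoes the three-term decomposition with Term~3 coefficient $2\rho + 1 - r_l$ (Theorem~\ref{prop1_1}), re-bounds $|N_T^c|$ via the unchanged Lemma~\ref{L5} and the new eigenvalue threshold (Theorem~\ref{unknown_thm}), and inherits the communication argument from Theorem~\ref{T1}. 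Your identification of the $(2-\alpha)$ bookkeeping in the safety lemma as the one genuinely delicate step is also where the paper's added work lies, so the proposal is correct and not a different approach.
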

The proofs of DiSC-UCB-UB results follow a similar approach to DiSC-UCB.
We present all the proofs for DiSC-UCB-UB in Appendix~\ref{app_unknown} (supplementary material). 
\section{Numerical Experiments}\label{sec:sim}
In this section, we validate the performance of our  DiSC-UCB algorithm on synthetic and real-world movielens and LastFM datasets and compare it with the SCLTS algorithm proposed in \cite{moradipari2020stage} and with the unconstrained distributed algorithm DisLinUCB in \cite{wang2019distributed},  DisLSB \cite{wang2019distributed, Jiabin_Shana_ACC}, and Fed-PE \cite{huang2021federated, lin2023federated}.
While SCLTS implements the Thompson algorithm for stage-wise conservative linear CB, DisLinUCB, DisLSB, and Fed-PE considers  
unconstrained distributed linear CB problem.
We note that all the baselines assume contexts are known and $M=1$, single agent.

\subsection {Datasets}
\begin{figure*}[t]
\subcaptionbox{\footnotesize $M=1, \alpha=0.3$ \label{fig:1}}{\includegraphics[scale=0.15]{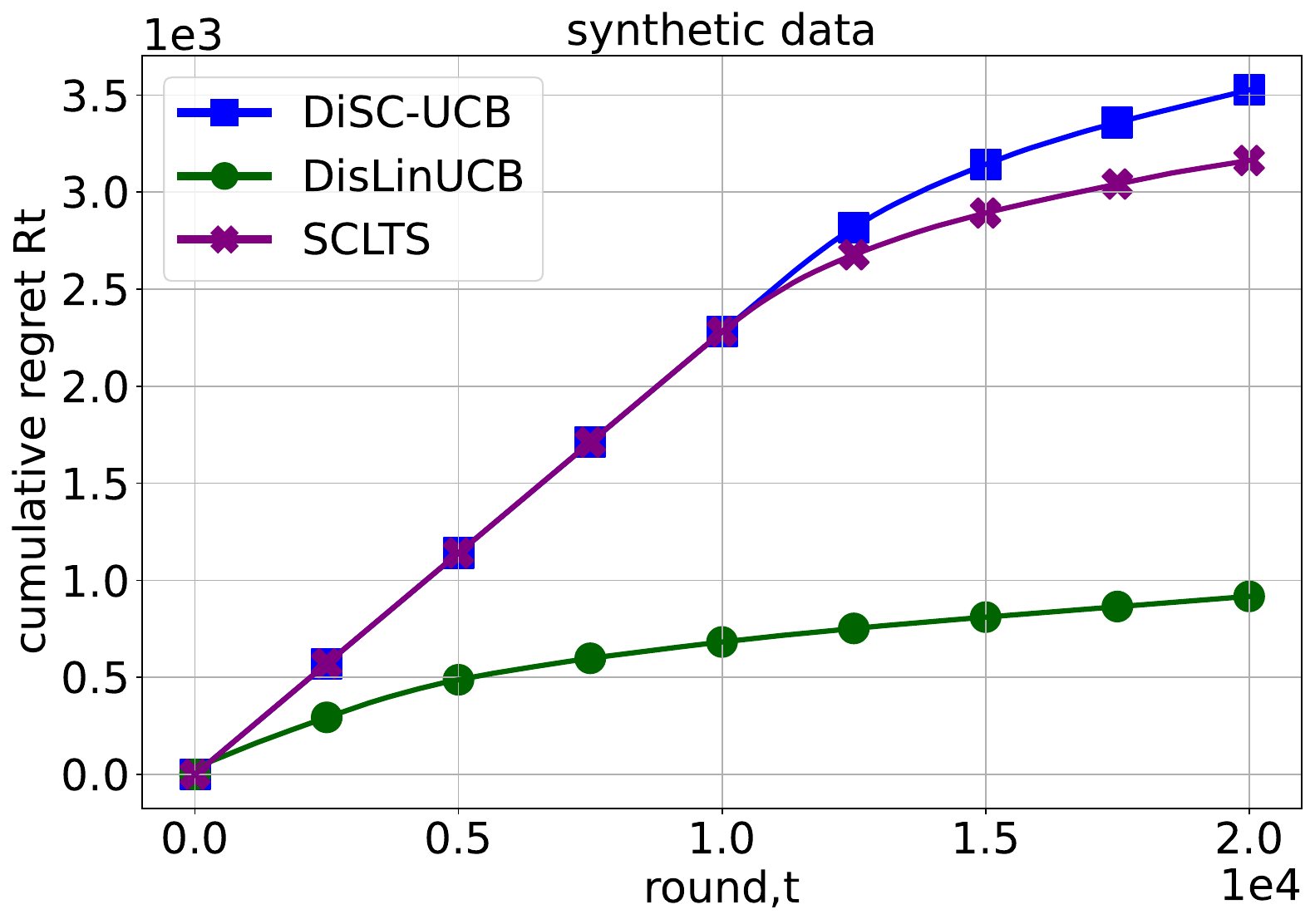}}\hspace{1.0 em}%
\subcaptionbox{\footnotesize $M=1, \alpha=0.3$ \label{fig:2}}{\includegraphics[scale=0.15]{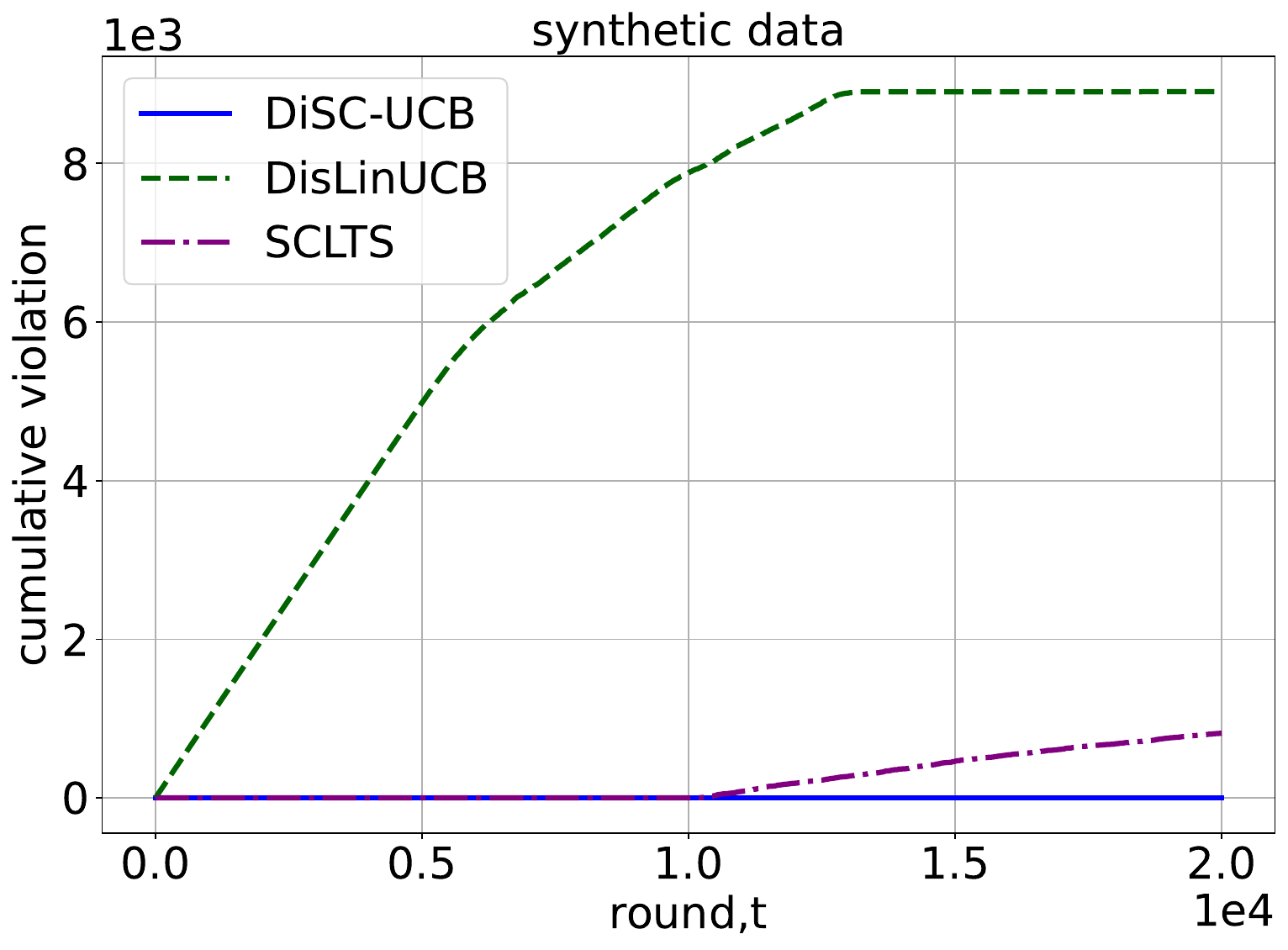}}\hspace{1.0 em}%
\vspace{-2 mm}
\subcaptionbox{\footnotesize $M=3, \alpha=0.25$ \label{fig:1b}}{\includegraphics[scale=0.15]{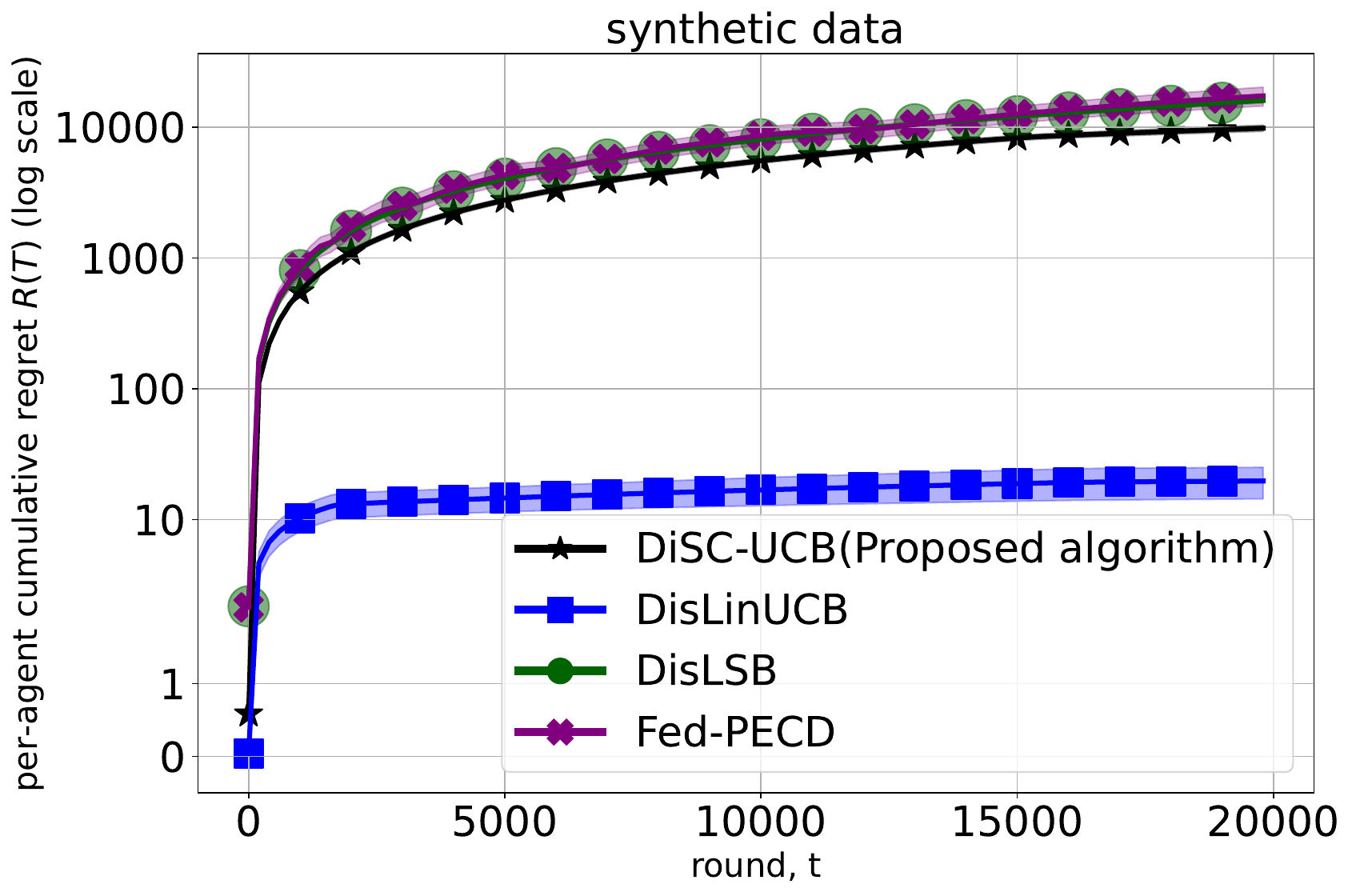}}\hspace{1.0 em}%
\vspace{-2 mm}
\subcaptionbox{\footnotesize $M=3, \alpha=0.25$ \label{fig:2b}}{\includegraphics[scale=0.15]{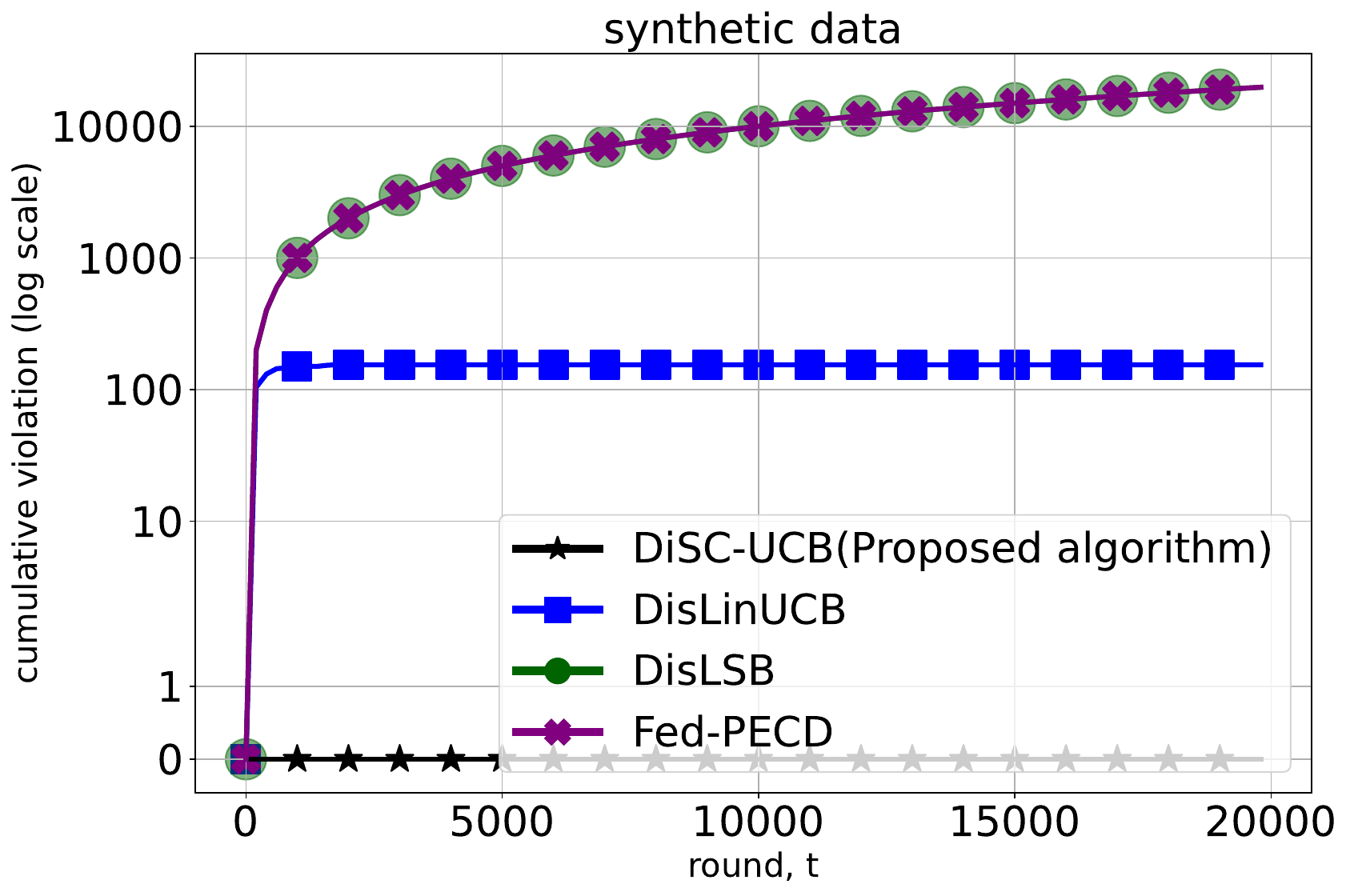}}
\vspace{2 mm}
\caption{\small Comparison of cumulative regret and cumulative violation of
 DiSC-UCB with SCLTS \cite{moradipari2020stage}, DisLinUCB \cite{wang2019distributed}, DisLSB \cite{Jiabin_Shana_ACC}, and Fed-PE \cite{huang2021federated} modified for unknown context Fed-PECD using \cite{lin2023federated}. Synthetic data: In Figs.~\ref{fig:1}, \ref{fig:2} we set the parameters as $\lambda=1$, $d=2$, $R=1$, $K=40$, $M=1$, $\ths = [0.9, 0.4]$, and noise variance $=2.5 \times 10^{-3}$, and the baseline action is set by the 10$^{\rm th}$ best action. In Figures~\ref{fig:1b}, \ref{fig:2b}, we set the parameters as: $\lambda=0.1$, $R=0.1$, $d=2$, number of contexts $|\C|=100$,  number of actions $K=10$, and number of agents $M=3$. We considered a noise with a mean of $0$ and a variance of $0.01$  to obtain $\psi$ from $\phi$. The true parameters are $\theta_1^\star= [0.9, 0.4]$, $\theta_2^\star= [0.9, 0]$, and $\theta_3^\star= [0, 0.4]$. The baseline is the 2nd best action, and $\alpha=0.25$. All plots were averaged over 100 independent trials.
}\label{fig:main1}
\end{figure*}

\begin{figure*}[t]
\subcaptionbox{\footnotesize $M=1, \alpha=0.3$ \label{fig:7}}{\includegraphics[scale=0.21]{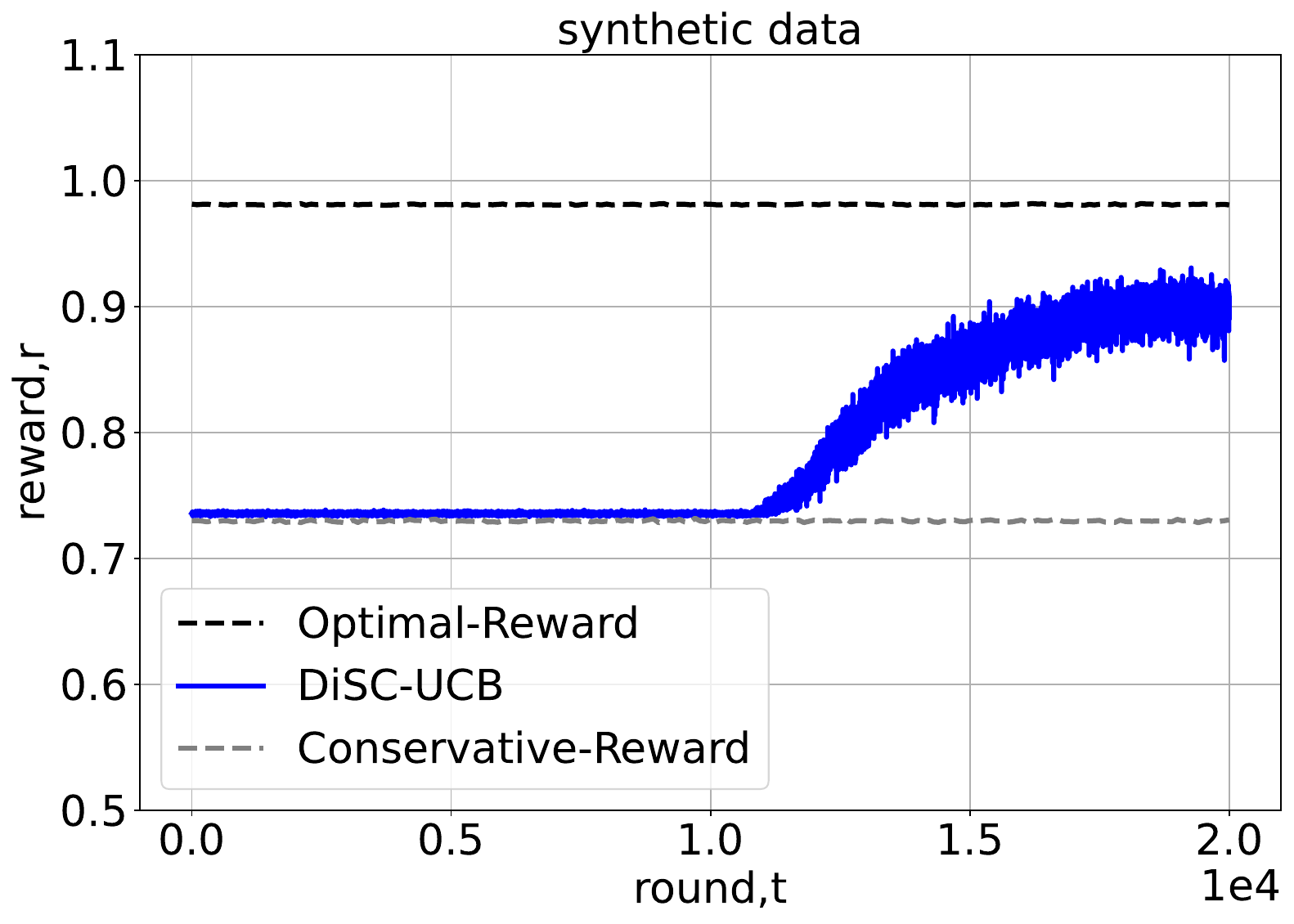}}\hspace{1.2 em}%
\subcaptionbox{\footnotesize $M=1, \alpha=0.3$ \label{fig:8}}{\includegraphics[scale=0.21]{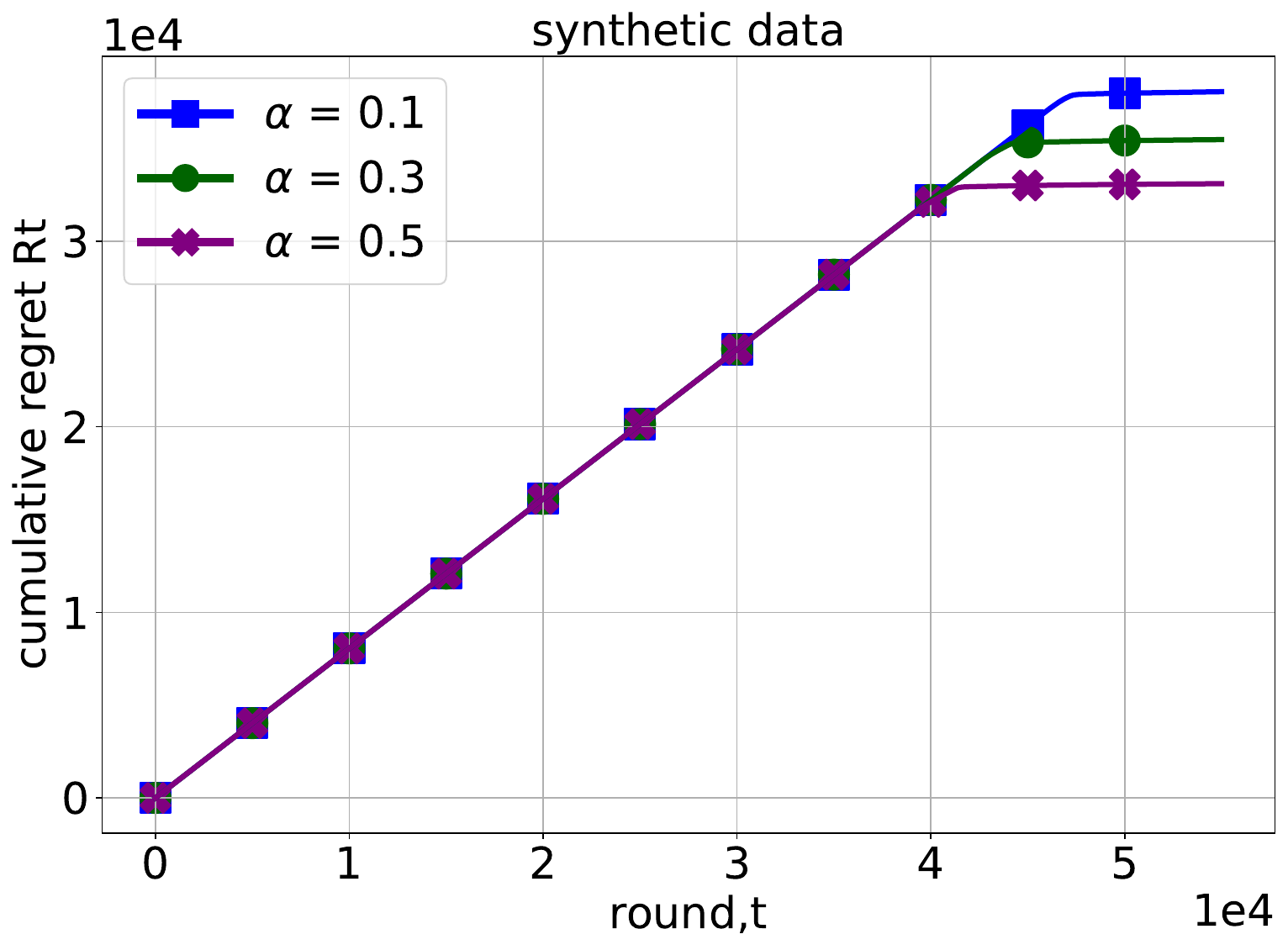}}\hspace{1.2 em}%
\subcaptionbox{\footnotesize $\alpha=0.3$ \label{fig:3}}{\includegraphics[scale=0.21]{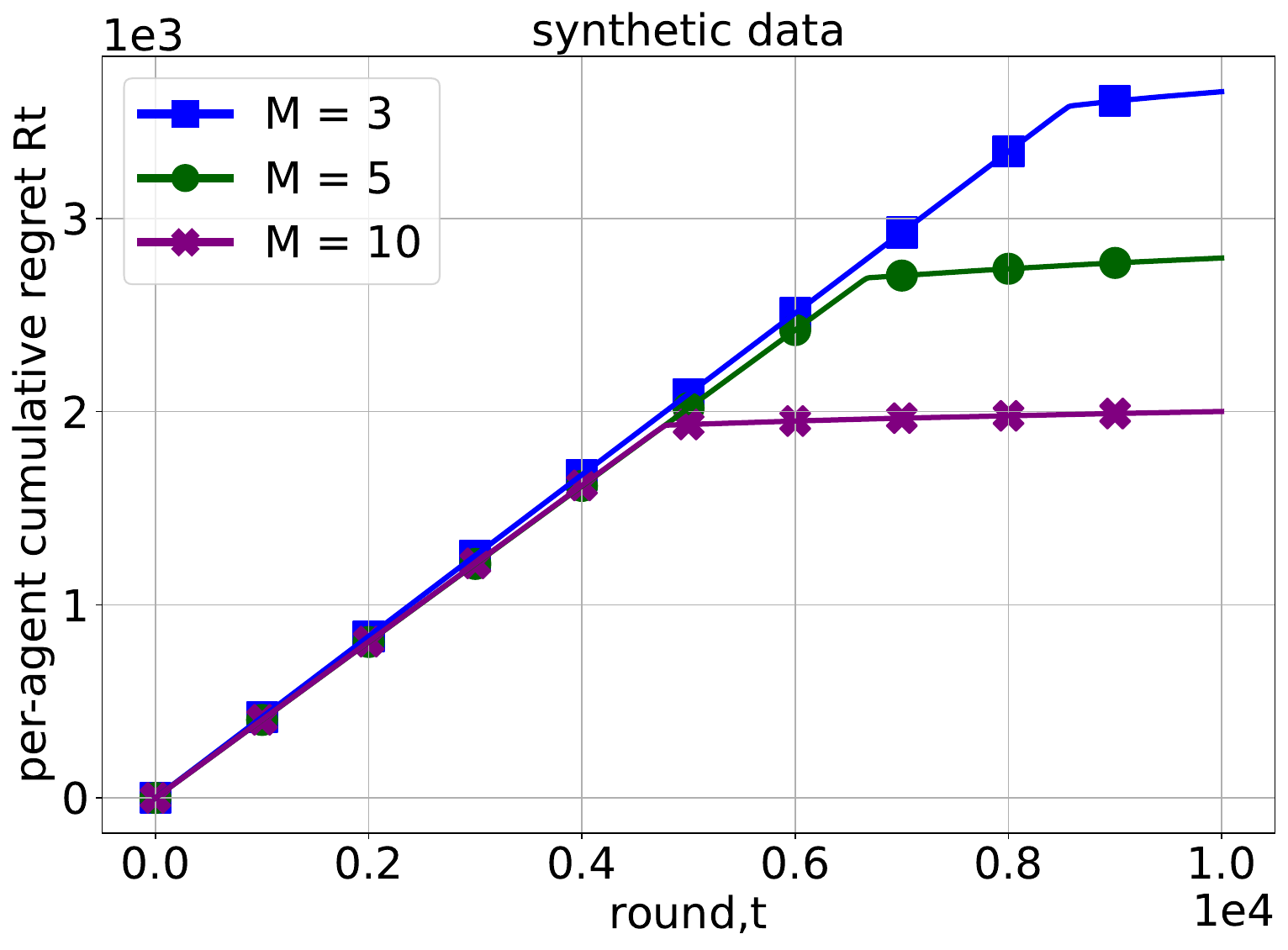}}\hspace{1.2 em}%
\subcaptionbox{\footnotesize $M=1, \alpha=0.5$ \label{fig:4}}{\includegraphics[scale=0.21]{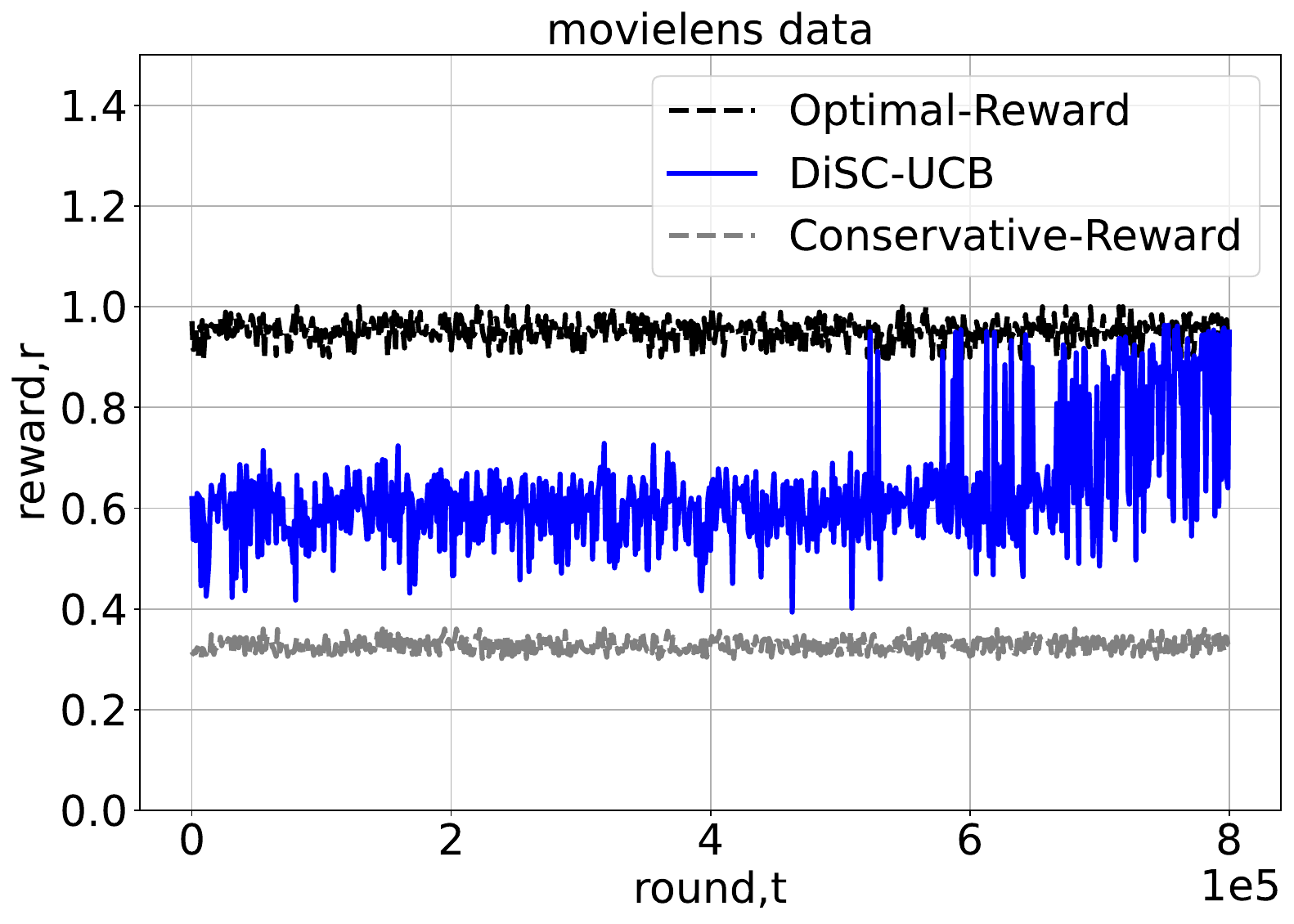}}\hspace{1.2 em}%
\subcaptionbox{\footnotesize $M=3$ \label{fig:6}}{\includegraphics[scale=0.21]{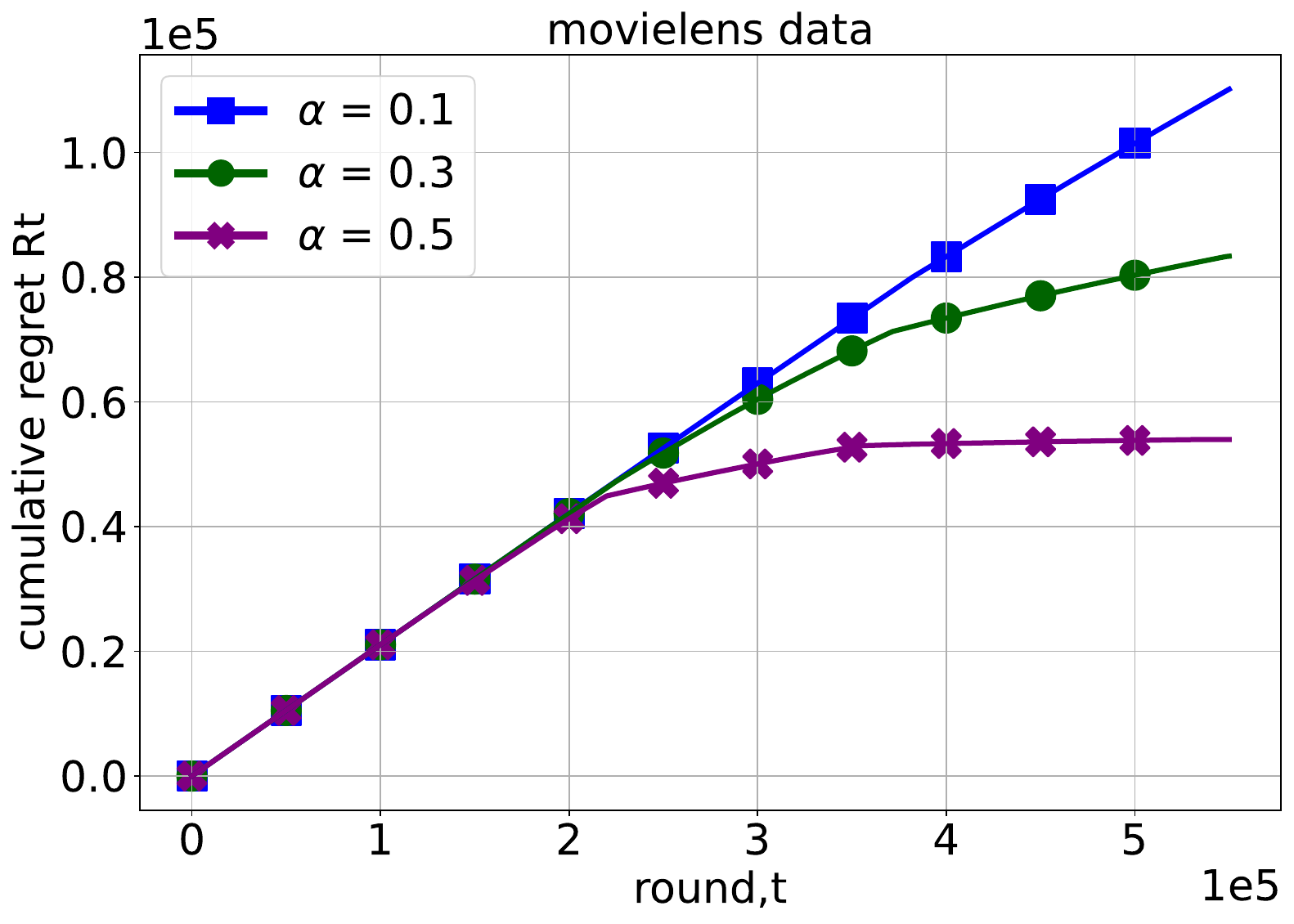}}\hspace{1.2 em}%
\subcaptionbox{\footnotesize $\alpha=0.5$ \label{fig:5}}{\includegraphics[scale=0.21]{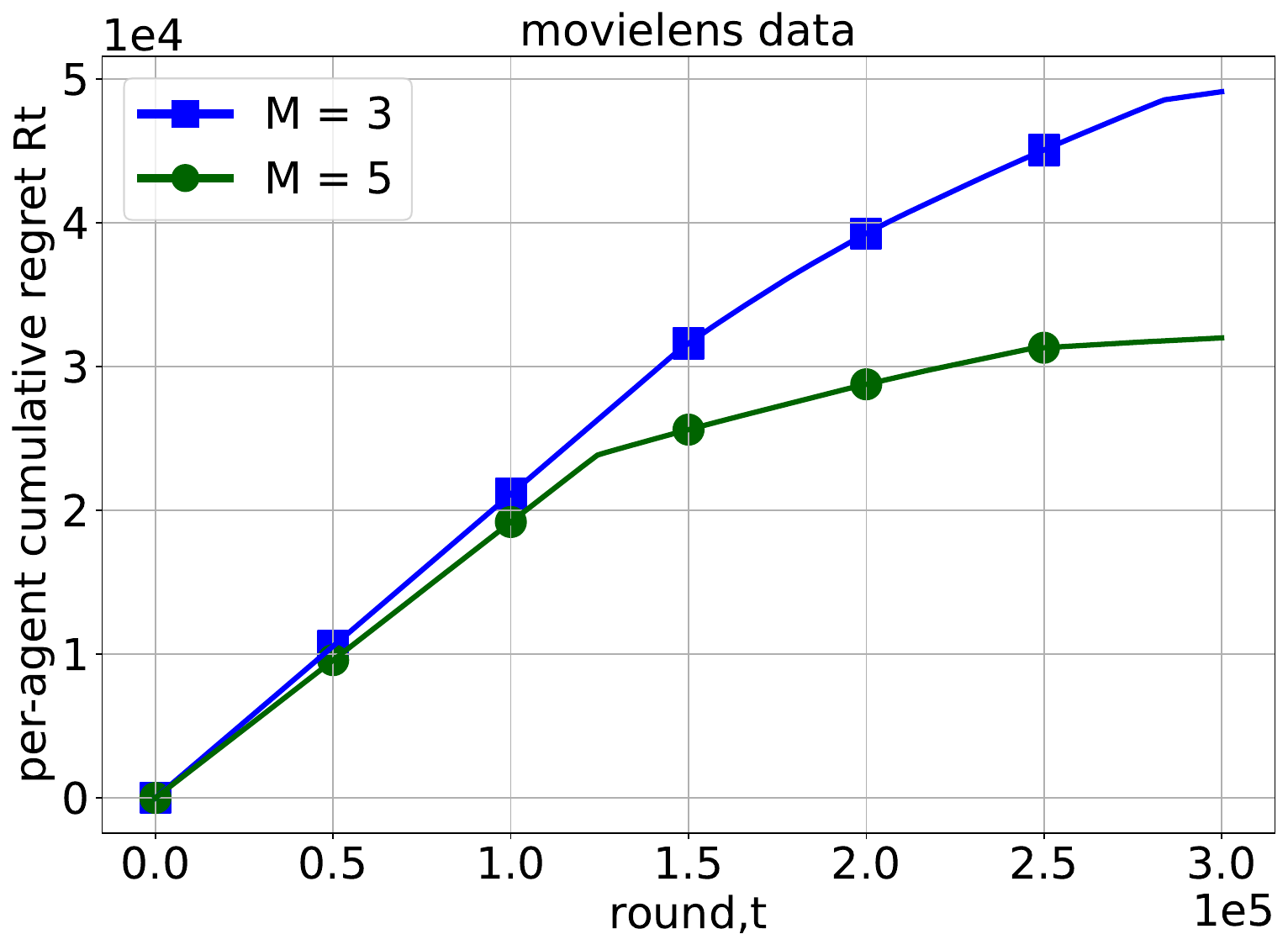}}\hspace{1.2 em}%
\subcaptionbox{\footnotesize $M=1, \alpha=0.7$ \label{fig:2_a}}{\includegraphics[scale=0.21]{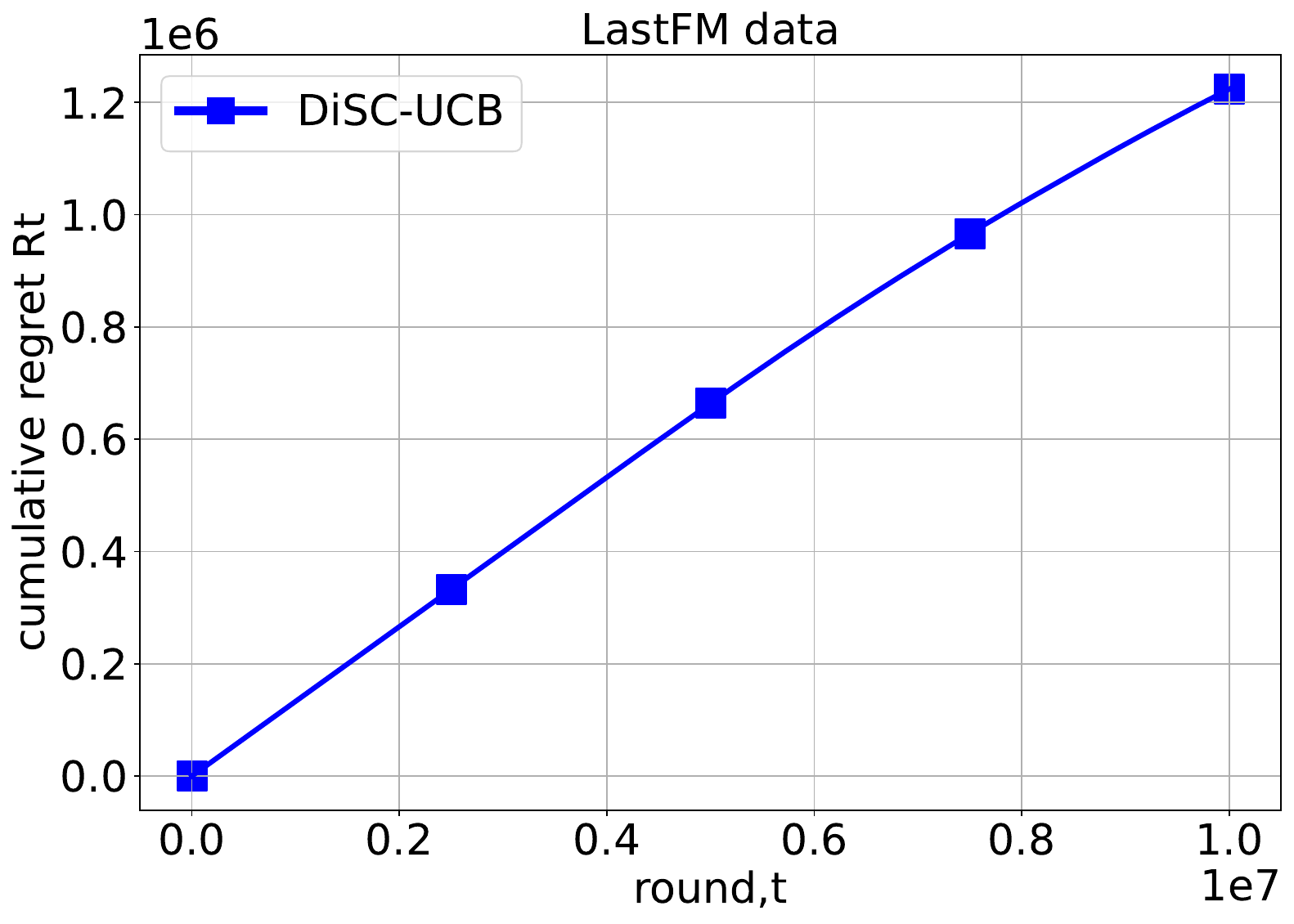}}\hspace{1.2 em}%
\subcaptionbox{\footnotesize $M=3$ \label{fig:2_b}}{\includegraphics[scale=0.21]{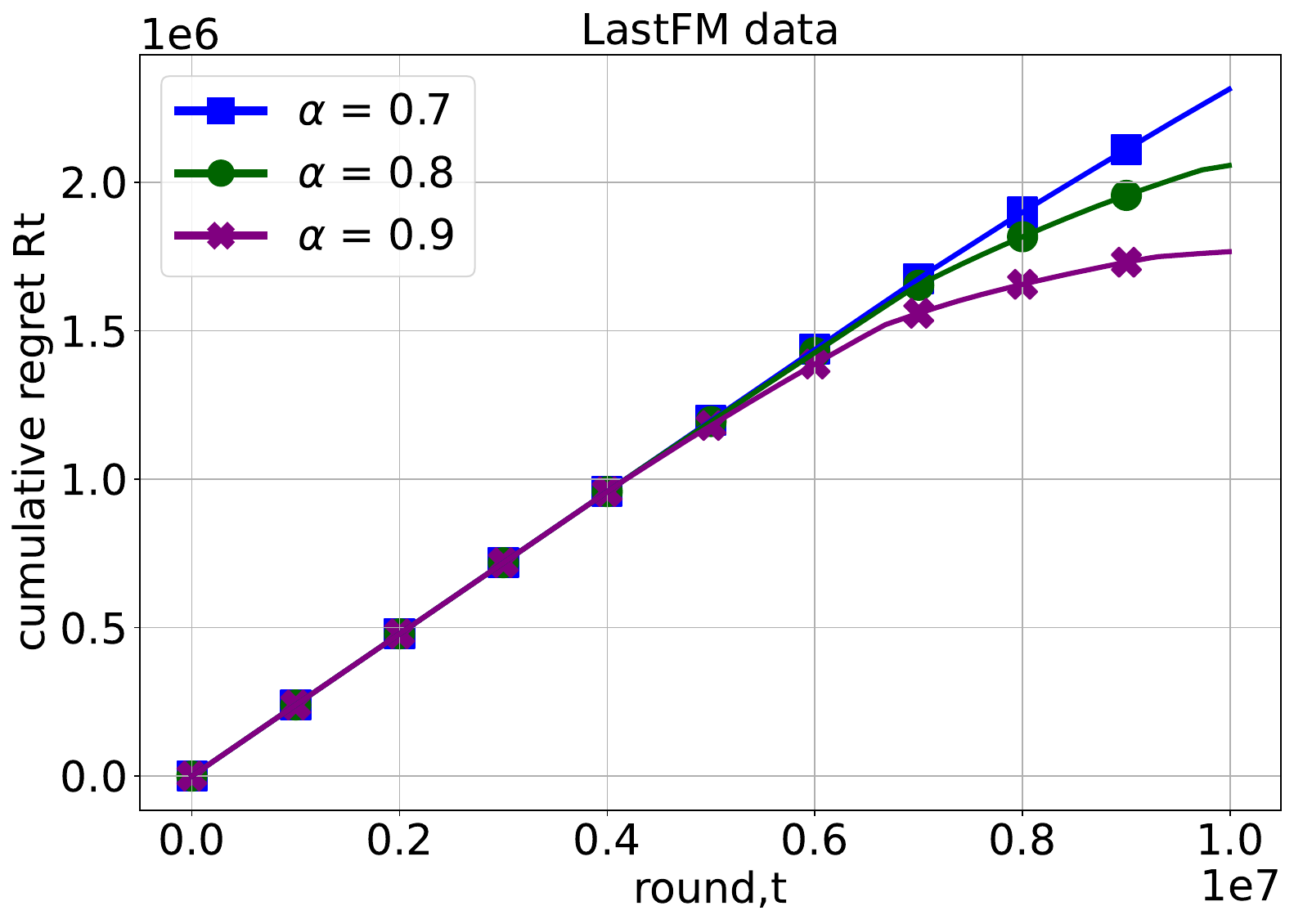}}\hspace{1.2 em}%
\subcaptionbox{\footnotesize $\alpha=0.7$ \label{fig:2_c}}{\includegraphics[scale=0.21]
{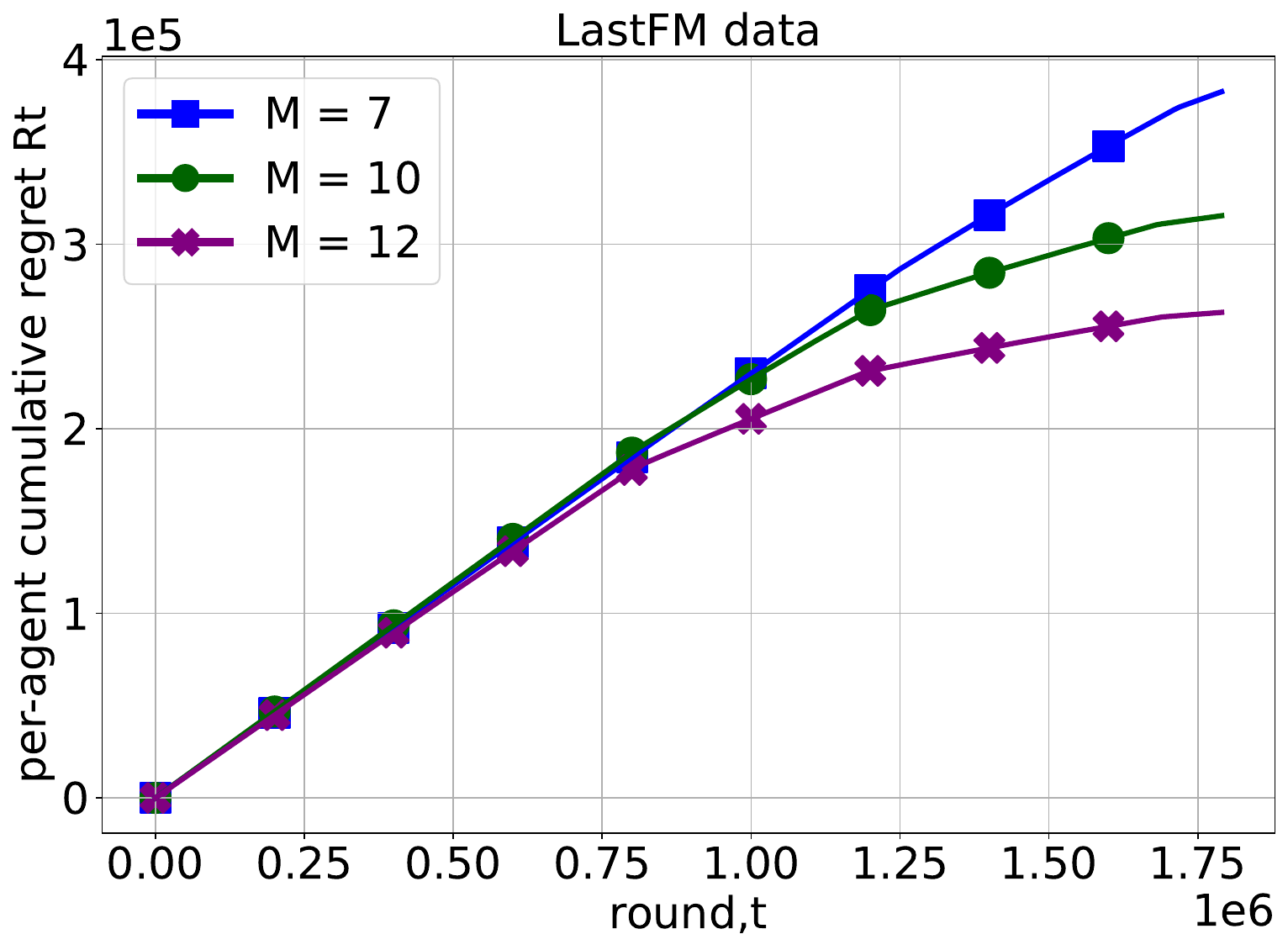}}
\caption{\small 
 Synthetic data: In Figure~\ref{fig:7}, we set the parameters as $\lambda=1$, $d=2$, $R=1$, $K=40$, $M=1$, $\ths = [0.9, 0.4]$, and noise variance $=2.5 \times 10^{-3}$, and the baseline action is set by the 10$^{\rm th}$ best action . In  Figure~\ref{fig:8}, the parameters are set as $R=0.1$, $K=90$, $M=3$, $\ths = [1, 1]$, noise variance $=10^{-4}$, and the baseline action of a particular round is set as the 80$^{\rm th}$ best action of that round. The $\alpha$ values are varied as $\alpha = \{0.1, 0.3, 0.5\}$. In Fig.~\ref{fig:3}, $R=1$, $K=90$, $M=\{3,5,10\}$, $\ths = [1, 1]$, the reward parameters for the different tasks $\ths_i \in \Theta=\{[1, 1], [1, 0],[0, 1]\}$, noise variance $=10^{-2}$, and  baseline is the 30$^{\rm th}$ best action. Movielens data: In Figs.~\ref{fig:4}, \ref{fig:6}, and \ref{fig:5},  $R=0.1$, $K=50$,  noise variance$=10^{-2}$, $\ths = \frac{1}{\sqrt{3}}[1, 0, 0, 0, 1, 0, 0, 0, 1]$, and the baseline action is set as the 40$^{\rm th}$ best action. 
 LastFM data: In Figs.\ref{fig:2_a}, \ref{fig:2_b}, \ref{fig:2_c}, $R=\lambda=0.05$, $K=50$,  noise variance$=10^{-3}$, $\ths = \frac{1}{\sqrt{3}}[1, 0, 0, 0, 1, 0, 0, 0, 1]$, and the baseline action is set as the 5$^{\rm th}$ best action. 
}\label{fig:main2}
\end{figure*}

\noindent{\bf Movielens data:} 
We used movielens-100K data \cite{harper2015movielens} to evaluate the performance of our algorithm. 
 We first get the rating matrix $r_{x, c} \in \mathbb{R}^{943 \times 1682}$. The entries of the rating are between $0$ and $5$, which we normalized to be in $[0,1]$. 
We randomly choose a set of $50$ movies and a set of $100$ users for our analysis, i.e., $|\C|=100$ and $K=50$.
We then performed a non-negative matrix factorization of $r_{x, c} = WH$, where $W \in \mathbb{R}^{100 \times 3}$, $H \in \mathbb{R}^{3 \times 50}$. 
To construct the feature vector $\phi$ for the $g^{\rm th}$ user and $j^{\rm th}$ movie, we choose the $g^{\rm th}$ column of matrix $W$, $W_g \in \mathbb{R}^3$ and the $j^{\rm th}$ row of $H$, $H_j^\top \in \mathbb{R}^{1\times 3}$. We perform the outer product $W_g H_j^\top$ to obtain a $3 \times 3$ matrix. We vectorized this matrix to obtain $\phi \in \mathbb{R}^9$. We considered a noise with a mean of $0$ and a variance of $0.01$  to obtain $\psi$ from $\phi$. We set the number of agents as $M=3$, and the $\ths_1= (1/\sqrt{3})[1, 0,0, 0, 1,0,0,0,1]$, $\ths_3= (1/\sqrt{3})[1, 0,0, 0, 0,0,0,0,1]$, and $\ths_3= (1/\sqrt{3})[1, 0,0, 0, 1,0,0,0,0]$.
We transformed the multi-task problem with heterogeneous reward parameter $\Theta=\{\ths_1, \ths_2, \ths_3\}$ to a distributed CB problem with common reward parameter $\ths=\ths_1$ and heterogeneous feature vectors $\phi_i$ for agent $i$ by setting the respective feature in $\phi$ to zero.\\

{\noindent \bf LastFM data:}
The LastFM dataset is derived from the online music streaming service Last.fm \cite{fm} and includes data from 1892 users and 17632 artists. We consider each artist as an individual action. The reward is $1$ if the user has listened to an artist at least once, and $0$ otherwise. We keep only those users who have received at least $30$ interaction rewards. Consequently, we get a rating matrix $r_{x, c} \in \mathbb{R}^{741 \times 538}$. Following that, we performed non-negative matrix factorization on 
the rating matrix, denoted as $r_{x, c} = WH$, where $W \in \mathbb{R}^{741 \times 3}$ and $H \in \mathbb{R}^{3 \times 538}$. To construct the feature vector $\phi$ for the $g^{\rm th}$ user and $j^{\rm th}$ artist, we choose the $g^{\rm th}$ column of matrix $W$, $W_g \in \mathbb{R}^3$ and the 
$j^{\rm th}$ row of $H$, $H_j^\top \in \mathbb{R}^{1\times 3}$. We perform the outer product $W_g H_j^\top$ to obtain a $3 \times 3$ matrix. We vectorized this matrix to obtain $\phi \in \mathbb{R}^9$. 
We considered a noise with a mean of $0$ and a variance of $10^{-3}$  to obtain $\psi$ from $\phi$. The baseline action is set as the $5^{th}$ best action and $\lambda = R = 0.05$. 

\subsection{Comparison of DiSC-UCB with Existing  Constrained and Distributed Approaches}
In Figs.~\ref{fig:1},~\ref{fig:2}, we set $M=1$ and compare the performance of DiSC-UCB with SCLTS and DisLinUCB. SCLTS studied the constrained bandit problem with a single-agent and known contexts. To this end, we set $M=1$ in this comparison.
We report the cumulative expected regret and the cumulative number of constraint violations. SCLTS is the constrained approach developed in \cite{moradipari2020stage} for single-bandit problems with a known context. 
Since the contexts are unknown SCLTS algorithm uses the noisy feature vectors for constructing the safe action set, which will lead to constraint violations as demonstrated in the toy example in Section~\ref{sec:rel} and in Fig.~\ref{fig:2}. DisLinUCB does not cater to constraints and, hence, will have more violations. On the other hand, SCLTS and DisLinUCB present smaller regrets given that they are loosely constrained and unconstrained, respectively, and hence perform more explorations in the initial stage. DiSC-UCB plays conservative actions in the initial rounds, resulting in a comparatively larger regret, however, zero violations as expected as shown in Fig.~\ref{fig:1} and Fig.~\ref{fig:2}.  In Figs.~\ref{fig:1b} and~\ref{fig:2b}, we compared our proposed DiSC-UCB algorithm with three other benchmark distributed/federated approaches. Figures show that the three benchmarks, which are unconstrained, result in large constraint violations, and hence, these approaches are unsuitable for the hard-constrained problem considered in this paper.


\subsection{Regret versus System Parameters}
In addition to the results reported in the main paper, we plot the regret by varying $\alpha$ and the reward at every round of learning to ensure that the constraints are met. 
We present the plots showing the variation of the cumulative regret with respect to round $t$  for different values of $\alpha$ for the synthetic data in Figure~\ref{fig:8}.  We observe that as the value of $\alpha$ increases, the cumulative regret decreases, which is expected since a larger value of $\alpha$ implies less strict performance constraint. In Figure~\ref{fig:7}, we represent the reward plot. From the plot, we observe that the per-step reward is always larger than the baseline reward shown in the dotted line, which validates that our proposed algorithm DiSC-UCB satisfies the performance constraint at every stage of learning.  It also shows the improvement in the reward as the learning round progresses.
Figure~\ref{fig:3} demonstrates how agent collaboration improves the per-agent cumulative regret. 
For movielens data, Fig.~\ref{fig:4} shows the reward plot at each round verifying constraints are met at every round. Figs.~\ref{fig:5} and \ref{fig:6} provide the cumulative regret as $M$ and $\alpha$ are varied. For the LastFM data, Figs.~\ref{fig:2_a} presents the reward plot and  Figs.~\ref{fig:2_b} and~\ref{fig:2_c} rovide the cumulative regret as $M$ and $\alpha$ are varied.

\section{Conclusion}\label{sec:conclusion}
We studied the multi-task stochastic linear CB problem with stage-wise constraints when the agents observe only the context distribution and the exact contexts are unknown.  We proposed a UCB algorithm, referred to as DiSC-UCB.
For $d$-dimensional linear bandits, we prove an  $O(d\sqrt{MT}\log^2 T)$ regret bound and an $O(M^{1.5}d^3)$ communication bound on the algorithm.
We extended to the setting where the baseline rewards are unknown and showed that the same bounds hold for regret and communication.
We empirically validated the performance of our algorithm on synthetic data and on real-world movielens-100K and LastFM data and compared with benchmarks, SCLTS and DisLinUCB. As part of the future work, we plan to investigate budget constraints where the goal is to minimize the regret before using the total budget.

\appendices
\renewcommand\thetheorem{\Alph{section}.\arabic{theorem}}
\section{Toy Example}\label{sec:rel}
We present a toy example in Figure~\ref{Fig:toy}. 
Consider $M=3$ agents with  reward parameters $\theta_1^{\star} = [1, 2]^{\top}$, $\theta_2^{\star} = [1]$, and $\theta_3^{\star} = [2]$, and $\alpha \in [0, 1]$. The corresponding index set is $\I = \{\{1, 2\}, \{1\}, \{2\}\}$. That is, the number of features is $2$, and we consider $3$ tasks. While all three features are relevant to task~1, only the first feature is relevant to task~2, and only the second feature is relevant to task~3. For a context $c \in \C$, let each agent $i$ has three actions, denoted by $\A=\{x_i, x_i^{\prime}, x_i^{\prime \prime}\}$. 
\begin{figure}[t]
\vspace{-6 mm}
\begin{minipage}{\linewidth}
      \centering
      \begin{minipage}{0.6\linewidth}
          \begin{figure}[H]
              \includegraphics[width=1\linewidth, height=0.6\linewidth]{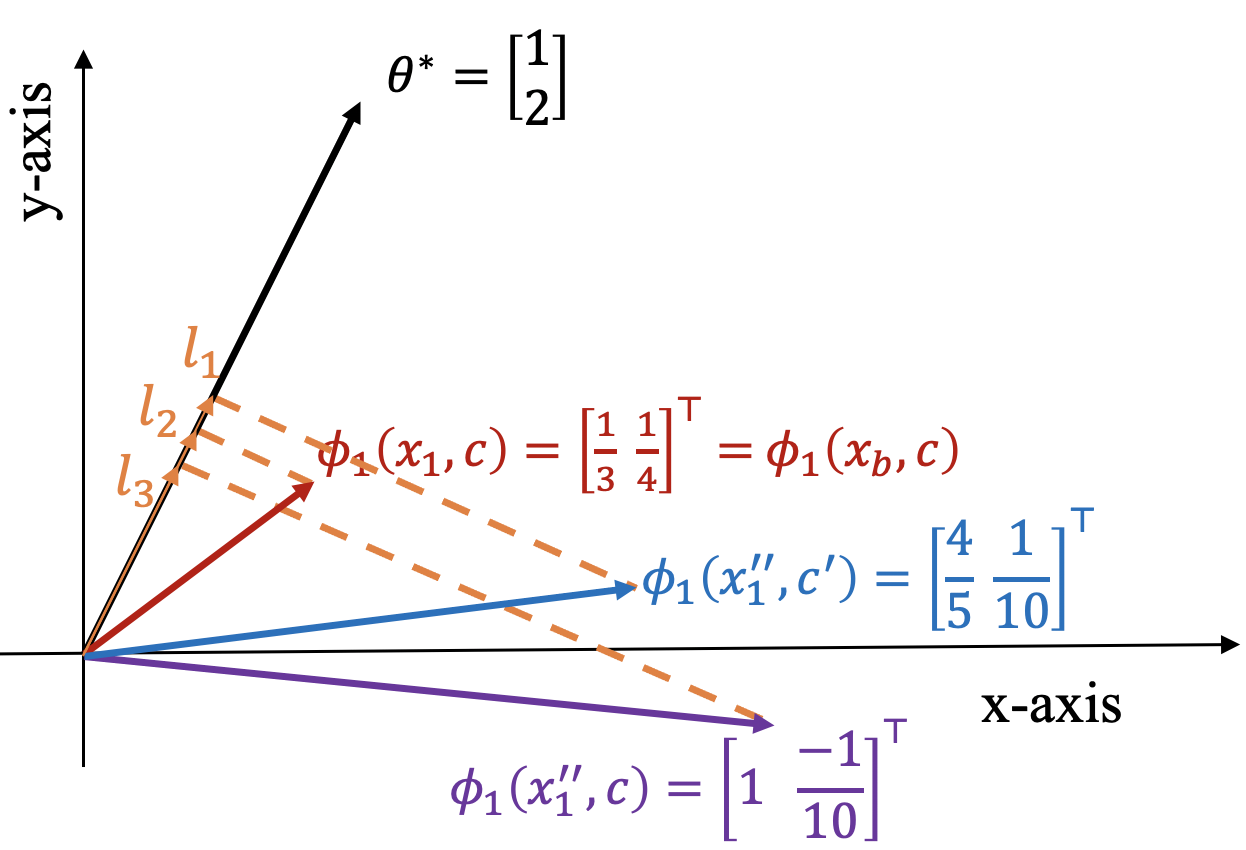}
          \end{figure}
      \end{minipage}
      \hspace{-0.08\linewidth}
      \begin{minipage}{0.45\linewidth}
          \begin{figure}[H]
              \includegraphics[width=0.8\linewidth, height=0.4\linewidth]{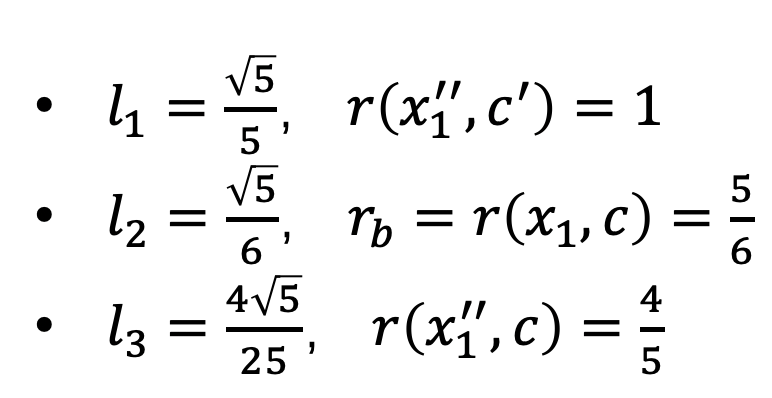}
          \end{figure}
      \end{minipage}
  \end{minipage}
  \vspace{-3 mm}
  \caption{\small An example demonstrating how an unsafe action $x_1''$ under true context appears to be safe under noisy context observation leading to incorrect conclusions about safe actions.}\label{Fig:toy}
  \vspace{-3 mm}
\end{figure}
Let the feature vector set  $\Phi_i$ for agent $i$ is
\vspace{-3 mm}
\begin{align*}
\scalefont{0.75}{
\begin{array}{c}
\arraycolsep=2pt 
\begin{array}{ccc}
\hspace{1.7 em} \scriptstyle{\phi_{x_1, c}} & \scriptstyle{\phi_{x_1', c}} & \scriptstyle{\phi_{x_1'', c}} \\
\end{array} \\
\arraycolsep=5pt 
\hspace{-2 mm}\Phi_1 = 
\left[
\begin{array}{ccc}
\frac{1}{3} & 1 & 1 \\
\frac{1}{4} & 0 & -\frac{1}{10} \\
\end{array}
\right], 
\end{array}
\hspace{-0.8em}
\begin{array}{c}
\arraycolsep=2pt 
\begin{array}{ccc}
\hspace{2.0em} \scriptstyle{\phi_{x_2, c}} & \scriptstyle{\phi_{x_2', c}} & \scriptstyle{\phi_{x_2'', c}} \\
\end{array} \\
\arraycolsep=5pt 
\hspace{-1 mm}\Phi_2 = 
\left[
\begin{array}{ccc}
\frac{1}{3} & 1 & 1 \\
\end{array}
\right], 
\end{array}
\hspace{-0.8em}
\begin{array}{c}
\arraycolsep=2pt 
\begin{array}{ccc}
\hspace{2.0em} \scriptstyle{\phi_{x_3, c}} & \scriptstyle{\phi_{x_3', c}} & \scriptstyle{\phi_{x_3'', c}} \\
\end{array} \\
\arraycolsep=5pt 
\hspace{-1 mm}\Phi_3 = 
\left[
\begin{array}{ccc}
\frac{1}{4} & 0 & -\frac{1}{10} \\
\end{array}
\right].
\end{array}
}
\end{align*}
Using the index set $\I$ and appending zero elements, we map the reward parameter $\theta_i^{\star}$ and feature vector sets $\Phi_i$ to a common reward parameter $\ths$ and heterogeneous feature vector sets $\Phi_i^{\prime}$ as given below, where $\theta^{\star} = \theta_1^{\star} = [1 \; 2]^{\top}$. 
\begin{align*}
\scalefont{0.75}{
\begin{array}{c}
\arraycolsep=2pt 
\begin{array}{ccc}
\hspace{1.0 em} \scriptstyle{\phi_1(x_1, c)} & \scriptstyle{\phi_1(x_1', c)} & \scriptstyle{\phi_1(x_1'', c)} \\
\end{array} \\
\arraycolsep=5pt 
\hspace{-3 mm}\Phi_1^{\prime} = 
\left[
\begin{array}{ccc}
\frac{1}{3} & 1 & 1 \\
\frac{1}{4} & 0 & -\frac{1}{10} \\
\end{array}
\right], 
\end{array}
\hspace{-1.1em}
\begin{array}{c}
\arraycolsep=2pt 
\begin{array}{ccc}
\hspace{1.3 em} \scriptstyle{\phi_2(x_2, c)} & \scriptstyle{\phi_2(x_2', c)} & \scriptstyle{\phi_2(x_2'', c)} \\
\end{array} \\
\arraycolsep=5pt 
\hspace{-5 mm}\Phi_2^{\prime} = 
\left[
\begin{array}{ccc}
\frac{1}{3} & 1 & 1 \\
0 & 0 & 0 \\
\end{array}
\right], 
\end{array}
\hspace{-2em}
\begin{array}{c}
\arraycolsep=2pt 
\begin{array}{ccc}
\hspace{1.0 em} \scriptstyle{\phi_3(x_3, c)} & \scriptstyle{\phi_3(x_3', c)} & \scriptstyle{\phi_3(x_3'', c)} \\
\end{array} \\
\arraycolsep=5pt 
\hspace{-4 mm}\Phi_3^{\prime} = 
\left[
\begin{array}{ccc}
0 & 0 & 0 \\
\frac{1}{4} & 0 & -\frac{1}{10} \\
\end{array}
\right]
\end{array}
}
\end{align*}
%
For context $c$, action $x_1^{\prime}$ is the optimal action for agent~1. Consider a scenario, such as weather prediction or stock market prediction, where agents only observe a context distribution $\mu$ and the true context $c$ is unknown. Let $c^{\prime}:=\bE_{\mu}[\C]$ and let feature vectors for $c^{\prime}$ for agent~1 is
\begin{align*}
\scalefont{0.75}{
\begin{array}{c}
\arraycolsep=2pt 
\begin{array}{ccc}
\hspace{1.8em} \scriptstyle{\phi_1(x_1, c^{\prime})} & \scriptstyle{\phi_1(x_1', c^{\prime})} & \scriptstyle{\phi_1(x_1'', c^{\prime})} \\
\end{array} \\
\arraycolsep=5pt 
\Phi_1^{''} = 
\left[
\begin{array}{ccc}
\frac{1}{4} & \frac{6}{5} & \frac{4}{5} \\
\frac{1}{3} & -\frac{1}{5} & \frac{1}{10} \\
\end{array}
\right].
\end{array}
}
\end{align*}
%
Let action $x_1$ be the baseline action, i.e., $x_1 = x_b$.  $x_1^{''}$ do not satisfy the performance constraint for context $c$. Similarly, $x_1^{\prime}$ does not meet the performance constraint for $c^{\prime}$.  Consequently, the feasible action set for $c$ is $\A_c = \{x_1, x_1^{\prime}\}$, whereas for context $c^{\prime}$ it is $\A_{c^{\prime}} = \{x_1, x_1^{''}\}$. Given that performance constraints are considered in the exact context $c$, the selection should have been made from $\A_c$ and not $\A_{c^{\prime}}$. As $x_1^{''}$ is not within the feasible action set $\A_c$, the agent will choose the baseline action $x_b$ to ensure it meets the performance constraint. Although the feasible action set $\A_c$ includes $x_1^{\prime}$, it is not present in $\A_{c^{\prime}}$, hence limiting the agent's exploration capability and preventing the selection of the optimal action $x_1^{\prime}$. This violates Lemma~C.1 proposed in \cite{moradipari2020stage}, which claims that the optimal action is always present in the feasible action set. This demonstrates the significance of a new approach when the agent can only observe the context distribution. 

\vspace{-3 mm}
\bibliographystyle{myIEEEtran}
\bibliography{Bandits}

\newpage
\begin{figure*}[t]
\centering
\subcaptionbox{\footnotesize $M=1, \alpha=0.5$ \label{fig:11}}{\includegraphics[scale=0.2]{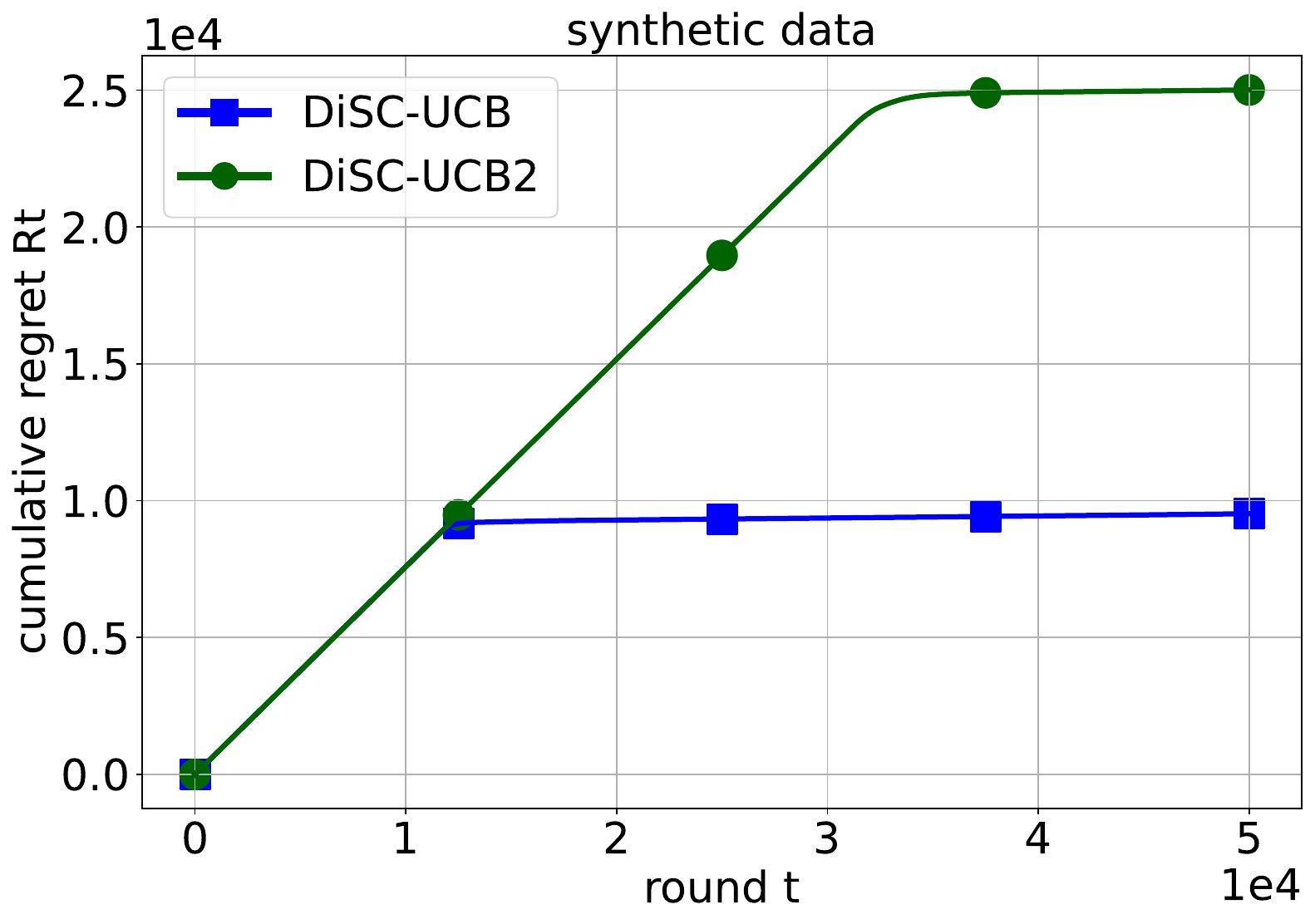}}
\subcaptionbox{\footnotesize $M=1, \alpha=0.5$ \label{fig:9}}{\includegraphics[scale=0.2]{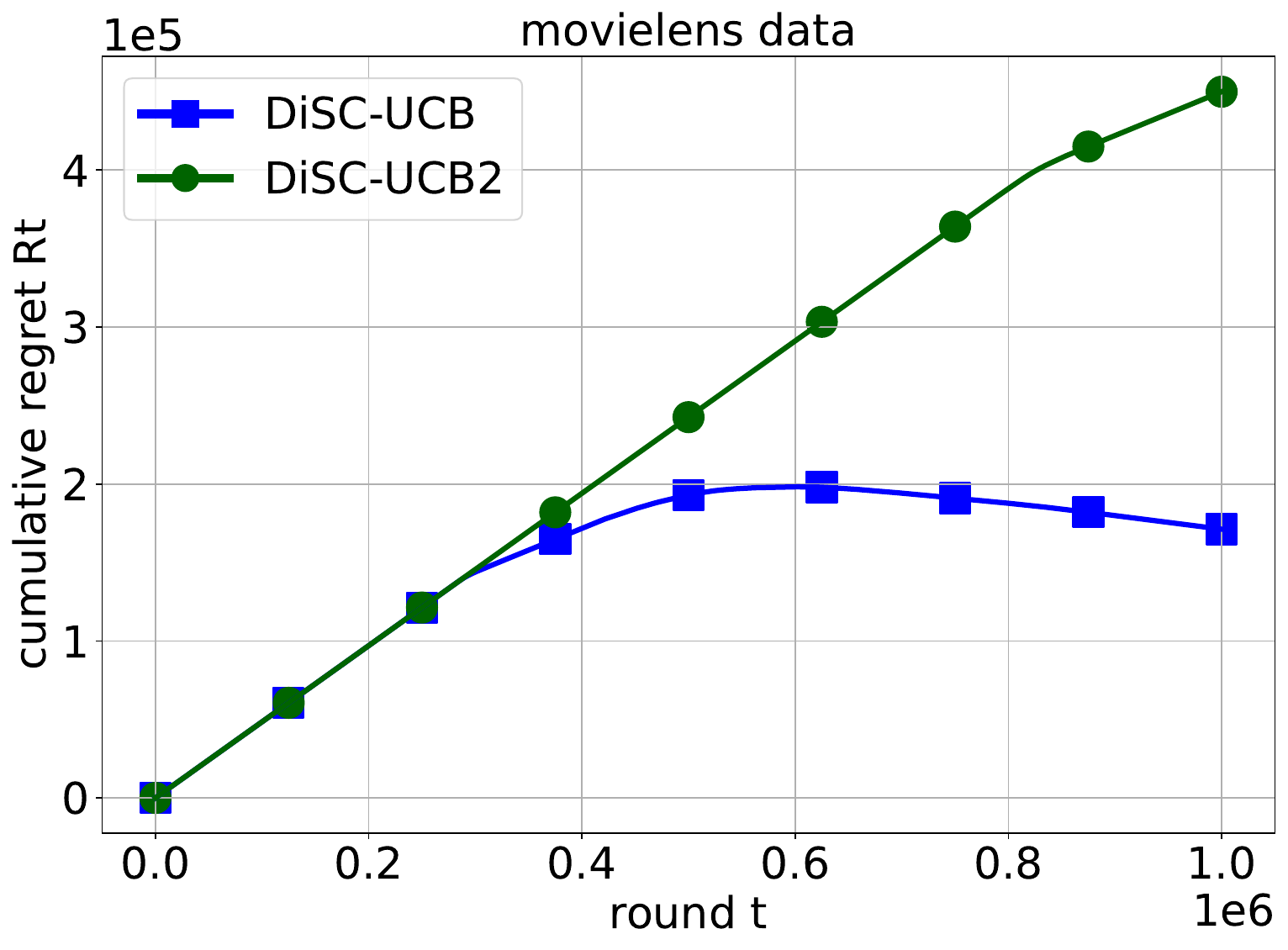}}
\subcaptionbox{\footnotesize $M=3, \alpha=0.5$ \label{compare_noise}}{\includegraphics[scale=0.2]{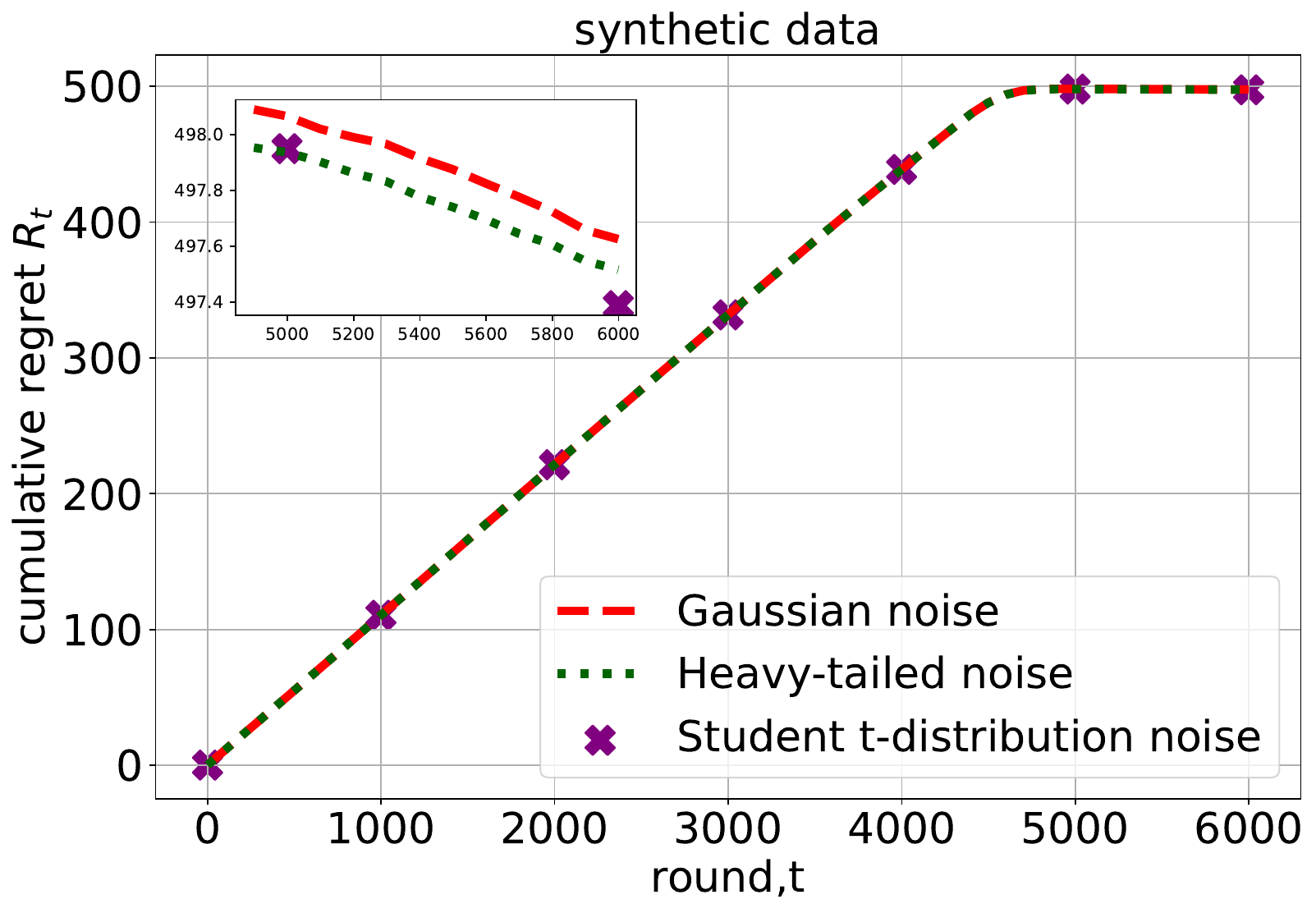}}
\vspace{-2 mm}
\caption{\small {\bf Figures~\ref{fig:11} and~\ref{fig:9}:} Comparison of cumulative regret of
DiSC-UCB with DiSC-UCB-UB for synthetic and movielens datasets. Synthetic data: In Figs.~\ref{fig:11}, we set $R=0.1$, $K=90$, $M=1$, $\ths = [1, 1]$, noise variance $=10^{-2}$, and  baseline is the 80$^{\rm th}$ best action. Movielens data: In Figs.~\ref{fig:9}, $R=0.1$, $K=50$, noise variance$=10^{-2}$, $\ths = \frac{1}{\sqrt{3}}[1, 0, 0, 0, 1, 0, 0, 0, 1]$, and the baseline action is set as the 45$^{\rm th}$ best action. 
{\bf Figures~\ref{compare_noise}:} The plot shows the cumulative regret vs. round plot for the three different noise models given in Section~\ref{sec:noise_comp}.}\label{fig:main3}
\vspace{-4 mm}
\end{figure*}
\section{Preliminaries}\label{app:pre}
\begin{prop} \label{prop:Azu}
({\bf Azuma-Hoeffdings Inequality}) Let $ M_{j} $ be a martingale on a filtration $ \F_{j} $ with almost surely bounded increments $ \left|M_{j}-M_{j-1}\right|<Q $. Then
\[
\mathbb{P}\left[M_{n}-M_{0}>b\right] \leqslant \exp \left(-\frac{b^{2}}{2 n Q^{2}}\right).
\]
\end{prop}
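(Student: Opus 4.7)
The plan is to prove the classical Azuma-Hoeffding inequality via the Chernoff/exponential Markov bound combined with iterated conditioning of the moment generating function of the martingale differences. First I would write $M_n - M_0 = \sum_{j=1}^{n} X_j$ where $X_j := M_j - M_{j-1}$ are the martingale differences, so that $\mathbb{E}[X_j \mid \mathcal{F}_{j-1}] = 0$ and $|X_j| < Q$ almost surely. For any $\lambda > 0$, Markov's inequality applied to the monotone function $x \mapsto e^{\lambda x}$ gives
\[
\mathbb{P}[M_n - M_0 > b] \;\leqslant\; e^{-\lambda b}\,\mathbb{E}\!\left[e^{\lambda(M_n - M_0)}\right].
\]

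Next I would bound the moment generating factor one step at a time by the tower property. Conditioning on $\mathcal{F}_{n-1}$ peels off the last increment:
\[
\mathbb{E}\!\left[e^{\lambda(M_n - M_0)}\right]
= \mathbb{E}\!\left[e^{\lambda(M_{n-1}-M_0)}\,\mathbb{E}[e^{\lambda X_n}\mid \mathcal{F}_{n-1}]\right].
\]
The key technical ingredient I would invoke here is Hoeffding's lemma: for any zero-mean random variable $X$ with $|X|<Q$ almost surely,
\[
\mathbb{E}[e^{\lambda X}] \;\leqslant\; \exp\!\left(\tfrac{\lambda^{2} Q^{2}}{2}\right).
\]
Applied conditionally, this yields $\mathbb{E}[e^{\lambda X_n}\mid \mathcal{F}_{n-1}] \leqslant e^{\lambda^{2} Q^{2}/2}$ pointwise. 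Iterating this bound for $j = n, n-1, \dots, 1$ through the nested conditional expectations produces
\[
\mathbb{E}\!\left[e^{\lambda(M_n - M_0)}\right] \;\leqslant\; \exp\!\left(\tfrac{n\lambda^{2} Q^{2}}{2}\right).
\]

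Finally I would optimize the free parameter $\lambda$. Substituting back gives the bound $\exp\!\left(-\lambda b + n\lambda^{2} Q^{2}/2\right)$, which is minimized at $\lambda^{\star} = b/(nQ^{2}) > 0$, producing the advertised inequality $\exp(-b^{2}/(2nQ^{2}))$. The main obstacle is really the conditional Hoeffding lemma; the standard way to establish it is to use convexity of $e^{\lambda x}$ to dominate $X$ by the affine interpolant between its endpoints $\pm Q$, take expectations to get a closed form in $\lambda$, and then bound the resulting log-MGF by $\lambda^{2} Q^{2}/2$ via a second-order Taylor expansion. Everything else is bookkeeping: the strict bound $|X_j| < Q$ can be replaced by $|X_j| \leqslant Q$ without loss, and the statement is for the one-sided tail, so no union bound is needed.
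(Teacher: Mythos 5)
Your proof is correct: the Chernoff bound, the conditional Hoeffding lemma applied through the tower property, and the optimization at $\lambda^{\star}=b/(nQ^{2})$ together give exactly the stated constant $\exp(-b^{2}/(2nQ^{2}))$, since $|X_j|<Q$ means the increments lie in an interval of length $2Q$ and Hoeffding's lemma yields the factor $\exp(\lambda^{2}Q^{2}/2)$ per step. The paper states this proposition as a standard preliminary without proof, so there is no argument to compare against; yours is the canonical derivation and is complete.
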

\begin{prop}[Lemma~11, \cite{abbasi2011improved}] \label{LT2}
Let $ \left\{X_t\right\}_{t=1}^{\infty} $ be a sequence in $ \mathbb{R}^{d}, V $ is a $ d \times d $ positive definite matrix and define $ \bar{V}_{t}=V+\sum_{s=1}^{t} X_s X_s^{\top} $. We have that
\[
\log \left(\frac{\det\left(\bar{V}_{n}\right)}{\det(V)}\right) \leqslant \sum_{t=1}^{n}\left\|X_t\right\|^2_{\bar{V}_{t-1}^{-1}}.\]
Further, if $ \left\|X_t\right\|_{2} \leqslant L $ for all $ t $, then
\[
\sum_{t=1}^{n} \min \left\{1,\left\|X_t\right\|_{V_{t-1}^{-1}}^{2}\right\}  \leqslant  2\left(\log \det\left(\bar{V}_{n}\right)-\log \det (V)\right)\]
\[\leqslant  2\left(d \log \left(\left(\trace(V)+n L^{2}\right) / d\right)-\log \det (V)\right).
\]
\end{prop}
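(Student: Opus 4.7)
The plan is to prove the three inequalities in Proposition~\ref{LT2} in sequence. The starting point is the rank-one determinant identity, applied to $\bar V_t = \bar V_{t-1} + X_t X_t^\top$ (where $\bar V_{t-1}$ is positive definite since $V$ is): by the matrix determinant lemma,
$$\det(\bar V_t) \;=\; \det(\bar V_{t-1})\bigl(1 + X_t^\top \bar V_{t-1}^{-1} X_t\bigr) \;=\; \det(\bar V_{t-1})\bigl(1 + \|X_t\|^2_{\bar V_{t-1}^{-1}}\bigr).$$
Telescoping this from $t=1$ to $n$ and taking logs gives the exact identity $\log(\det(\bar V_n)/\det(V)) = \sum_{t=1}^{n} \log\bigl(1 + \|X_t\|^2_{\bar V_{t-1}^{-1}}\bigr)$. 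The first claim then follows immediately from the scalar inequality $\log(1+x) \leq x$ for $x \geq 0$.

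For the middle inequality I would use a second scalar fact, $\min\{1,x\} \leq 2\log(1+x)$ for all $x \geq 0$. This is checked by splitting: for $x \in [0,1]$ both sides vanish at $0$, and the derivative of $2\log(1+x)$ is $2/(1+x) \geq 1$, so $x \leq 2\log(1+x)$ on $[0,1]$; for $x > 1$, $\min\{1,x\}=1 \leq 2\log 2 \leq 2\log(1+x)$. Applying this term-by-term to the telescoping identity above yields
$$\sum_{t=1}^n \min\bigl\{1,\|X_t\|^2_{\bar V_{t-1}^{-1}}\bigr\} \;\leq\; 2\sum_{t=1}^n \log\bigl(1+\|X_t\|^2_{\bar V_{t-1}^{-1}}\bigr) \;=\; 2\bigl(\log\det(\bar V_n) - \log\det(V)\bigr).$$

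For the final inequality I would bound $\log\det(\bar V_n)$ by AM-GM (equivalently, Jensen's inequality for the concave logarithm). If $\lambda_1,\dots,\lambda_d$ are the eigenvalues of $\bar V_n$, then
$$\log\det(\bar V_n) \;=\; \sum_{i=1}^d \log\lambda_i \;\leq\; d\,\log\!\Bigl(\tfrac{1}{d}\textstyle\sum_{i=1}^d \lambda_i\Bigr) \;=\; d\,\log\!\bigl(\trace(\bar V_n)/d\bigr).$$
The trace then satisfies $\trace(\bar V_n) = \trace(V) + \sum_{t=1}^n \|X_t\|_2^2 \leq \trace(V) + n L^2$ using the assumption $\|X_t\|_2 \leq L$. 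Subtracting $\log\det(V)$ from both sides and doubling produces the stated right-hand bound, chaining onto the middle inequality.

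There is no real obstacle here: the result is the standard ``elliptical potential lemma'' plus the determinant–trace inequality, and every step is either a one-line algebraic identity (the rank-one determinant update), an elementary scalar estimate ($\log(1+x)\le x$ and $\min\{1,x\}\le 2\log(1+x)$), or AM-GM on the eigenvalues. The only point requiring a brief verification is the second scalar inequality, which is handled by the piecewise argument above; everything else is a direct telescoping computation.
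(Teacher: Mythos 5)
Your proof is correct and is exactly the standard argument for this result: the rank-one determinant update telescoped into $\log\det(\bar V_n/\det V)=\sum_t\log(1+\|X_t\|^2_{\bar V_{t-1}^{-1}})$, the scalar bounds $\log(1+x)\leqslant x$ and $\min\{1,x\}\leqslant 2\log(1+x)$, and AM--GM on the eigenvalues for the determinant--trace step. The paper itself gives no proof --- it imports the statement verbatim as Lemma~11 of the cited reference --- and that reference's proof is the one you reproduced, so there is nothing to add beyond noting that the $\|X_t\|^2_{V_{t-1}^{-1}}$ in the middle display is a typo for $\|X_t\|^2_{\bar V_{t-1}^{-1}}$, which you correctly worked with.
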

\begin{prop}[Theorem~4.1, \cite{lin2023distributed1}]\label{ECC}
Consider the unconstrained version of the DiSC-UCB algorithm, i.e., if $\alpha=1$. Then the cumulative regret with the expected feature set $\Psi_{\ti}$ and $\beta_\ti = \beta_\ti(\sqrt{1+\sigma^2}, \delta/2)$ across all iterations $T$ is bounded with probability (w.p.) at least $1-M\delta$ by
\begin{align*}
\R_T &\leqslant 4 \beta_T \sqrt{M|N_T|d \log (MT)} + 4\beta_T \sqrt{MTd \log MT} \log(MT)\\
&+ \sqrt{2 M |N_T| \log \frac{2}{\delta}}.
\end{align*}
\end{prop}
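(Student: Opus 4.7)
The plan is to prove Proposition~\ref{ECC} by decomposing the cumulative regret into three conceptually distinct pieces, one per summand in the claimed bound, and then bounding each separately. For a fixed agent $i$ and round $t \in N_T$, I would write the instantaneous regret $\phi_i(\xs, c_t)^{\top}\ths - \phi_i(x_{\ti},c_t)^{\top}\ths$ as the telescope
\[
\underbrace{(\phi_i(\xs,c_t) - \psi_i(\xs,\mu_t))^{\top}\ths}_{A_\ti} + \underbrace{(\psi_i(\xs,\mu_t) - \psi_i(x_{\ti},\mu_t))^{\top}\ths}_{B_\ti} + \underbrace{(\psi_i(x_{\ti},\mu_t) - \phi_i(x_{\ti},c_t))^{\top}\ths}_{C_\ti}.
\]
Here $B_\ti$ is the usual UCB-style regret measured under the expected features $\psi$, while $A_\ti$ and $C_\ti$ measure the gap between the expected features the algorithm actually uses and the realised features that determine the true reward. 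Since $\alpha=1$, the safety test is vacuous, so the agents always play $x_{\ti}^{\prime}$ and the confidence-set / pruning machinery of Lemmas~\ref{confidence}--\ref{L4} collapses to the statement $\ths\in\B_\ti$ for all $(t,i)$ with probability $1-M\delta$.

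For the context-noise terms $A_\ti$ and $C_\ti$, by construction $\psi_i(x,\mu_t) = \mathbb{E}_{c\sim\mu_t}[\phi_i(x,c_t)\mid\F_{t-1}]$, so $\{A_\ti\}$ and $\{C_\ti\}$ are bounded martingale difference sequences with $|A_\ti|,|C_\ti|\le 1$ by the normalization assumption. Applying the Azuma--Hoeffding inequality (Proposition~\ref{prop:Azu}) twice with confidence $\delta/2$ each, and union-bounding across the $M$ agents and the $|N_T|$ active rounds, yields
\[
\sum_{i=1}^M\sum_{t\in N_T}(A_\ti+C_\ti) \;\le\; \sqrt{2M|N_T|\log(2/\delta)}
\]
with the required probability; this is precisely the third summand in the claimed bound. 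One small care point is to absorb the two martingale contributions into a single square-root by rescaling $\delta$, which is exactly why Lemma~\ref{confidence} already sets $\beta_\ti=\beta_\ti(\sqrt{1+\sigma^2},\delta/2)$.

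The main term to handle is $B_\ti$. Using that $(x_{\ti}^{\prime},\tilde\theta_\ti)$ maximises $\psi_i(\cdot,\mu_t)^{\top}\theta$ on $\X_\ti\times\B_\ti$, together with $\ths\in\B_\ti$, I get the standard optimism inequality $B_\ti \le 2\beta_\ti\,\|\psi_i(x_{\ti},\mu_t)\|_{\bar V_{\ti}^{-1}}$. Summing and applying Cauchy--Schwarz gives $\sum_{i,t}B_\ti \le 2\beta_T\sqrt{M|N_T|\sum_{i,t}\|\psi_i(x_{\ti},\mu_t)\|_{\bar V_{\ti}^{-1}}^2}$, and I would like to invoke the elliptic-potential bound (Proposition~\ref{LT2}) on the inner sum. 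The obstacle here is that $\bar V_{\ti}$ is \emph{not} the global Gram matrix: between synchronization rounds, each agent uses an outdated version missing the other agents' recent observations, so the standard potential bound does not apply directly. I would handle this exactly as in \cite{wang2019distributed}: split the rounds into "good" ones where the determinant ratio $\det(\bar V_{\ti})/\det(V_{\text{last}})$ is close to $1$ (so the local $\bar V_\ti$ is a constant-factor spectral approximation of the true global Gram matrix and Proposition~\ref{LT2} gives $O(d\log(MT))$) and "bad" ones where it is far from $1$ (so the synchronization trigger $\log(\det V_\ti/\det V_{\text{last}})\cdot(t-t_{\text{last}})\ge B$ fires, forcing a sync and limiting how many such rounds can occur, contributing the extra $\log(MT)$ factor). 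Combining the good-rounds contribution $4\beta_T\sqrt{M|N_T|d\log(MT)}$ with the bad-rounds contribution $4\beta_T\sqrt{MTd\log(MT)}\log(MT)$ recovers the first two summands in the claimed bound. A final union bound over the confidence event of Lemma~\ref{confidence} and the Azuma event completes the proof at probability $1-M\delta$.
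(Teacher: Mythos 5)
The paper contains no proof of this statement: Proposition~\ref{ECC} is imported verbatim from Theorem~4.1 of \cite{lin2023distributed1} and is used as a black box in the proof of Theorem~\ref{prop1}, so there is no in-paper argument to compare yours against line by line. That said, your reconstruction is the correct and essentially canonical route to this bound, and it is consistent with how the result is set up and invoked here: the telescoping split into the UCB regret measured on the expected features $\psi$ plus two context-realization error terms; Azuma--Hoeffding on the martingale difference $A_\ti + C_\ti$ (this is exactly the extra $\delta/2$ that Lemma~\ref{confidence} reserves, and the $\sqrt{1+\sigma^2}$ inflation of $\beta_\ti$ is the companion device that treats $\xi_\ti=(\pixc-\psixc)^{\top}\ths$ as additional observation noise); and the DisLinUCB good/bad-epoch argument to rescue the elliptic-potential bound when local Gram matrices are stale between synchronizations, which is precisely where the two separate summands $4\beta_T\sqrt{M|N_T|d\log(MT)}$ and $4\beta_T\sqrt{MTd\log(MT)}\log(MT)$ come from.

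Two small points of care, neither of which is a gap. First, with increments $|A_\ti+C_\ti|\leqslant 2$, a literal application of Proposition~\ref{prop:Azu} yields $2\sqrt{2M|N_T|\log(2/\delta)}$ rather than $\sqrt{2M|N_T|\log(2/\delta)}$; the paper itself is not consistent about this constant (compare Term~2 in the statement of Theorem~\ref{prop1} with Eq.~\eqref{R_2}), so this is bookkeeping rather than substance, but you should not claim the stated constant follows directly from Proposition~\ref{prop:Azu} with $Q=2$. Second, your remark that the pruning machinery ``collapses'' at $\alpha=1$ is better read as the definition of the unconstrained algorithm than as a consequence of substituting $\alpha=1$ into DiSC-UCB: even at $\alpha=1$ the set $\X_\ti$ retains the slack $\beta_\ti/\sqrt{\lambda_{\min}(\bar{V}_{\ti})}$ and the eigenvalue gate in line~\ref{cons} persists, so the proposition should be understood as applying to the variant with $\X_\ti=\A$ and no baseline fallback, restricted to the rounds in $N_T$ --- which is exactly how it is used to bound the Agents' term in Eq.~\eqref{R_1}. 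You should also note, as you implicitly do by multiplying the MDS increments by the indicator of $t\in N_T$, that membership in $N_T$ is predictable with respect to the filtration, so the martingale structure survives the restriction.
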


\section{Additional Experiments}\label{app-exp}
\subsection{Comparison Between DiSC-UCB and DiSC-UCB-UB}
We evaluated the efficiency of our DiSC-UCB-UB algorithm with an unknown baseline reward using both synthetic and movielens datasets and compared it with the DiSC-UCB algorithm with a known baseline reward. 
In this experiment, we set the parameters for the synthetic data as $\lambda=0.1$, $d=2$, the number of contexts $|\C|=100$, and $M=1$. Feature vectors $\phi$ were randomly generated from a standard normal distribution such that the reward is in $[0, 1]$. We added noise from a normal distribution characterized by a mean of $0$ and a variance of $0.01$ to $\phi$ to construct $\psi$. 
Experiments are averaged over 50 independent trials. 
For the movielens data \cite{harper2015movielens}, we randomly chose 100 users and 50 movies and normalized their rating to be in the range $[0,1]$. We then performed a matrix factorization to extract the feature vectors. 

In Fig.~\ref{fig:11}, we present the results of DiSC-UCB-UB with DiSC-UCB for the synthetic data with $M=1$ and $\alpha=0.5$. In Fig.~\ref{fig:9}, we set $M=1$,  $\alpha=0.5$, and present the results for the movielens data. We provide the plots for both cumulative expected regret and the average cumulative regret (averaged over round $t$). In the case of DiSC-UCB-UB, the baseline reward is unknown, whereas in the case of DiSC-UCB, it is known. This difference leads to a tighter constraint for DiSC-UCB-UB, thereby requiring more iterations of initial exploration and consequently resulting in a larger regret. 
In Fig.~\ref{fig:9}, it can be observed that the cumulative regret of DiSC-UCB demonstrates a decreasing trend after $t=6 \times {10}^5$. This is because the regret is defined using the best action for $\psi$ since the contexts are unobserved and only a context distribution is available. However, the reward is a function of $\phi$, and the optimal action associated with $\phi$ always meets the performance constraint. Therefore, it may be present in the pruned action set. After enough learning rounds, the agent may choose optimal actions for $\phi$, which has a reward larger than that of $\xs$ (best action with respect to $\psi$) thereby causing negative per-step regret and a decrease in the cumulative regret. This is possible since the pruned action set has fewer actions, enabling the agent to learn not just the best action for $\psi$ but the best action for $\phi$ as $t$ increases. 
The code to reproduce experiments can be found at the link \nolinkurl{https://github.com/alwaysworkhard/Distributed-Multi-Task-Linear-Bandit}.

\subsection{Comparison with Additional Noise Models}\label{sec:noise_comp}
We conducted additional experiments with two new noise models. While noise affects decision-making, its impact is expected to be minimal due to its small magnitude and cancellation in regret computation.
We considered (i) a student's t-distribution for generating reward noise, as detailed in \cite{kang2023heavy}, (ii) a heavy-tailed noise model where noise takes the value $-\gamma$ with probability $1-\gamma^2$ and $\frac{1}{\gamma}$ with probability $\gamma^2$, as described in \cite{medina2016no}, and (iii)~a Gaussian noise with mean $0$ and variance $1$. We standardized each noise model with a mean of $0$ and a variance of $1$ for fair comparison. Our comparative analysis is illustrated in Figure~\ref{compare_noise}. We set the experiment with parameters as $M=3$, $\alpha=0.5$, and baseline action set by the $5$-th best action, averaging the results over $100$ independent trials. The impact of altering the reward noise model on cumulative regret was minimal and not significant. A detailed view of the top left corner of Figure~\ref{compare_noise} indicates a minor difference in cumulative regret, specifically $0.6$, implying that the selection of the reward noise model has no significant effect on our results. 

\section{Unknown Baseline Reward (DiSC-UCB-UB)}\label{app_unknown}
In the following lemma, we provide the safety guarantee for the conservative feature vector $\psixc = (1 - \rho) \psixbic + \rho \zeta_{\ti}$, where $\rho \in (0, \bar{\rho})$, and $\bar{\rho} = \frac{\alpha r_l}{2}$. 

\begin{lemma} \label{L3_2}
At each round $t$, given the fraction $\alpha$, for any $\rho \in (0, \bar{\rho})$, where $\bar{\rho} = \frac{\alpha r_l}{2}$, the conservative feature vector $\psixc = (1 - \rho) \psixbic + \rho \zeta_{\ti}$ is  safe. 
\end{lemma}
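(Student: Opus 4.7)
The plan is to mirror the algebraic steps of Lemma~\ref{L3} with one modification dictated by the unknown-baseline regime. First, I would write out the safety requirement explicitly: the conservative feature vector is safe provided $((1-\rho)\pixbic + \rho \zeta_{\ti})^{\top} \ths \geqslant (1 - \alpha) \rbti$. Since $\pixbic^{\top} \ths = \rbti$, this collapses to $\rho(\rbti - \zeta_{\ti}^{\top} \ths) \leqslant \alpha \rbti$, which is the same starting inequality used in the known-baseline proof.

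Next, I would bound the left-hand factor via Cauchy--Schwarz. Using $\|\zeta_{\ti}\|_2 = 1$ and $\|\ths\|_2 \leqslant 1$, we get $-\zeta_{\ti}^{\top} \ths \leqslant 1$, so $\rbti - \zeta_{\ti}^{\top} \ths \leqslant 1 + \rbti$. A sufficient condition for safety is therefore $\rho \leqslant \frac{\alpha \rbti}{1 + \rbti}$. Because the map $r \mapsto \frac{\alpha r}{1+r}$ is strictly increasing on $[0,1]$, its infimum over admissible $\rbti$ is attained at the lower endpoint.

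The one substantive departure from Lemma~\ref{L3} is that the unknown-baseline setting permits the agent to reference only $r_l$; the quantity $r_h$ that appeared in the denominator of the known-baseline bound is no longer accessible. To obtain a bound depending solely on $r_l$, the plan is to invoke Assumption~2, which forces $\phi_{x_{\ti},c_t}^{\top} \ths \in [0,1]$ and in particular $\rbti \leqslant 1$ uniformly. Hence $1 + \rbti \leqslant 2$ without any reference to $r_h$. Combining with the monotonicity above and $\rbti \geqslant r_l$, it suffices to require $\rho \leqslant \frac{\alpha r_l}{2} = \bar{\rho}$, which is the stated threshold.

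I do not anticipate a real obstacle; the argument is essentially a clean re-derivation of Lemma~\ref{L3} with the worst-case constant $2$ replacing $1 + r_h$. The only subtle point worth emphasizing in the write-up is that this replacement is forced by the algorithm's ignorance of $r_h$ and therefore yields a strictly tighter (but still nonempty whenever $\alpha r_l > 0$) admissible range for $\rho$, so the conservative-exploration mechanism on which DiSC-UCB-UB relies remains well-defined.
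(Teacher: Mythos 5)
Your proposal is correct and follows essentially the same route as the paper's proof: reduce safety to $\rho(\rbti - \zeta_{\ti}^{\top}\ths) \leqslant \alpha \rbti$, apply Cauchy--Schwarz to get the sufficient condition $\rho \leqslant \frac{\alpha \rbti}{1+\rbti}$, and then lower-bound the right-hand side using $\rbti \geqslant r_l$ and $\rbti \leqslant 1$ to arrive at $\bar{\rho} = \frac{\alpha r_l}{2}$. Your added remark explaining that the constant $2$ replaces $1+r_h$ because $r_h$ is unavailable in the unknown-baseline regime is a reasonable gloss, but the underlying argument is the same.
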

\begin{proof}
To demonstrate the safety of the conservative feature vector $\psixc = (1 - \rho) \psixbic + \rho \zeta_{\ti}$, we need to show that $((1 - \rho) \pixbic + \rho \zeta_{\ti})^{\top} \ths \geqslant (1 - \alpha) \rbti$ always holds. This can be shown by verifying the following condition: 
$$
\rbti - \rho \rbti + \rho \zeta_\ti^{\top} \ths \geqslant (1 - \alpha) \rbti
$$
which is equivalent to
$
\rho (\rbti - \zeta_\ti^{\top} \ths) \leqslant \alpha \rbti.
$
By applying Cauchy Schwarz inequality, we deduce
\begin{equation}
\rho \leqslant \frac{\alpha \rbti}{1 + \rbti} \label{range: rho_1}
\end{equation}
Consequently, by setting a lower bound for the right-hand side of Eq.~\eqref{range: rho_1} with the assumption that $r_l \leqslant \rbti \leqslant 1$, we get
$$
\rho \leqslant \frac{\alpha r_l}{2}. 
$$
Therefore, for any $\rho \leqslant \frac{\alpha r_l}{2}$, the conservative feature vector $\psixc = (1 - \rho) \psixbic + \rho \zeta_{\ti}$ is assured to be safe. 
\end{proof}

Next, we demonstrate that the optimal action $\xs$ always exists within the pruned action set with high probability when $\lambda_{\min} (\bar{V}_{\ti}) \geqslant (\frac{2 (2 - \alpha) \beta_{\ti}}{\alpha r_l})^2$. The proof follows a similar approach as in Lemma C.1 in \cite{moradipari2020stage}. The key difference in the approach is that in \cite{moradipari2020stage}, the contexts are known. However, in our setting, the contexts are unknown.

\begin{lemma} \label{L4_2}
Let $\lambda_{\min} (\bar{V}_{\ti}) \geqslant (\frac{2 (2 - \alpha) \beta_{\ti}}{\alpha r_l})^2$. Then, with probability $1 - M \delta$, the optimal action $\xs$ lies in the pruned action set $\Z_{\ti}$ for all $M$ agent, i.e., $\xs \in \Z_{\ti}$. 
\end{lemma}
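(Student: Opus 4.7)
The plan is to mirror the structure of Lemma~\ref{L4}, adapting every step to accommodate the fact that the safety constraint in DiSC-UCB-UB uses $\max_{v \in \B_{\ti}} \psixbic^{\top} v$ as a high-confidence upper proxy for the unknown baseline reward $\rbti$. First I would condition on the high-probability event from Lemma~\ref{confidence} that $\ths \in \B_{\ti}$ for every agent and every round, which is precisely what consumes the $M\delta$ probability budget stated in the lemma. Under this event, membership of $\xs$ in $\Z_{\ti}$ amounts to verifying
\[
\psixsc^{\top} \hth_{\ti} \;\geqslant\; \frac{\beta_{\ti}}{\sqrt{\lambda_{\min}(\bar{V}_{\ti})}} + (1-\alpha)\max_{v \in \B_{\ti}} \psixbic^{\top} v.
\]

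For the left-hand side, I would decompose $\psixsc^{\top}\hth_{\ti} = \psixsc^{\top}\ths + \psixsc^{\top}(\hth_{\ti}-\ths)$ and lower-bound the error term by $-\beta_{\ti}/\sqrt{\lambda_{\min}(\bar{V}_{\ti})}$ via Lemma~\ref{L1}. For the right-hand side, the key observation is that any $v \in \B_{\ti}$ satisfies $\|v - \ths\|_{\bar{V}_{\ti}} \leqslant 2\beta_{\ti}$ by the triangle inequality together with $\ths \in \B_{\ti}$, so applying Lemma~\ref{L1} once more gives $\psixbic^{\top}(v - \ths) \leqslant 2\beta_{\ti}/\sqrt{\lambda_{\min}(\bar{V}_{\ti})}$, and hence $\max_{v \in \B_{\ti}} \psixbic^{\top} v \leqslant \rbti + 2\beta_{\ti}/\sqrt{\lambda_{\min}(\bar{V}_{\ti})}$.

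Substituting these two bounds reduces the target inequality to
\[
\psixsc^{\top}\ths - (1-\alpha)\rbti \;\geqslant\; 2(2-\alpha)\,\frac{\beta_{\ti}}{\sqrt{\lambda_{\min}(\bar{V}_{\ti})}}.
\]
Reusing the observation already exploited in the proof of Lemma~\ref{L4}, namely $\psixsc^{\top}\ths \geqslant \rbti$ because the optimal action dominates the baseline in expected reward, the left-hand side is at least $\alpha \rbti \geqslant \alpha r_l$ by Assumption~\ref{assume:bound}. The hypothesis $\lambda_{\min}(\bar{V}_{\ti}) \geqslant (2(2-\alpha)\beta_{\ti}/(\alpha r_l))^2$ then closes the inequality, and a union bound over the $M$ agents preserves the stated $1 - M\delta$ confidence.

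The only substantive difference from Lemma~\ref{L4}, and thus the main thing to keep track of, is the extra factor of two that appears when bounding $\max_{v \in \B_{\ti}} \psixbic^{\top} v$: since we maximise over the full confidence ball rather than comparing $\hth_{\ti}$ to $\ths$ on one side, we must traverse the ball diameter, which contributes $2\beta_{\ti}/\sqrt{\lambda_{\min}(\bar{V}_{\ti})}$ rather than $\beta_{\ti}/\sqrt{\lambda_{\min}(\bar{V}_{\ti})}$. This is precisely what inflates the constant from $2$ (as in Lemma~\ref{L4}) to $2(2-\alpha)$ in the eigenvalue condition, and no other component of the argument changes.
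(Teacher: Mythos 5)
Your proposal is correct and follows essentially the same route as the paper's proof: condition on $\ths \in \B_{\ti}$, lower-bound $\psixsc^{\top}\hth_{\ti}$ by $\psixsc^{\top}\ths - \beta_{\ti}/\sqrt{\lambda_{\min}(\bar{V}_{\ti})}$, upper-bound $\max_{v \in \B_{\ti}} \psixbic^{\top} v$ by $\rbti + 2\beta_{\ti}/\sqrt{\lambda_{\min}(\bar{V}_{\ti})}$, and close with $\psixsc^{\top}\ths \geqslant \rbti \geqslant r_l$ and the eigenvalue hypothesis. The only cosmetic difference is that you obtain the factor $2\beta_{\ti}$ via the triangle inequality on the confidence ball, whereas the paper splits the same quantity through $\hth_{\ti}$ in two applications of Lemma~\ref{L1}; these are equivalent.
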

\begin{proof}
We start by establishing an upper bound for $\max_{v \in \B_{\ti}} \psixbic^{\top} v$. 
\begin{align}
&\scalemath{0.95}{\max_{v \in \B_{\ti}} \psixbic^{\top} v =\max_{v \in \B_{\ti}} \psixbic^{\top} (v - \hth_{\ti}) + \psixbic^{\top} \hth_{\ti}} \nonumber \\
&\leqslant \frac{\beta_{\ti}}{\sqrt{\lambda_{\min} (\bar{V}_{\ti})}} + \psixbic^{\top} \hth_{\ti} \label{4_2_1} \\
&= \frac{\beta_{\ti}}{\sqrt{\lambda_{\min} (\bar{V}_{\ti})}} + \psixbic^{\top} (\hth_{\ti} - \ths) + \psixbic^{\top} \ths \nonumber \\
&\leqslant \frac{2 \beta_{\ti}}{\sqrt{\lambda_{\min} (\bar{V}_{\ti})}} + \psixbic^{\top} \ths \label{4_2_2} \\
&= \frac{2 \beta_{\ti}}{\sqrt{\lambda_{\min} (\bar{V}_{\ti})}} + \bE [\pixbic^{\top} \ths]=\hspace{-1 mm}\frac{2 \beta_{\ti}}{\sqrt{\lambda_{\min} (\bar{V}_{\ti})}} + \rbti.\label{4_2_3} 
\end{align}
where Eq.~\eqref{4_2_1} follows directly from Lemma~\ref{L1}. Eq.~\eqref{4_2_2} follows by applying Lemma~\ref{L1}, which requires that $\ths$ lies within the confidence set $\B_{\ti}$; this is ensured by Lemma~\ref{confidence}, which guarantees that  $\ths \in \B_{\ti}$ with probability $1 - M \delta$. Eq.~\eqref{4_2_3} is derived from $\psixbic = \bE [\pixbic]$.
To prove the optimal action $\xs$ always exists in the pruned action set with probability $1 - M \delta$ under the condition on the smallest eigenvalue of the Gram matrix, we need to show 
$$
\psixsc^{\top} \hth_{\ti} \geqslant \frac{\beta_{\ti}}{\sqrt{\lambda_{\min} (\bar{V}_{\ti})}} + (1 - \alpha) \max_{v \in \B_{\ti}} \psixbic^{\top} v.
$$
This is equivalent to demonstrating 
$
\psixsc^{\top} (\hth_{\ti} - \ths) + \psixsc^{\top} \ths \geqslant \frac{\beta_{\ti}}{\sqrt{\lambda_{\min} (\bar{V}_{\ti})}} + (1 - \alpha) \max_{v \in \B_{\ti}} \psixbic^{\top} v.
$
By using Lemma~\ref{confidence}, it can be determined that with probability $1 - M \delta$, $\ths$ exists within the confidence set $\B_{\ti}$. Subsequently, applying Lemma~\ref{L1}, we recognize that with probability $1 - M \delta$, $\psixsc^{\top} (\hth_{\ti} - \ths) \geqslant - \frac{\beta_{\ti}}{\sqrt{\lambda_{\min} (\bar{V}_{\ti})}}$. Furthermore, considering that $\psixsc^{\top} \ths - \rbti \geqslant \psixsbc^{\top} \ths - \rbti = \bE [\pixsbc^{\top} \ths - \rbti] = 0$, and our derived upper bound for $\max_{v \in \B_{\ti}} \psixbic^{\top} v \leqslant \frac{2 \beta_{\ti}}{\sqrt{\lambda_{\min} (\bar{V}_{\ti})}} + \rbti$, it suffices to demonstrate with probability $1 - M \delta$, 
$$
\alpha \rbti \geqslant \frac{2 (2 - \alpha) \beta_{\ti}}{\sqrt{\lambda_{\min} (\bar{V}_{\ti})}}. 
$$
By using the inequality $\rbti \geqslant r_l$, the sufficient condition for our result is $\lambda_{\min} (\bar{V}_{\ti}) \geqslant (\frac{2 (2 - \alpha) \beta_{\ti}}{\alpha r_l})^2$. 
Thus, the above analysis verifies that the optimal action $\xs$ is always present in the pruned action set with probability $1 - M \delta$ under the condition $\lambda_{\min} (\bar{V}_{\ti}) \geqslant (\frac{2 (2 - \alpha) \beta_{\ti}}{\alpha r_l})^2$. 
\end{proof}
\begin{lemma} \label{L2_2_}
For the unknown baseline setting, with probability $1 - M \delta$, any action chosen by the agent from the pruned action set $\Z_{\ti}$ satisfies the performance constraint if $\lambda_{\min} (\bar{V}_{\ti}) > (\frac{2 (2 - \alpha) \beta_{\ti}}{\alpha r_l})^2$. 
\end{lemma}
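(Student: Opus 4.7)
The plan is to mirror the structure of Lemma~\ref{L2} (the known-baseline analog), carefully tracking the additional slack introduced by replacing $\rbti$ with $\max_{v \in \B_{\ti}} \psixbic^{\top} v$ in the pruning rule. Concretely, I would suppose that $\bar{x} \in \Z_{\ti}$ is an unsafe action, meaning $\psi_i(\bar{x}, \mu_t)^{\top} \ths < (1-\alpha)\rbti$ (which by the tower property corresponds to $\phi_i(\bar{x}, c_t)^{\top} \ths < (1-\alpha)\rbti$ in expectation). The goal is to argue that the agent's joint optimization $\max_{(x, \theta) \in \Z_{\ti} \times \B_{\ti}} \psiixc^{\top} \theta$ never produces $\bar{x}$, so that the action actually played is safe with probability at least $1 - M\delta$.

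First, I would invoke Lemma~\ref{L4_2} to conclude that under the hypothesis $\lambda_{\min}(\bar{V}_{\ti}) > (2(2-\alpha)\beta_{\ti}/(\alpha r_l))^2$, the optimal action $\xs$ belongs to $\Z_{\ti}$ with probability at least $1 - M\delta$; this provides a safe benchmark to compare against $\bar{x}$. Using optimality of $\xs$ relative to the baseline and the fact that $\psi_i(\xs, \mu_t)^{\top} \ths = \bE_{c \sim \mu_t}[\phi_i(\xs, c)^{\top}\ths] \geqslant \rbti$, I obtain the lower bound $\max_{\theta_1 \in \B_{\ti}} \psi_i(\xs, \mu_t)^{\top} \theta_1 \geqslant \psi_i(\xs, \mu_t)^{\top} \ths \geqslant \rbti$, since $\ths \in \B_{\ti}$ by Lemma~\ref{confidence}.

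Next I would upper-bound the UCB value of $\bar{x}$. Writing
\[
\psi_i(\bar{x}, \mu_t)^{\top} \theta_2 = \psi_i(\bar{x}, \mu_t)^{\top} \ths + \psi_i(\bar{x}, \mu_t)^{\top}(\theta_2 - \hth_{\ti}) + \psi_i(\bar{x}, \mu_t)^{\top}(\hth_{\ti} - \ths),
\]
applying Lemma~\ref{L1} twice (once to each of the last two inner products, each bounded by $\beta_{\ti}/\sqrt{\lambda_{\min}(\bar{V}_{\ti})}$), and using the unsafe property $\psi_i(\bar{x}, \mu_t)^{\top}\ths < (1-\alpha)\rbti$, I would deduce
\[
\max_{\theta_2 \in \B_{\ti}} \psi_i(\bar{x}, \mu_t)^{\top} \theta_2 < (1-\alpha)\rbti + \frac{2\beta_{\ti}}{\sqrt{\lambda_{\min}(\bar{V}_{\ti})}}.
\]
Combining the two bounds, the agent strictly prefers $\xs$ over $\bar{x}$ whenever $\alpha\rbti > 2\beta_{\ti}/\sqrt{\lambda_{\min}(\bar{V}_{\ti})}$, which after substituting $\rbti \geqslant r_l$ is implied (with ample slack) by the assumed eigenvalue condition $\lambda_{\min}(\bar{V}_{\ti}) > (2(2-\alpha)\beta_{\ti}/(\alpha r_l))^2$. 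Hence no unsafe $\bar{x} \in \Z_{\ti}$ can maximize the joint objective, and the action played satisfies the performance constraint with probability at least $1 - M\delta$.

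The main obstacle is that the pruning rule in $\Z_{\ti}$ uses $(1-\alpha)\max_{v \in \B_{\ti}} \psixbic^{\top} v$ rather than the true $(1-\alpha)\rbti$; the gap between these two terms is itself of order $2\beta_{\ti}/\sqrt{\lambda_{\min}(\bar{V}_{\ti})}$ (see Eqs.~\eqref{4_2_1}--\eqref{4_2_3} in the proof of Lemma~\ref{L4_2}). This is precisely what forces the factor $(2-\alpha)$ in the eigenvalue threshold: the UCB comparison above only requires $\lambda_{\min}(\bar{V}_{\ti}) > (2\beta_{\ti}/(\alpha r_l))^2$, but the stronger condition is needed so that $\xs$ itself survives the pruning step and remains available as the dominating alternative. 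Keeping these two uses of the eigenvalue lower bound cleanly separated—one to populate $\Z_{\ti}$ with a safe benchmark, the other to ensure its UCB strictly exceeds that of any unsafe candidate—is the key bookkeeping point of the proof.
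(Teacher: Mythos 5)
Your proposal is correct and follows essentially the same argument as the paper: assume an unsafe $\bar{x}\in\Z_\ti$, lower-bound the optimistic value of a safe benchmark action by $\rbti$, upper-bound the optimistic value of $\bar{x}$ by $(1-\alpha)\rbti + 2\beta_\ti/\sqrt{\lambda_{\min}(\bar{V}_\ti)}$ via Lemma~\ref{L1}, and conclude from $\alpha r_l > 2\beta_\ti/\sqrt{\lambda_{\min}(\bar{V}_\ti)}$, which the stated eigenvalue condition implies since $2-\alpha\geqslant 1$. The only (cosmetic, arguably cleaner) difference is that you use $\xs$ together with Lemma~\ref{L4_2} as the safe benchmark guaranteed to survive pruning, whereas the paper asserts this directly for $\xsb$.
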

\begin{proof}
Let $\xb$ denote an action in the pruned action set $\Z_{\ti}$ that does not meet the performance constraint. Our goal is to show that when the agent's action is played, an action in $\Z_{\ti}$ that satisfies the performance constraint will be selected, while actions that do not meet this constraint will not be chosen. Based on the definition of $\xsb$, it can be observed that $\xsb$ is always contained within the pruned action set $\Z_{\ti}$ and meets the performance constraint. To this end, if we show
$$
\max_{\theta_1 \in \B_{\ti}} \psixsbc^{\top} \theta_1 > \max_{\theta_2 \in \B_{\ti}} \psixbc^{\top} \theta_2,
$$
it ensures that actions within the pruned action set $\Z_{\ti}$ that violate the baseline constraint are never selected. Since $\xb$ does not satisfy the baseline constraint, 
$
\pixbc^{\top} \ths < (1 - \alpha) \rbti
$
which gives
\begin{equation}
  \label{L2_2_3}
  \psixbc^{\top} \ths = \bE [\pixbc^{\top} \ths] < (1 - \alpha) \rbti
\end{equation}
Moreover, we have
\begin{align}
  \psixsbc^{\top} \ths - \rbti &= \bE [\pixsbc^{\top} \ths] - \rbti \nonumber\\
  &= \bE [\pixsbc^{\top} \ths - \rbti] \geqslant 0. \label{L2_2_1}
\end{align}
To show 
$
\max_{\theta_1 \in \B_{\ti}} \psixsbc^{\top} \theta_1 > \max_{\theta_2 \in \B_{\ti}} \psixbc^{\top} \theta_2, 
$
based on Lemma~\ref{confidence}, it is sufficient to demonstrate that with probability $1 - M \delta$, 
$
\psixsbc^{\top} \ths > \max_{\theta_2 \in \B_{\ti}} [\psixbc^{\top} \ths + \psixbc^{\top} (\theta_2 - \hth_{\ti}) + \psixbc^{\top} (\hth_{\ti} - \ths)].
$
By using Eq.~\eqref{L2_2_3}, Eq.~\eqref{L2_2_1} and Lemma~\ref{L1}, it sufficient to prove that 
$$
\rbti > (1 - \alpha) \rbti + \frac{2 \beta_{\ti}}{\sqrt{\lambda_{\min} (\bar{V}_{\ti})}}
$$
Applying $\rbti \geqslant r_l$ and $\frac{2 (2 - \alpha) \beta_{\ti}}{\sqrt{\lambda_{\min} (\bar{V}_{\ti})}} \geqslant \frac{2 \beta_{\ti}}{\sqrt{\lambda_{\min} (\bar{V}_{\ti})}}$ are always hold since $\alpha \leqslant 1$, it is sufficient to show
$$
\lambda_{\min} (\bar{V}_{\ti}) > (\frac{2 (2 - \alpha) \beta_{\ti}}{\alpha r_l})^2. 
$$
Therefore,  with probability $1 - M \delta$,  any action chosen by DiSC-UCB-UB algorithm from the pruned action set $\Z_{\ti}$ satisfies the performance constraint if $\lambda_{\min} (\bar{V}_{\ti}) > (\frac{2 (2 - \alpha) \beta_{\ti}}{\alpha r_l})^2$. 
\end{proof}
Consider any round $t$ during which the agent plays the agent's action, i.e., at round $t$, both condition $F = 1$ is met and $\lambda_{\min} (\bar{V}_{\ti}) \geqslant (\frac{2 (2 - \alpha) \beta_{\ti}}{\alpha r_l})^2$ is satisfied. 
By Lemma~\ref{L4_2}, if $\lambda_{\min} (\bar{V}_{\ti}) \geqslant (\frac{2 (2 - \alpha) \beta_{\ti}}{\alpha r_l})^2$, it is guaranteed that $\xs \in \Z_{\ti}$ with high probability. Consequently, $\lambda_{\min} (\bar{V}_{\ti}) \geqslant (\frac{2 (2 - \alpha) \beta_{\ti}}{\alpha r_l})^2$ is sufficient to guarantee that $\Z_{\ti}$ is non-empty. To this end, our analysis of the unknown baseline setting will henceforth only focus on the condition $\lambda_{\min} (\bar{V}_{\ti}) \geqslant (\frac{2 (2 - \alpha) \beta_{\ti}}{\alpha r_l})^2$. 

\begin{theorem} \label{prop1_1}
The regret of DiSC-UCB-UB (unknown baseline) can be decomposed into three terms as follows
\begin{align*}
\R_T &\leqslant {\scalefont{0.9}\underbrace{4 \beta_T \sqrt{M|N_T|d \log (MT)} + 4\beta_T \sqrt{MTd \log MT} \log(MT)}_{Term~1}}\\
&{\scalefont{0.9}+ \underbrace{\sqrt{2 M |N_T| \log \frac{2}{\delta}}}_{Term~2} + \underbrace{\sum_{i = 1}^M \sum_{t \in |N_T^c|} (2 \rho + 1 - r_l)}_{Term~3}.} 
\end{align*}
\end{theorem}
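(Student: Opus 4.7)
The plan is to mirror the regret decomposition used in the proof of Theorem~\ref{prop1} for DiSC-UCB, making only one substantive modification: the per-round bound for rounds in which the conservative feature vector is played. First I would split the rounds $\{1,\ldots,T\}$ into the set $|N_T|$ of rounds in which the learner's chosen action $x_{\ti}^\prime$ is played and the set $|N_T^c|$ of rounds in which the conservative action is played, and then write
\[
\R_T = \sum_{i=1}^M \sum_{t \in |N_T|} (\pixsc - \pixc)^\top \ths + \sum_{i=1}^M \sum_{t \in |N_T^c|} (\pixsc - \pixc)^\top \ths.
\]

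For the first summation (the learner's rounds) I would invoke the unconstrained distributed analysis from Proposition~\ref{ECC} exactly as in Eq.~\eqref{R_2}: the same concentration argument that bounds cumulative prediction error of a distributed LinUCB-type estimator, combined with an Azuma--Hoeffding bound to handle the context-distribution gap $(\phi_i-\psi_i)^\top\ths$. This produces precisely Term~1 and Term~2 of the statement, since the agent-side analysis does not depend on whether $\rbti$ is known or unknown; the only objects used are $\psixc$, $\hth_{\ti}$, and the confidence set $\B_{\ti}$, all of which are constructed identically in DiSC-UCB-UB.

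For the second summation (the conservative rounds), I would substitute $\pixc = (1-\rho)\pixbic + \rho\zeta_{\ti}$ and bound each summand directly. Using $\pixsc^\top \ths \leqslant 1$ from Assumption~2, the lower bound $\pixbic^\top \ths = \rbti \geqslant r_l$ from Assumption~\ref{assume:bound}, and $|\zeta_{\ti}^\top \ths| \leqslant \|\zeta_{\ti}\|\,\|\ths\| \leqslant 1$ from Cauchy--Schwarz, each per-round term is bounded as
\[
\pixsc^\top \ths - (1-\rho)\pixbic^\top\ths - \rho\zeta_{\ti}^\top\ths \leqslant 1 - (1-\rho) r_l + \rho \leqslant 2\rho + 1 - r_l,
\]
which yields Term~3. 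This is the one place where the proof genuinely differs from Theorem~\ref{prop1}: in the known-baseline case the per-round conservative regret could be written as $\kappa_{\bti} + \rho\rbti + \rho$ with the known reference reward $\rbti$; here, since $\rbti$ is hidden, the cleanest upper bound available is the crude $1-r_l$ substitute for $\pixsc^\top\ths - \pixbic^\top\ths$.

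The main obstacle is that the argument pipeline (concentration, pruned-set safety, optimality preservation) must carry through with the \emph{new} constant $\lambda_{\min}(\bar V_{\ti}) \geqslant (\frac{2(2-\alpha)\beta_{\ti}}{\alpha r_l})^2$ that governs when the learner's action is played; I would verify that Lemma~\ref{L4_2} and Lemma~\ref{L2_2_} (already proved in this appendix) play exactly the roles that Lemma~\ref{L4} and Lemma~\ref{L2} played in Theorem~\ref{prop1}, so that $\xs \in \Z_{\ti}$ whenever $F=1$ and the safety of agent actions is preserved with probability $1-M\delta$. Once this is checked, the only remaining observation is that Proposition~\ref{ECC} applies to the feature sequence $\{\psixc\}_{t \in |N_T|}$ restricted to learner rounds, since the pruning step only removes candidate actions and does not alter the least-squares filtration; after that, assembling the three terms is routine.
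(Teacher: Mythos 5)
Your proposal is correct and follows essentially the same route as the paper: split rounds into $|N_T|$ and $|N_T^c|$, invoke the unconstrained distributed analysis (Proposition~\ref{ECC}) for the learner rounds to get Terms~1 and~2, and bound each conservative round by $2\rho + 1 - r_l$ using $\pixsc^{\top}\ths \leqslant 1$, $\rbti \geqslant r_l$, $r_l \leqslant 1$, and Cauchy--Schwarz on $\zeta_{\ti}^{\top}\ths$. Your remark that the only substantive change from Theorem~\ref{prop1} is replacing $\kappa_{\bti} + \rho\rbti + \rho$ by the cruder $1 - r_l$ substitute is exactly the point the paper makes.
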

\begin{proof}
By the definition of cumulative regret, we have
\begin{align}
\R_T &= \sum_{i = 1}^M \sum_{t = 1}^T (\pixsc^{\top} \ths - \pixc^{\top} \ths) \nonumber \\
&= \sum_{i = 1}^M \sum_{t \in |N_T|} (\pixsc^{\top} \ths - \pixc^{\top} \ths)\nonumber \\
&+ \sum_{i = 1}^M \sum_{t \in |N_T^c|} (\pixsc^{\top} \ths - \pixc^{\top} \ths) \nonumber 
\end{align}
\begin{align}
&= \sum_{i = 1}^M \sum_{t \in |N_T|} (\pixsc^{\top} \ths - \pixc^{\top} \ths) \nonumber \\
&+ \sum_{i = 1}^M \sum_{t \in |N_T^c|} (\pixsc^{\top} \ths\nonumber\\
&- (1 - \rho) \pixbic^{\top} \ths - \rho \zeta_{\ti}^{\top} \ths) \nonumber \\
&\leqslant \underbrace{\sum_{i = 1}^M \sum_{t \in |N_T|} (\pixsc^{\top} \ths - \pixc^{\top} \ths)}_{Agents'~term}\nonumber \\
&+ \sum_{i = 1}^M \sum_{t \in |N_T^c|} (1 - r_l + \rho + \rho) \label{R_1_1} \\
&\leqslant 4 \beta_T \sqrt{M|N_T|d \log (MT)} + 4\beta_T \sqrt{MTd \log MT} \log(MT)\nonumber\\
&+ \sqrt{2 M |N_T| \log \frac{2}{\delta}}+ \sum_{i = 1}^M \sum_{t \in |N_T^c|} (2 \rho + 1 - r_l) \label{R_2_1},
\end{align}
where Eq.~\eqref{R_1_1} follows from $\pixsc^{\top} \ths \leqslant 1$, $r_l \leqslant \rbti \leqslant 1$, and $ - \zeta_{\ti}^{\top} \ths \leqslant |\zeta_{\ti}^{\top} \ths| \leqslant \|\zeta_{\ti}\| \|\ths\| \leqslant 1$. Eq.~\eqref{R_2_1} is derived through the analysis of the {\em Agents' term} in Eq.~\eqref{R_1_1}. Given that $|N_T| = T - |N_T^c|$, the remaining section of our proof focuses on determining the upper bound and lower bound for $|N_T^c|$. 
\end{proof}

\begin{theorem} \label{unknown_thm}
For the DiSC-UCB-UB (unknown baseline), the upper bound and lower bound of $|N_T^c|$ are given by
\begin{align*}
\scalemath{0.95}{|N_T^c|} &\scalemath{0.95}{\geqslant \frac{4}{M} \log(\frac{d}{\delta}) - \frac{(\frac{2 (2 - \alpha) \beta_{0, i}}{\alpha \rbtaui})^2 - (\lambda + M T)}{2 M \rho (1 - \rho)}}\\
&\scalemath{0.95}{- \sqrt{\frac{16}{M^2} \log^2(\frac{d}{\delta}) - \frac{4 ((\frac{2 (2 - \alpha) \beta_{0, i}}{\alpha \rbtaui})^2 - (\lambda + M T)) \log(\frac{d}{\delta})}{M^2 \rho (1 - \rho)}}} 
\end{align*}
\begin{align*}
\scalemath{0.95}{|N_T^c|} &\scalemath{0.95}{\leqslant \sqrt{(\frac{4 \sqrt{6}}{M} \log(\frac{d}{\delta}))^2- \frac{16 ((\frac{2 (2 - \alpha) \beta_{0, i}}{\alpha \rbtaui})^2 - (\lambda + M T)) \log(\frac{d}{\delta})}{M^2 \rho (1 - \rho)}}}. 
\end{align*}
\end{theorem}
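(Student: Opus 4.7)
The plan is to mirror the proof of Theorem~\ref{L7} line by line, with the only substantive change being that the threshold on the smallest eigenvalue that triggers the agent's action is now $\lambda_{\min}(\bar{V}_{\ti}) \geqslant \big(\tfrac{2(2-\alpha)\beta_{\ti}}{\alpha r_l}\big)^2$ (from Lemmas~\ref{L4_2} and~\ref{L2_2_}), rather than $\big(\tfrac{2\beta_{\ti}}{\alpha \rbti}\big)^2$. First I would argue that Lemma~\ref{L5} carries over verbatim: its derivation only uses (i) the decomposition $\bar{V}_{\ti} = \lambda I + W_{\isyn} + W_{\inew}$, (ii) $\|\psixc\|_2 \leqslant 1$, and (iii) the conservative feature construction $\psixc = (1-\rho)\psixbic + \rho \zeta_{\ti}$, none of which depend on whether the baseline reward is known. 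By Lemma~\ref{L3_2}, the admissible range $\rho \in (0, \alpha r_l/2]$ still keeps the conservative action safe, so the same matrix-Azuma argument yields the same upper bound on $\lambda_{\min}(\bar{V}_{\ti})$.

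Next, let $\tau$ be the last round at which the agent's action is played, so that $|N_T^c| - |N_\tau^c| = T - \tau$ and in particular the feasibility condition holds at round $\tau$:
\begin{equation*}
\Big(\tfrac{2(2-\alpha)\beta_{\taui}}{\alpha r_l}\Big)^2 \leqslant \lambda_{\min}(\bar{V}_{\taui}).
\end{equation*}
Chaining this with the Lemma~\ref{L5} bound at round $\tau$ and extending $M\tau \leqslant MT$, $|N_\tau^c| \leqslant |N_T^c|$ (using that $2\rho(1-\rho) \geqslant 0$ so the middle term is monotone in $|N_\tau^c|$), I obtain the counterpart of Eq.~\eqref{V_1_3}:
\begin{equation*}
2M\rho(1-\rho)|N_T^c| + \Big[\big(\tfrac{2(2-\alpha)\beta_{\taui}}{\alpha r_l}\big)^2 - (\lambda + MT)\Big] \leqslant \sqrt{32 M \rho^2(1-\rho)^2 |N_T^c| \log\tfrac{d}{\delta}}.
\end{equation*}

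I would then invoke Lemma~\ref{L6} with $a := 2M\rho(1-\rho)$, $b := 32 M \rho^2 (1-\rho)^2 \log(d/\delta)$, and $c := \big(\tfrac{2(2-\alpha)\beta_{\taui}}{\alpha r_l}\big)^2 - (\lambda + MT)$, producing the two-sided envelope $\tfrac{b - 2ac - \sqrt{b^2 - 4abc}}{2a^2} \leqslant |N_T^c| \leqslant \tfrac{b - 2ac + \sqrt{b^2 - 4abc}}{2a^2}$. Applying the same $\sqrt{x}+\sqrt{y} \leqslant \sqrt{2x+2y}$ relaxation used in Theorem~\ref{L7} turns the upper bound into $\sqrt{\tfrac{3}{2}(b/a^2)^2 - 4bc/a^3}$. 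Finally, since $\beta_{\ti}$ is nondecreasing in $t$ and the upper expression is nonincreasing in $c$ while the lower expression is nondecreasing in $c$, replacing $\beta_{\taui}$ by the smallest-iterate value $\beta_{0,i} = \sigma\sqrt{2\log(1/\delta)} + \lambda^{1/2}$ preserves both directions; substituting also uses $r_l$ (the macro $\rbtaui$ in the statement) for the baseline term.

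The only delicate step I anticipate is justifying the two monotonicity reductions that let us pass from $\beta_{\taui}$ to $\beta_{0,i}$ and from $|N_\tau^c|$ to $|N_T^c|$ without weakening the bounds in the wrong direction; this is exactly the maneuver executed in Theorem~\ref{L7}, and it requires keeping track of the sign of $\partial/\partial c$ of both the upper- and lower-envelope expressions and checking that $c \geqslant 0$ (equivalently, that $T \leqslant \tfrac{1}{M}[(\tfrac{2(2-\alpha)\beta_{\taui}}{\alpha r_l})^2 - \lambda]$, the analogue of the range constraint stated in the remark after Theorem~\ref{L7}). Everything else is a mechanical translation of the known-baseline argument with $\beta_{\ti}$ replaced by $(2-\alpha)\beta_{\ti}$ and $\rbti$ replaced by $r_l$, so the resulting expressions match the statement of Theorem~\ref{unknown_thm} exactly.
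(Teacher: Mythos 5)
Your proposal is correct and follows exactly the route the paper takes: the paper's own proof of this theorem is a one-line remark that it repeats the argument of Theorem~\ref{L7} with the eigenvalue threshold $(\frac{2(2-\alpha)\beta_{\ti}}{\alpha r_l})^2$ in place of $(\frac{2\beta_{\ti}}{\alpha \rbti})^2$, which is precisely the substitution you carry out (reusing Lemma~\ref{L5} unchanged, invoking Lemma~\ref{L6} with the modified constant $c$, and performing the same monotonicity reductions to pass to $\beta_{0,i}$). Your write-up is in fact more detailed than the paper's, but the underlying argument is identical.
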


\begin{proof}
The proof follows the same technique as in the proof of Theorem~\ref{L7}.
\end{proof}

Now, we will combine the findings of Theorem~\ref{prop1_1} and Theorem~\ref{unknown_thm} to establish the cumulative regret for DiSC-UCB-UB algorithm. Determining the communication cost follows the same approach described in Theorem~\ref{T1}. 

\noindent{\em Proof of Theorem~\ref{Thm1_1}:}
The proof approach is  the same as that applied for Theorem~\ref{T1} in terms of both cumulative regret and communication cost. 

\end{document}